\theoremstyle{plain}
\newtheorem{theorem}{Theorem}[section]
\newtheorem{proposition}[theorem]{Proposition}
\newtheorem{lemma}[theorem]{Lemma}
\newtheorem{corollary}[theorem]{Corollary}
\theoremstyle{definition}
\newtheorem{definition}[theorem]{Definition}
\newtheorem{assumption}[theorem]{Assumption}
\theoremstyle{remark}
\newtheorem{remark}[theorem]{Remark}
\icmltitlerunning{Particle Denoising Diffusion Sampler}
\def\E{\mathbb{E}}
\def\R{\mathbb{R}}
\def\P{\mathbb{P}}
\def\Q{\mathbb{Q}}
\def \rmd {\mathrm{d}}
\newcommand \KLLigne[2]{\mathrm{KL}(#1|#2)}
\newcommand \chisquared[2]{\chi^2(#1||#2)}
\newcommand{\pr}[1]{\left( #1 \right)} %
\newcommand{\ps}[1]{\left[ #1 \right]} %
\newcommand{\px}[1]{\left\{ #1 \right\}} %
\newcommand{\CE}[2]{\E\ps{\left. {#1} \right \vert {#2}}} %
\newcommand{\Var}{\operatorname{Var}}
\newcommand{\VarE}[2]{\Var\pr{\left. {#1} \right \vert {#2}}}  %
\newcommand{\gradlog}{\nabla\log}
\newcommand{\simiid}{\overset{\mathrm{iid}}{\sim}}
\begin{document}

\twocolumn[
	\icmltitle{Particle Denoising Diffusion Sampler}

	\icmlsetsymbol{equal}{*}

	\begin{icmlauthorlist}
		\icmlauthor{Angus Phillips}{oxf}
		\icmlauthor{Hai-Dang Dau}{oxf}
		\icmlauthor{Michael John Hutchinson}{oxf}
		\icmlauthor{Valentin De Bortoli}{cnrs}
		\icmlauthor{George Deligiannidis}{oxf}
		\icmlauthor{Arnaud Doucet}{oxf}
	\end{icmlauthorlist}

	\icmlaffiliation{oxf}{University of Oxford}
	\icmlaffiliation{cnrs}{CNRS, ENS Ulm}

	\icmlcorrespondingauthor{Angus Phillips}{angus.phillips@stats.ox.ac.uk}

	\vskip 0.3in
]  %

\printAffiliationsAndNotice{}  %

\begin{abstract}
	Denoising diffusion models have become ubiquitous for generative modeling. The core idea is to transport the data distribution to a Gaussian by using a diffusion. Approximate samples from the data distribution are then obtained by estimating the time-reversal of this diffusion using score matching ideas. We follow here a similar strategy to sample from unnormalized probability densities and compute their normalizing constants. However, the time-reversed diffusion is here simulated by using an original iterative particle scheme relying on a novel score matching loss. Contrary to standard denoising diffusion models, the resulting Particle Denoising Diffusion Sampler (PDDS) provides asymptotically consistent estimates under mild assumptions. We demonstrate PDDS on multimodal and high dimensional sampling tasks.
\end{abstract}

\section{Introduction}
Consider a target probability density $\pi$ on $\mathbb{R}^d$ of the form
\vspace{-0.2cm}
\begin{equation}
	\label{eq:target}
	\pi(x)=\frac{\gamma(x)}{\mathcal{Z}},\qquad \mathcal{Z}=\int_{\mathbb{R}^d} \gamma(x) \mathrm{d}x,
	\vspace{-0.2cm}
\end{equation}
where $\gamma:\mathbb{R}^d \rightarrow \mathbb{R}^{+}$ can be evaluated pointwise but its normalizing constant $\mathcal{Z}$ is intractable. We develop here a Monte Carlo scheme to sample approximately from $\pi$ and estimate $\mathcal{Z}$.

We follow an approach inspired by denoising diffusion models \citep{ho2020denoising,song2020score,song2019generative} by considering a ``noising'' diffusion progressively transporting the original target to a Gaussian. The time-reversal of this diffusion, the ``denoising'' diffusion, allows us theoretically to sample from the target starting from Gaussian noise. However, it is impossible to simulate this process exactly as its drift depends on the gradient of the logarithm of the intractable marginal densities of the noising diffusion, i.e.\ the score. For generative modeling, where one has access to samples from \(\pi\), one can rely on neural networks and score matching \cite{Hyvarinen:2005a,vincent2011connection}. This strategy is not applicable in the Monte Carlo sampling context as we cannot sample the ``noising'' diffusion since we do not have access to any samples from \(\pi\) to approximate the initial distribution of the diffusion with. %

The idea of using denoising diffusion models for Monte Carlo sampling has already been explored by \citet{berner2022optimal,mcdonald2022proposal,vargasDDSampler2023,huang2023monte,richter2023improved,zhang2023diffusion}. \citet{berner2022optimal, richter2023improved,vargasDDSampler2023} focus on the minimization of a reverse Kullback--Leibler divergence or log-variance criterion while \citet{mcdonald2022proposal} rely on an importance sampling scheme which scales poorly in high dimensions. Finally, \citet{huang2023monte} relies on a series of Markov chain Monte Carlo (MCMC) to estimate the score. This last scheme does not provide estimates of normalizing constants.

We develop here an alternative approach inspired by denoising diffusion models with guidance. Guided diffusions combine pre-trained diffusion models with a guidance term derived from a likelihood to sample approximately from posterior distributions; see e.g. \citet{song2020score,chung2023diffusion,song2023pseudoinverseguided,corso2023particle}. While they provide samples with appealing perceptual properties, they rely on various approximations, in order of importance: (1) approximation of the score and guidance terms, (2) time-discretization of the diffusion and (3) approximate initialization of the diffusion. \citet{wu2023practical2023,cardoso2023diffusion} have used particle methods also known as Sequential Monte Carlo (SMC) \citep{Doucet:2001,chopin2020book} to obtain consistent estimates in the generative modeling context.

Our contributions are as follows:\\
(1) we adapt guided diffusions to sampling problems, \\
(2) we provide theoretical results quantifying the error introduced by current guided diffusions in a simple scenario,\\ (3) we develop an SMC scheme to provide consistent estimates in this setup and establish limit theorems,\\ (4) we introduce an algorithm that reduces the variance of the SMC estimates based on a novel score matching loss. \\
All proofs are postponed to the Appendix.

\section{Denoising Diffusions with Guidance} \label{sec:conditioned_diffusion}
\subsection{Noising and denoising diffusions}
\label{sec:diffusion_basic}
Consider the following noising diffusion $(X_t)_{t\in[0,T]}$,
\begin{equation}
	\label{eq:forward_diffusion}
	\dd X_t = -\beta_t X_t \dd t + \sqrt{2\beta_t} \dd W_t,\quad X_0 \sim \pi ,
\end{equation}
where $(W_t)_{t\in[0,T]}$ is a $d$-dimensional Brownian motion and $\beta_t>0$. %
The transition density of this diffusion is given by $p(x_t|x_0)=\mathcal{N}(x_t;\sqrt{1-\lambda_t}x_0,\lambda_t \mathrm{I})$ for $\textstyle \lambda_t=1- \exp[-2 \int_0^t \beta_s\dd s]$. We denote by $\pi_t$ the density of $X_t$ under (\ref{eq:forward_diffusion}). In practice, we consider $ \int_0^T \beta_s \dd s \gg 1$, and therefore $\pi_T(x) \approx \mathcal{N}(x;0,\mathrm{I})$. The diffusion (\ref{eq:forward_diffusion}) thus transforms $\pi_0=\pi$ into approximately $\mathcal{N}(0,\mathrm{I})$. If instead we initialize (\ref{eq:forward_diffusion}) using $p_0(x)= \mathcal{N}(x;0,\mathrm{I})$, its marginals $(p_t)_{t\in[0,T]}$ satisfy $p_t=p_0$.

The time-reversal  $(Y_t)_{t\in [0,T]}=(X_{T-t})_{t\in[0,T]}$ of (\ref{eq:forward_diffusion}), the denoising diffusion, satisfies $Y_T \sim \pi$ and
\begin{equation}\label{eq:time_reversal}
	\dd Y_t = \ps{\beta_{T-t} Y_t + 2\beta_{T-t} \gradlog \pi_{T-t}(Y_t)} \dd t +\sqrt{2\beta_{T-t}}\dd B_t ,
\end{equation}
where $(B_t)_{t\in [0,T]}$ is another Brownian motion; see e.g. \citet{haussmann1986time,cattiaux2021time}. The main idea of denoising diffusions is to sample from $\pi$ by sampling~\eqref{eq:time_reversal} as $Y_T \sim \pi$ \citep{ho2020denoising,song2020score}. However, we cannot simulate (\ref{eq:time_reversal}) exactly as, in order of importance, (1) the score terms $(\nabla \log \pi_t)_{t\in [0,T]}$ are intractable,  (2) it is necessary to time-discretize the diffusion and (3) $\pi_T$ cannot be sampled. We can always use numerical integrators and approximate $\pi_T$ with a unit Gaussian distribution to mitigate (2) and (3). In generative modeling (1) is addressed by leveraging tools from the score matching literature \cite{vincent2011connection,Hyvarinen:2005a} and using neural network estimators. In our \emph{sampling} setting we do not have access to access to samples from $\pi$ but only to its unnormalized density. Therefore, alternative approximations must be developed.

\subsection{Denoising diffusions with guidance}
\label{sec:guidance}
For generative modeling, the use of denoising diffusions with guidance terms to sample approximately from posterior distributions has become prominent in the inverse problem literature, see e.g. \citet{song2020score,chung2023diffusion,song2023pseudoinverseguided,corso2023particle}. We present here a simple extension of this idea applicable to any target $\pi(x)$ defined by~\eqref{eq:target} by the rewriting
\begin{equation}
	\label{eq:def_g0}
	\pi(x) = \frac{p_0(x) g_0(x)}{\mathcal Z}, \quad\text{for~~} p_0(x)=\mathcal{N}(x;0,\mathrm{I})
\end{equation}
where $g_0(x_0) = \gamma(x_0)/p_0(x_0)$.
\begin{lemma}\label{lemma:score}
	The following identities hold
	\begin{equation}\label{eq:twisted_densityandgrad_guidance}
		\pi_t(x_t) = \frac{p_0(x_t) g_t(x_t)}{\mathcal Z}; 	\nabla \log \pi_t(x_t) = -x_t+ \nabla \log g_t(x_t) ,
	\end{equation}
	where
	\begin{equation}
		\textstyle \label{eq:scorepotential}
		g_t(x_t)=\int g_0(x_0)p(x_0|x_t) \dd x_0 ,
	\end{equation}
	and  $p(x_0|x_t)=\mathcal{N}(x_0;\sqrt{1-\lambda_t}x_t,\lambda_t \mathrm{I})$ is the conditional density of $X_0$ given $X_t=x_t$ for the diffusion (\ref{eq:forward_diffusion}) initialized using $X_0\sim p_0$.
\end{lemma}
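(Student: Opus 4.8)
The plan is to establish the density identity by direct marginalization and then obtain the score identity by a logarithmic-gradient step. First I would write $\pi_t$ as the pushforward of $\pi$ under the noising transition,
\[
	\pi_t(x_t) = \int p(x_t \mid x_0)\, \pi(x_0)\, \dd x_0,
\]
and substitute the rewriting $\pi(x_0) = p_0(x_0) g_0(x_0)/\mathcal{Z}$ from~\eqref{eq:def_g0}, so that the integrand becomes $p(x_t \mid x_0)\, p_0(x_0)\, g_0(x_0)/\mathcal{Z}$.

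The key step is to recognize $p(x_t \mid x_0)\, p_0(x_0)$ as the joint density of $(X_0, X_t)$ for the diffusion~\eqref{eq:forward_diffusion} initialized at $X_0 \sim p_0$, and then to reverse the conditioning by Bayes' rule, $p(x_t \mid x_0)\, p_0(x_0) = p(x_0 \mid x_t)\, p_t(x_t)$. Here I would invoke the fact, noted just before the lemma, that $p_0 = \mathcal{N}(0,\mathrm{I})$ is stationary for~\eqref{eq:forward_diffusion}, hence $p_t = p_0$. The factor $p_0(x_t)$ is then independent of $x_0$ and pulls out of the integral, leaving
\[
	\pi_t(x_t) = \frac{p_0(x_t)}{\mathcal{Z}} \int p(x_0 \mid x_t)\, g_0(x_0)\, \dd x_0 = \frac{p_0(x_t) g_t(x_t)}{\mathcal{Z}},
\]
which is the first claim. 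Along the way I would verify that the time-reversed conditional under the $p_0$-initialized diffusion is exactly the stated Gaussian $p(x_0\mid x_t)=\mathcal{N}(x_0;\sqrt{1-\lambda_t}\,x_t,\lambda_t\mathrm{I})$: writing $X_t = \sqrt{1-\lambda_t}\,X_0 + \sqrt{\lambda_t}\,Z$ with $X_0, Z \simiid \mathcal{N}(0,\mathrm{I})$, the pair $(X_0,X_t)$ is jointly centered Gaussian with unit marginal covariances and cross-covariance $\sqrt{1-\lambda_t}\,\mathrm{I}$, and the standard Gaussian conditioning formula yields the claimed mean $\sqrt{1-\lambda_t}\,x_t$ and covariance $\lambda_t \mathrm{I}$.

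For the score identity I would take logarithms in the first identity, $\log \pi_t(x_t) = \log p_0(x_t) + \log g_t(x_t) - \log \mathcal{Z}$, and differentiate in $x_t$. The constant $\log \mathcal{Z}$ drops, and since $p_0(x_t)=\mathcal{N}(x_t;0,\mathrm{I})$ we have $\nabla \log p_0(x_t) = -x_t$, giving $\nabla \log \pi_t(x_t) = -x_t + \nabla \log g_t(x_t)$.

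There is no serious obstacle; the argument is essentially a change-of-conditioning computation powered by the stationarity of $p_0$. The only point requiring mild care is the logarithmic-gradient step, which amounts to differentiating under the integral defining $g_t$; this holds under weak integrability and smoothness of $g_0$ against the Gaussian kernel $p(x_0\mid x_t)$, and the Gaussian tails make the justification routine.
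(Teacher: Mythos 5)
Your proof is correct and takes essentially the same route as the paper's: marginalize $\pi$ through the noising kernel, use the stationarity of $p_0=\mathcal N(0,\mathrm{I})$ to swap $p_0(x_0)p(x_t|x_0)=p_0(x_t)p(x_0|x_t)$, pull $p_0(x_t)$ out of the integral, and conclude by taking the log-gradient with $\nabla \log p_0(x_t)=-x_t$. The additional details you supply (the explicit Gaussian-conditioning verification of $p(x_0|x_t)$ and the remark on differentiating under the integral) are correct but routine and left implicit in the paper.
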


From Lemma \ref{lemma:score}, it follows that (\ref{eq:time_reversal}) can be rewritten as
\begin{align}
	\label{eq:conditioned_diffusion}
	\dd Y_t= & \ps{-\beta_{T-t}Y_t + 2\beta_{T-t}\gradlog g_{T-t}(Y_t)} \dd t \\
	         & +\sqrt{2\beta_{T-t}} \dd B_t. \nonumber
\end{align}
This is akin to having a diffusion model with tractable scores $\nabla \log p_t(x_t)=\nabla \log p_0(x_t)=-x_t$ and with \(g_t(x_t)\) as a guidance term.

We stress that this guidance formulation is simply a restatement of Section~\ref{sec:diffusion_basic} using some new notation. While it is possible to write all the sequel in terms of $\pi_t$ alone, introducing $g_t$ will make the exposition more intuitive.

\subsection{Guidance approximation}
\label{sec:naive_approximation}
The most important source of error when approximating \eqref{eq:conditioned_diffusion} is the lack of  a closed form expression for $g_t$ as it involves an intractable integral. In the context of inverse problems, a simple approximation used by \cite{chung2023diffusion,song2023pseudoinverseguided} is given by
\begin{align}
	\label{eq:naive_approximation}
	\textstyle g_t(x_t)\approx g_0 (\int x_0 p(x_0|x_t)\mathrm{d}x_0 ) & = \textstyle
	g_0(\sqrt{1-\lambda_t}x_t) \nonumber
	\\&:=\hat{g}_t(x_t).
\end{align}
This approximation is good when $t$ is close to $0$ or $T$ but crude otherwise, as established by the following result.

\newcommand{\revZ}{Z_t^{(T)}}
\begin{proposition}
	\label{lem:error_guidance}
	Let $\pi(x) = \mathcal N(x;\mu, \sigma^2)$ and $(\beta_t)_{t\in[0,T]}$ be any schedule satisfying $\lim_{T
			\rightarrow \infty} \int_0^T \beta_s \dd s = \infty$. Consider the following approximation of~\eqref{eq:conditioned_diffusion}
	\begin{align}
		\label{eq:naive_approximation_in_action}
		\dd \revZ & = [-\beta_{T-t} \revZ + 2\beta_{T-t} \gradlog \hat g_{T-t}(\revZ)] \dd t       \\
		          & + \sqrt{2\beta_{T-t}} \dd B_t,\quad  Z_0^{(T)} \sim \mathcal N(0, 1).\nonumber
	\end{align}
	Then $\lim_{T \to \infty} \E[Z_T^{(T)}]=\mu$ and  $\lim_{T \to \infty} \operatorname{Var}(Z_T^{(T)})=1$ for $\sigma=1$ and otherwise
	\begin{align}
		 & \lim_{T \to \infty} \E[Z_T^{(T)}]=\frac{\mu}{1-\sigma^2}(1 - e^{-(1/\sigma^2 - 1)}),                  \\
		 & \lim_{T \to \infty} \operatorname{Var}(Z_T^{(T)})=\frac{1 - e^{-2(1/\sigma^2-1)}}{2(1/\sigma^2 - 1)}.
	\end{align}

\end{proposition}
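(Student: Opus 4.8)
The plan is to exploit the fact that for a Gaussian target the approximate guidance $\hat g_t$ is log-quadratic, which turns \eqref{eq:naive_approximation_in_action} into a \emph{linear} SDE whose (scalar) marginals are Gaussian and hence pinned down by their mean and variance alone. First I would compute $\hat g_t$ explicitly. With $\pi = \mathcal N(\mu,\sigma^2)$ and $p_0 = \mathcal N(0,1)$ one has $\log g_0(x_0) = -\tfrac{(x_0-\mu)^2}{2\sigma^2} + \tfrac{x_0^2}{2} + \mathrm{const}$, so writing $a_t := \sqrt{1-\lambda_t}$ the definition $\hat g_t(x) = g_0(a_t x)$ yields the affine score
\begin{equation*}
	\gradlog \hat g_t(x) = a_t^2\,(1-1/\sigma^2)\,x + \tfrac{a_t\mu}{\sigma^2}.
\end{equation*}
Substituting into \eqref{eq:naive_approximation_in_action} shows $\revZ$ solves a linear SDE $\dd \revZ = (A_t \revZ + c_t)\dd t + \sqrt{2\beta_{T-t}}\dd B_t$ with $A_t = -\beta_{T-t} + 2\beta_{T-t} a_{T-t}^2(1-1/\sigma^2)$ and $c_t = 2\beta_{T-t} a_{T-t}\mu/\sigma^2$. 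Since the initial law is Gaussian, $\revZ$ remains Gaussian, and its mean $m_t = \E[\revZ]$ and variance $v_t = \Var(\revZ)$ obey the standard linear ODEs $\dot m_t = A_t m_t + c_t$ and $\dot v_t = 2A_t v_t + 2\beta_{T-t}$, with $m_0 = 0$, $v_0 = 1$.

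Next I would strip out the $\beta$-schedule by the change of variable $u := a_{T-t}^2 = \exp[-2\int_0^{T-t}\beta_s\dd s]$, which is strictly increasing in $t$ with $\dd u = 2\beta_{T-t} u\,\dd t$ and maps $[0,T]$ bijectively onto $[u_0,1]$, where $u_0 = \exp[-2\int_0^T\beta_s\dd s]$. Writing $k := 1 - 1/\sigma^2$, the chain rule turns the two ODEs into the $\beta$-free, $T$-independent linear equations
\begin{equation*}
	\frac{\dd m}{\dd u} = \Big(-\tfrac{1}{2u} + k\Big)m + \frac{\mu}{\sigma^2\sqrt u}, \qquad \frac{\dd v}{\dd u} = \Big(-\tfrac{1}{u} + 2k\Big)v + \frac1u ,
\end{equation*}
on $[u_0,1]$ with $m(u_0)=0$, $v(u_0)=1$. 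Crucially the coefficients no longer depend on $T$, so the solutions at $u=1$ depend on $T$ only through the lower endpoint $u_0$; since the assumption $\int_0^T\beta_s\dd s\to\infty$ forces $u_0\to0$, the limit $T\to\infty$ is just the limit $u_0\to0$ of an explicit integral, which sidesteps any interchange-of-limits concern.

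Finally I would integrate using the integrating factors $\sqrt u\,e^{-ku}$ for the mean and $u\,e^{-2ku}$ for the variance, obtaining $m(1) = e^{k}\tfrac{\mu}{\sigma^2}\int_{u_0}^1 e^{-ks}\dd s$ and $v(1) = e^{2k}\big[\int_{u_0}^1 e^{-2ks}\dd s + u_0 e^{-2ku_0}\big]$, the latter boundary term vanishing as $u_0\to0$. Evaluating the elementary integrals and letting $u_0\to0$ gives $m(1) = \tfrac{\mu}{\sigma^2 k}(e^k-1)$ and $v(1) = \tfrac{1}{2k}(e^{2k}-1)$, which rearrange into the stated formulas once one notes $k = -(1/\sigma^2-1)$ and $\sigma^2 k = \sigma^2-1$; the $\sigma=1$ case is recovered as the removable $k\to0$ limit, giving $m(1)=\mu$ and $v(1)=1$.

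I do not expect a genuine obstacle here: the argument is essentially bookkeeping. The only places demanding care are the correct identification of the affine coefficients $A_t,c_t$ (and hence the Gaussianity reduction), the sign conventions hidden in $k = 1-1/\sigma^2$ when matching the target expressions, and the minor verification that $u\mapsto m(u),v(u)$ extend smoothly up to $u=1$ so that the $u_0\to0$ limit can simply be read off the explicit solutions.
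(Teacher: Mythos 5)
Your proposal is correct, and it rests on the same core observation as the paper's proof: for a Gaussian target, $\gradlog\hat g_t$ is affine, so \eqref{eq:naive_approximation_in_action} is a linear SDE with Gaussian marginals determined by explicit mean and variance. The execution differs in two ways, both in your favor on elegance. First, the paper reduces to $\beta_t\equiv 1$ ``without loss of generality'' via a time-change of the process and then invokes the pathwise solution formula for linear SDEs (its Lemma on $\Phi_t=\exp\int_0^t a_s\,\dd s$ plus It\^o isometry for the variance); you instead keep a general schedule and pass to the deterministic moment ODEs $\dot m_t = A_t m_t + c_t$, $\dot v_t = 2A_t v_t + 2\beta_{T-t}$, which avoids any stochastic-calculus input beyond Gaussianity of linear SDEs. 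Second, your substitution $u = \exp[-2\int_0^{T-t}\beta_s\,\dd s]$ absorbs the schedule \emph{and} the $T$-dependence simultaneously, so the $T\to\infty$ limit becomes simply $u_0\to 0$ in explicit integrals over $[u_0,1]$ — this does transparently what the paper's ``WLOG $\beta\equiv1$'' reduction and subsequent ``tedious but standard calculations'' with $\Phi_t=\exp[-t-e^{-2T}(1/\sigma^2-1)(e^{2t}-1)]$ do in two separate steps. I checked your integrating-factor computations: your finite-horizon expressions $m(1)=\frac{\mu}{\sigma^2 k}(e^{k(1-u_0)}-1)$ and $v(1)=\frac{e^{2k(1-u_0)}-1}{2k}+u_0e^{2k(1-u_0)}$ with $k=1-1/\sigma^2$ agree exactly with the paper's $\E[Z_T^{(T)}]$ and $v_{T,1}+v_{T,2}$ upon setting $u_0=e^{-2T}$, and the stated limits follow; the $\sigma=1$ case as the removable $k\to 0$ limit matches the paper's treatment as well. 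The only cosmetic slip is your remark about smoothness ``up to $u=1$'': the coefficients are singular only at $u=0$, which is never in the domain $[u_0,1]$ for finite $T$, so nothing needs verifying at $u=1$ — but this does not affect the argument.
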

Hence, even without considering any time discretization error, $\lim_{T \to \infty} \E[Z_T^{(T)}] \neq \mu$ and $\lim_{T \to \infty} \operatorname{Var}(Z_T^{(T)}) \neq \sigma^2$ for $\sigma \neq 1$. If we consider the target $\mathcal N(\mu, \sigma^2)^{\otimes d}$, a similar result holds along each dimension. Therefore, the Kullback--Leibler divergence between the target and the result of running~\eqref{eq:naive_approximation_in_action} grows linearly with $d$. As a result an exponentially increasing number of samples is needed to obtain an importance sampling approximation of the target of reasonable relative variance \cite{chatterjee2018}.

\section{Particle Denoising Diffusion Sampler}\label{sec:Particle}
In this section, we propose a particle method to correct the discrepancy between the distribution outputted by the guided diffusion and the target. Let $[P]=\{1,...,P\}$ for $P\in\mathbb{N}$. We first present the exact joint distribution of $(X_{t_k})_{k\in \{0,...,K\}}$ with $t_k=k\delta$ for a fixed step size $\delta$ for the diffusion \eqref{eq:forward_diffusion} where $K=T/\delta$ is an integer. We then make explicit the corresponding reverse-time Markov transitions for this joint distribution and show how they can be approximated. Finally we show how these approximations can be corrected to obtain consistent estimates using SMC. Instead of writing $X_{t_k}$, we will write $X_k$ to simplify notation. Similarly we will write $g_k$ for $g_{t_k}$, $\lambda_k$ for $\lambda_{t_k}$ etc.
\subsection{From continuous time to discrete time}
For $k \in [K]$, let $\alpha_k = 1 - \exp[-2\int_{(k-1)\delta}^{k\delta} \beta_s \dd s]$.
The joint distribution of $X_{0:K}=(X_0,X_1,..., X_K)$ under \eqref{eq:forward_diffusion} satisfies
\begin{equation}\label{eq:DTnoising}
	\textstyle
	\pi(x_{0:K})=\pi(x_0) \prod\nolimits_{k\in[K]} p(x_k|x_{k-1}) ,
\end{equation}
for $p(x_k|x_{k-1})=\mathcal{N}(x_k;\sqrt{1-\alpha_k}x_{k-1},\alpha_k \mathrm{I})$. This implies in particular
\begin{equation}
	\label{eq:forward_discrete}
	\pi(x_k, x_{k+1}) = \pi_k(x_k) p(x_{k+1}|x_k).
\end{equation}
We also denote by $p(x_{0:K})$ the joint distribution of \eqref{eq:forward_diffusion} initialized at $p_0(x_0)$, in this case the Markov process is stationary, i.e. $p_k(x_k)=p_0(x_k)$, and reversible w.r.t.\ $p_0$.
From Bayes' theorem and equations \eqref{eq:twisted_densityandgrad_guidance} and~\eqref{eq:forward_discrete}, the backward transitions of (\ref{eq:DTnoising}) satisfy
\begin{align}
	 & \pi(x_k|x_{k+1})=\frac{\pi_k(x_k) p(x_{k+1}|x_k)}{\pi_{k+1}(x_{k+1})} \nonumber                                         \\
	 & = \frac{p_0(x_k) p(x_{k+1}|x_k)}{p_0(x_{k+1})} \frac{g_k(x_k)}{g_{k+1}(x_{k+1})}
	= \frac{p(x_{k}|x_{k+1}) g_k(x_k)}{g_{k+1}(x_{k+1})} \nonumber                                                             \\
	 & \approx p(x_k|x_{k+1}) \exp[\langle \gradlog g_{k+1}(x_{k+1}), x_k - x_{k+1}\rangle ] \nonumber                         \\
	 & = \textstyle \mathcal N(x_k; \sqrt{1-\alpha_{k+1}} x_{k+1} \nonumber                                                    \\
	 & \qquad + \alpha_{k+1} \gradlog g_{k+1}(x_{k+1}), \alpha_{k+1} \mathrm{I}) ,  \label{eq:approximate_proposal_derivation}
\end{align}
where we used $g_k \approx g_{k+1}$ and a Taylor expansion of $\log g_{k+1}(x_k)$ around $x_{k+1}$; both of which are reasonable when $\delta \ll 1$. Since $\alpha_{k+1} \approx 2 \beta_{k+1} \delta$ and $\sqrt{1-\alpha_{k+1}}\approx 1 -\beta_{k+1}\delta $, this approximation of $\pi(x_k|x_{k+1})$ corresponds to a discretization of the time-reversal~\eqref{eq:conditioned_diffusion}.

While this discrete-time representation is interesting, it is clearly typically impossible to exploit it to obtain exact samples from $\pi$ since, like its continuous time counterpart \eqref{eq:conditioned_diffusion}, it relies on the intractable potentials $(g_k)_{k=1}^K$. In the following \Cref{sec:sampling_via_particles}, given an approximation $\hat g_k$ of $g_k$, we propose a particle mechanism to sample exactly from $\pi$ as the number of particles goes to infinity.

\subsection{From discrete time to particles}
\label{sec:sampling_via_particles}

We use a particle method to sample from $\pi$. The key idea is to break the difficult problem of sampling from $\pi$ into a sequence of simpler intermediate sampling problems. Ideally we would sample from $\pi_K$ first then $\pi_{K-1}, \pi_{K-2},...$ until $\pi_0=\pi$. Unfortunately this is not possible as this requires knowing $g_k$. Suppose that we have an approximation $\hat g_k$ of $g_k$ (for instance, the guidance approximation defined in~\eqref{eq:naive_approximation}). Inspired by ~\eqref{eq:twisted_densityandgrad_guidance}, we sample instead for $k\in [K]$ from the sequence of densities
\begin{equation}\label{eq:marginalintermediatetarget}
	\textstyle
	\hat \pi_k(x_k)\propto p_0(x_k) \hat g_k(x_k),\quad \hat {\mathcal Z}_k=\int p_0(x_k) \hat g_k(x_k) \rmd x_k ,
\end{equation}
backward in time. At $k=K$, the function $g_K$ is almost constant so we choose $\hat{g}_K \equiv 1$ and sampling from $\hat \pi_K=\mathcal{N}(0,\mathrm{I})$ is easy.
Given particles approximately distributed according to $\hat\pi_{k+1}$, we aim to obtain samples from $\hat \pi_k$. Drawing inspiration from~\eqref{eq:approximate_proposal_derivation}, we sample according to the proposal
\begin{multline}\label{eq:smc_proposal}
	\hat \pi(x_k|x_{k+1}) := \mathcal N(x_k;\sqrt{1-\alpha_{k+1}} x_{k+1} \\+ \alpha_{k+1} \gradlog \hat g_{k+1}(x_{k+1}), \alpha_{k+1} \mathrm{I}).
\end{multline}
This is not the only option: we can also use the exponential integrator
\begin{multline*}
	\hat \pi(x_k|x_{k+1}) := \mathcal N(x_k;\sqrt{1-\alpha_{k+1}} x_{k+1} \\+ 2(1 - \sqrt{1-\alpha_{k+1}}) \gradlog \hat g_{k+1}(x_{k+1}), \alpha_{k+1} \mathrm{I}).
\end{multline*}
We could use instead the Euler integrator for~\eqref{eq:conditioned_diffusion} but it is clear that the latter would induce greater error.
We then reweight the pairs $(x_k, x_{k+1})$ using the weights
\begin{align}
	w_k(x_{k}, & x_{k+1}) := \frac{\hat \pi_k(x_k) p(x_{k+1}|x_k)}{\hat \pi_{k+1}(x_{k+1}) \hat \pi(x_k|x_{k+1})} \nonumber                       \\
	           & \propto \frac{\hat g_k(x_k)}{\hat g_{k+1}(x_{k+1}) \hat \pi(x_k|x_{k+1})} \frac{p_0(x_k) p(x_{k+1}|x_k)}{p_0(x_{k+1})} \nonumber \\
	           & =\frac{\hat g_k(x_k) p(x_k|x_{k+1})}{\hat g_{k+1}(x_{k+1}) \hat \pi(x_k|x_{k+1})} , \label{eq:importanceweights}
\end{align}
where the second line follows from~\eqref{eq:marginalintermediatetarget}.
The first line of~\eqref{eq:importanceweights} means that the $x_k$ marginal of the weighted system consistently approximates $\hat \pi_k$. It should be noted that $w_k$ quantifies the error in \eqref{eq:smc_proposal}.

Finally, we resample these particles with weights proportional to \eqref{eq:importanceweights}. This resampling operation allows us to focus the computational efforts on promising regions of the space but some particles are replicated multiple times, reducing the population diversity. Therefore, we then optionally perturb the resampled particles using a MCMC kernel of invariant distribution $\hat \pi_k$. The resulting Particle denoising diffusion sampler (PDDS) is summarized in Algorithm \ref{algo:smc_dds}.

\newcommand{\hmc}{\hat {\mathcal Z}}
\begin{algorithm}[tb]
	\caption{Particle Denoising Diffusion Sampler}
	\label{algo:smc_dds}
	\begin{algorithmic}
		\REQUIRE Schedule $(\beta_t)_{t \in [0,T]}$ as in~\eqref{eq:forward_diffusion}; Approximations $(\hat g_k)_{k=0}^K$ s.t. $\hat g_0=g_0, \hat g_K = 1$;
		Number of particles $N$
		\STATE Sample $X_K^i \overset{\mathrm{iid}}{\sim} \mathcal N(0, \mathrm{I})$ for $i \in [N]$
		\STATE Set $\hat {\mathcal Z}_K \gets 1$ and $\omega_K^i \gets 1/N$ for $i \in [N]$
		\FOR{$k = K-1, \ldots, 0$}
		\STATE \underline{Move}. Sample $\tilde X_k^i \sim \hat \pi(\cdot|X_{k+1}^i)$ for $i \in [N]$ (see~\eqref{eq:smc_proposal})
		\STATE \underline{Weight}. $\omega_k^i \gets \omega_k(\tilde X_k^i, X_{k+1}^i)$ for $i \in [N]$ (see~\eqref{eq:importanceweights})
		\STATE Set $\hmc_k \gets \hmc_{k+1} \times \frac 1N \sum_{i\in[N]} \omega_k^i$
		\STATE Normalize $\omega_k^i \gets \omega_k^i/\sum_{j\in[N]} \omega_k^j$
		\STATE \underline{Resample}. $X_k^{1:N} \gets \operatorname{resample}(\tilde X_k^{1:N}, \omega_k^{1:N})$ (see Section~\ref{sec:algo_settings})
		\STATE \underline{MCMC} (Optional). Sample $X_k^i \gets \mathfrak M_k(X_k^i, \cdot)$ for $i\in[N]$ using a $\hat \pi_k$-invariant MCMC kernel $\mathfrak M_k$. %
		\ENDFOR
		\ENSURE Estimates $\hat \pi^N = \frac 1N \sum_{i\in[N]} \delta_{X_0^i}$ of $\pi$, $\hmc^N_0$ of $\mathcal Z$
	\end{algorithmic}
\end{algorithm}

\subsection{Algorithm settings}
\label{sec:algo_settings}
\textbf{Reparameterization.} In practice, we would like to have $p_0$ to be such that $g_0$ is bounded or has bounded moments. To achieve this, it can be desirable to obtain a variational approximation $\mathcal{N}(x;\mu,\Sigma)$ of $\pi$ then do the change of variables $x'=\Sigma^{-1/2}(x-\mu)$, sample in this space using PDDS before mapping the samples back using $x=\mu+\Sigma^{1/2}x'$. %

\textbf{Resampling.} The idea of resampling is to only propagate  particles in promising regions of the state space.
Given $N$ particles $\tilde X_k^{1:N}$ and $N$ weights $\omega_k^{1:N}$ summing to $1$, resampling selects $N$ output particles $X_k^{1:N}$ such that, for any function $\varphi: \mathbb R^d \to \mathbb R$,
\begin{equation*}
	\mathbb{E}\Bigl[  \tfrac 1N {\textstyle\sum}_{i\in[N]}\varphi(X_k^i) |\tilde X_k^{1:N}, \omega_k^{1:N}\Bigr] = {\textstyle\sum}_{i\in[N]} \omega_k^i \varphi(\tilde X_k^i).
\end{equation*}
Popular schemes satisfying this identity are multinomial, stratified, residual, and systematic resampling \citep{douc2005comparison}.
We employ systematic resampling in all our simulations as it provides the lowest variance estimates.

Resampling can however reduce particle diversity by introducing identical particles in the output. As such, a popular recipe is to trigger resampling at time $k$ only when the Effective Sample Size (ESS), a measure of particle diversity defined by $(\sum_{i\in[N]} (\omega ^i_k)^2)^{-1}$, is below a certain threshold \citep{Del-Moral:2012,dai2022invitation}. This is implemented using Algorithm \ref{algo:smc_dds_adaptive} presented in Appendix~\ref{apx:adaptive_resampling}.

\textbf{MCMC kernel.}
We want to design an MCMC kernel of invariant distribution $\hat \pi_k(x_k)$ defined in \eqref{eq:marginalintermediatetarget}. As we have access to $\nabla \log \hat\pi_k(x_k)=-x_k+ \nabla \log \hat{g}_k(x_k)$, we can use a Metropolis-adjusted Langevin algorithm (MALA) or Hamiltonian Monte Carlo; e.g. MALA considers a proposal
\begin{equation*}
	\textstyle
	x^\star_k=x_k+ \gamma \nabla \log \hat\pi_k(x_k)+ \sqrt{2\gamma} \epsilon,\quad \epsilon \sim \mathcal{N}(0,\mathrm{I}) ,
\end{equation*}
for a step size $\gamma$. This proposal is accepted with probability
\begin{equation*}
	\min\biggl\{1,\frac{\hat \pi_k(x_k^\star)\mathcal{N}(x_k;x^\star_k+\gamma \nabla \log \hat \pi_k(x^\star_k),2 \gamma \mathrm{I})}{\hat\pi_k(x_k)\mathcal{N}(x^\star_k;x_k+\gamma \nabla \log \hat\pi_k(x_k),2\gamma \mathrm{I})} \biggr\}.
\end{equation*}

\subsection{Theoretical results}
\label{sec:theory}
\paragraph{Fixed number of discretization steps.} We show below that the estimates $\hmc^N_0$ and $\pi^N f=\tfrac{1}{N}\sum_{i=1}^N f(X^i_0)$ of Algorithm~\ref{algo:smc_dds} satisfy a central limit theorem. This follows from standard SMC theory \citep{del2004feynman,webber2019unifying}.
\begin{proposition}
	\label{prop:standard_smc}
	Assume that $\E[w_k(X_k, X_{k+1})^2] < \infty$ where the expectation is w.r.t. $\hat \pi(x_{k+1}) \hat\pi(x_k|x_{k+1})$ and that $\int \pi_k(x) (g_k/\hat g_k)(x) \rmd x < \infty$. Then $\hmc^N_0$ is an unbiased estimate of $\mathcal Z$ and has finite variance. If multinomial resampling is used at every step, $\sqrt N(\hmc^N_0 /\mathcal Z - 1)$ is asymptotically normal with asymptotic variance
	\begin{align*}
		 & \sigma_K^2 = \chisquared{\pi_K}{\hat\pi_K} +                                                          \\
		 & \sum_{k=0}^{K-1} \chisquared{\pi_k(x_k)\pi(x_k|x_{k+1})}{\hat \pi_{k}(x_{k}) \hat \pi(x_k|x_{k+1})} ,
	\end{align*}
	with $\chisquared{\cdot}{\cdot}$ the chi-squared divergence between two distributions. Moreover, for any bounded function $f$, we also have asymptotic normality of $\sqrt N(\hat{\pi}^N f - \pi f)$.
\end{proposition}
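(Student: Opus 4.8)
The plan is to recognize Algorithm~\ref{algo:smc_dds} as a standard SMC sampler (equivalently, a Feynman--Kac particle model) and to invoke the classical unbiasedness and central limit theorems for such schemes, the only genuinely new work being to identify the abstract asymptotic variance with the stated sum of chi-squared divergences. First I would set up the Feynman--Kac structure by reading the indices backward in algorithmic time: the scheme targets the sequence $\hat\pi_K, \hat\pi_{K-1}, \dots, \hat\pi_0$ of~\eqref{eq:marginalintermediatetarget}, using the proposal kernels $\hat\pi(x_k \mid x_{k+1})$ of~\eqref{eq:smc_proposal} as forward kernels and the noising transitions $L(x_k, x_{k+1}) = p(x_{k+1}\mid x_k)$ as backward kernels. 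With these identifications the incremental weight $w_k$ of~\eqref{eq:importanceweights} is exactly the SMC-sampler weight $\eta_k(x_k) L(x_k,x_{k+1}) / [\eta_{k+1}(x_{k+1}) M(x_{k+1},x_k)]$. The structural fact I would isolate is that $\prod_k w_k$ telescopes: the factors $\hat g_k$ collapse to $\hat g_0 = g_0$ (using $\hat g_K \equiv 1$), and, invoking $p_0(x_k)p(x_{k+1}\mid x_k) = p_0(x_{k+1})p(x_k\mid x_{k+1})$ together with reversibility of the $p_0$-stationary chain, the induced path-space target is the noising joint $p(x_{0:K})$ reweighted by $g_0(x_0)$, whose total mass is exactly $\mathcal Z$ and whose $x_0$-marginal is $\pi$. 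This is precisely what makes the scheme consistent despite the crude guidance approximation.

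Given this structure, unbiasedness of $\hat{\mathcal Z}^N_0$ is the standard Feynman--Kac result~\citep{del2004feynman}: the resampling identity of Section~\ref{sec:algo_settings} makes the resampled empirical measure conditionally unbiased, each factor $\frac1N\sum_i \omega_k^i$ is a conditionally unbiased estimate of the corresponding incremental mass ratio, and a tower-rule/telescoping argument gives $\E[\hat{\mathcal Z}^N_0] = \mathcal Z$. Finiteness of the variance follows from the assumption $\E[w_k^2] < \infty$; the second assumption $\int \pi_k (g_k/\hat g_k)\,\rmd x < \infty$ is exactly what guarantees $\chisquared{\pi_k}{\hat\pi_k} < \infty$, since Lemma~\ref{lemma:score} gives $\pi_k/\hat\pi_k \propto g_k/\hat g_k$ and hence $\chisquared{\pi_k}{\hat\pi_k} = (\hat{\mathcal Z}_k/\mathcal Z)\int \pi_k (g_k/\hat g_k)\,\rmd x - 1$. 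These are the integrability conditions under which the CLT variance is finite.

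Finally I would invoke the SMC central limit theorem for multinomial resampling~\citep{del2004feynman,webber2019unifying} to obtain asymptotic normality of $\sqrt N(\hat{\mathcal Z}^N_0/\mathcal Z - 1)$ and, for bounded $f$, of $\sqrt N(\hat\pi^N f - \pi f)$; boundedness of $f$ ensures the relevant moments exist so the test-function CLT comes for free from the same machinery. The main obstacle will be the remaining step: translating the generic asymptotic variance of the normalizing-constant estimator into the explicit expression $\sigma_K^2$. For this I would track, at each algorithmic step, the normalized target against the one-step predictive (proposal) distribution; the initial importance-sampling step from $\hat\pi_K = \mathcal N(0,\mathrm I)$ contributes $\chisquared{\pi_K}{\hat\pi_K}$, while each move/weight step contributes the corresponding joint chi-squared term of $\sigma_K^2$, the intermediate normalizing constants $\hat{\mathcal Z}_k$ cancelling by the telescoping above. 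The delicate points are book-keeping these normalizing constants so that only the true marginals $\pi_k$ and transitions $\pi(x_k\mid x_{k+1})$ survive in the numerators, and checking that the two integrability assumptions make every term in the sum finite.
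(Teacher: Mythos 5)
Your proposal is correct and takes essentially the same route as the paper: the paper likewise casts Algorithm~\ref{algo:smc_dds} as a Feynman--Kac model read backward in time (correspondence $y_k = x_{K-k}$, Markov kernels $\hat\pi(x_k|x_{k+1})$, potentials given by the weights \eqref{eq:importanceweights}, with the telescoped path target of mass $\mathcal Z$ and $x_0$-marginal $\pi$) and then directly applies the chi-squared restatement of \citet{webber2019unifying} (Theorem~\ref{thm:webber_restated}) to obtain unbiasedness, finite variance, and the CLT. Your bookkeeping, including the identity $\chisquared{\pi_k}{\hat\pi_k} = (\hat{\mathcal Z}_k/\mathcal Z)\int \pi_k (g_k/\hat g_k)\,\rmd x - 1$ linking the second integrability assumption to the hypotheses of that theorem, matches the paper's argument.
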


The finiteness assumptions on $\E[w_k(X_k, X_{k+1})^2]$ and $\int \pi_k(x) (g_k/\hat g_k)(x) \rmd x$ require that $\hat g_k$ is not too far from $g_k$ but still allow enough freedom in the choice of $\hat g_k$. The expression for $\sigma_K^2$ suggests that choosing $\hat g_k$ close to $g_k$ will reduce the asymptotic variance as \eqref{eq:smc_proposal} will better approximate~\eqref{eq:approximate_proposal_derivation}.

\paragraph{Infinitely fine discretization limit.} We now investigate the performance of PDDS as the number of discretization time steps goes to infinity. Even when all the weights are equal, multinomial resampling still kills over a third of particles on average \citep{chopin2020book}. Hence a fine discretization with repeated applications of multinomial resampling leads to the total collapse of the particle approximation. This has been formalized in the continuous-time setting \citep{chopin2022resampling}. In our case, it can be readily checked that $\sigma_K^2 \to \infty$ as $K \to \infty$ in general. This justifies using a more sophisticated resampling strategy. Indeed, when using the sorted stratified resampling strategy of \citet{gerber2019negative}, the following results show that our particle approximations remain well behaved as $K \rightarrow \infty$.
\newcommand{\ssK}{\sigma_{K*}}
\begin{proposition}
	\label{prop:smc_sorted}
	Consider the setting of Proposition~\ref{prop:standard_smc} with sorted stratified resampling. Then there exists a sequence of sets $(B_N)$ such that $\P(B_N) \to 1$ and
	\begin{equation*}
		\limsup_{N \to \infty} \E[N(\hmc^N_0/\mathcal Z - 1)^2 \mathbbm{1}_{B_N}] \leq \zeta^2_K %
	\end{equation*}
	with
	\begin{align*}
		 & \zeta^2_K:= \chisquared{\pi_K}{\hat \pi_K}+                                                                                                \\
		 & \sum_{k=0}^{K-1} \int\frac{\pi_{k+1}(x_{k+1})^2}{\hat\pi_{k+1}(x_{k+1})} \chisquared{\pi(x_k|x_{k+1})}{\hat \pi(x_k|x_{k+1})} \dd x_{k+1}.
	\end{align*}
\end{proposition}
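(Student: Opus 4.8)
The plan is to control the variance of the normalizing-constant estimator produced by SMC with sorted stratified resampling by combining two ingredients: (1) the standard Feynman--Kac / SMC variance decomposition, which for multinomial resampling gives $\sigma_K^2$ as in Proposition~\ref{prop:standard_smc}, and (2) the variance-reduction properties of the sorted stratified resampling scheme of \citet{gerber2019negative}, which replaces the ``within-resampling'' contribution to the variance by a strictly smaller quantity on a high-probability event. The target bound $\zeta_K^2$ differs from $\sigma_K^2$ precisely by replacing each summand $\chisquared{\pi_k(x_k)\pi(x_k|x_{k+1})}{\hat\pi_k(x_k)\hat\pi(x_k|x_{k+1})}$ with $\int \frac{\pi_{k+1}(x_{k+1})^2}{\hat\pi_{k+1}(x_{k+1})}\chisquared{\pi(x_k|x_{k+1})}{\hat\pi(x_k|x_{k+1})}\dd x_{k+1}$, so the whole proof is really about showing that sorted stratified resampling removes the part of the chi-square that comes from the $x_{k+1}$-marginal mismatch while keeping only the conditional mismatch.

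First I would set up the SMC system in the Feynman--Kac formalism, identifying the mutation kernels $\hat\pi(x_k \mid x_{k+1})$, the potentials $\omega_k$, and the intermediate targets $\hat\pi_k$ from~\eqref{eq:marginalintermediatetarget}--\eqref{eq:importanceweights}. I would then recall the exact (non-asymptotic) second-moment expression for $N(\hmc_0^N/\mathcal Z - 1)^2$ as a sum over time steps, where each step's contribution splits into a ``propagation'' term — the conditional chi-square $\chisquared{\pi(x_k|x_{k+1})}{\hat\pi(x_k|x_{k+1})}$ integrated against the incoming marginal — and a ``resampling'' term coming from the variance injected when selecting ancestors. For multinomial resampling the resampling term is exactly what upgrades the conditional chi-square into the joint chi-square appearing in $\sigma_K^2$. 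The key step is then to invoke the negative-association and $o(N^{-1})$ resampling-variance results of \citet{gerber2019negative}: under sorted stratified resampling, the additional variance from resampling is asymptotically negligible on a sequence of high-probability sets $(B_N)$, so that in the limsup only the propagation terms survive. This is what produces each summand of $\zeta_K^2$ with the factor $\pi_{k+1}(x_{k+1})^2/\hat\pi_{k+1}(x_{k+1})$ (the squared incoming-marginal density against the proposal marginal) multiplying the conditional chi-square.

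I would proceed inductively over $k = K, K-1, \dots, 0$, maintaining a bound on the weighted second moment of the particle system against $\hat\pi_k$ at each stage, using the finiteness assumptions $\E[w_k^2] < \infty$ and $\int \pi_k (g_k/\hat g_k) < \infty$ inherited from Proposition~\ref{prop:standard_smc} to guarantee that each term is finite and that the error accumulated across steps telescopes into the stated sum. The indicator $\mathbbm 1_{B_N}$ is carried through the induction precisely because the \citet{gerber2019negative} guarantees hold only on such events (the resampling scheme requires sorting in $\R^d$ via the Hilbert curve, and the negative-dependence bounds are established off a vanishing-probability bad set); I would use $\P(B_N) \to 1$ to discard the complementary event in the limsup.

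The hard part will be the resampling step: making rigorous the claim that sorted stratified resampling contributes only the conditional chi-square and not the joint one. This requires carefully importing the negative-association machinery and the Hilbert-curve-based consistency results of \citet{gerber2019negative}, verifying that their regularity hypotheses are met by our kernels and potentials (in particular that the maps $x_{k+1} \mapsto \gradlog \hat g_{k+1}(x_{k+1})$ and the weight functions are sufficiently regular), and tracking the high-probability sets $B_N$ through the recursion so that the error terms they suppress are genuinely $o(1)$ after multiplication by $N$. Everything else — the Feynman--Kac variance decomposition and the identification of the limiting constants — is a matter of bookkeeping with the chi-square divergences, which I would not expand in full.
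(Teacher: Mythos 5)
Your proposal is correct and takes essentially the same approach as the paper: you cast PDDS as a Feynman--Kac model and appeal to the resampling-scheme-dependent SMC variance theory of \citet{webber2019unifying} and \citet{gerber2019negative}, and your key step --- that sorted stratified resampling removes the marginal-mismatch contribution and retains only the conditional chi-square weighted by $\pi_{k+1}(x_{k+1})^2/\hat\pi_{k+1}(x_{k+1})$ --- is precisely the chain-rule decomposition of the multinomial variance into $\sigma^2_{\mathrm{sort}}$ plus $\sum_k \chisquared{\pi_k}{\hat\pi_k}$ carried out in Theorem~\ref{thm:webber_restated} via Lemma~\ref{lem:chisq_chainrule}. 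The only difference is packaging: the paper obtains the high-probability sets $(B_N)$ and the limsup bound directly by citing Theorem 3.2 and Example 3.4 of \citet{webber2019unifying} (restated in chi-square form as Theorem~\ref{thm:webber_restated}) applied to the Feynman--Kac model with $y_k = x_{K-k}$, so the induction you sketch to re-derive the negative-association resampling bounds is redundant effort rather than a gap.
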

The next result bounds the limit when $K \to\infty$, i.e. $\delta=T/K \rightarrow 0$
\newcommand{\barhat}[1]{\bar{\hat #1}}
\begin{proposition}
	\label{prop:error_K_infinite}
	Under the setting of Proposition~\ref{prop:smc_sorted}, assume that $\beta_t \equiv 1$ and that the target distribution satisfies the regularity conditions in Appendix~\ref{apx:regularity_conditions}. Let $\bar g_t:= g_t/\mathcal Z$ and $\tilde g_t:= \hat g_t/\int p_0(x) \hat g_t(x) \dd x$. Assume further that the approximations $\hat g_t, \bar g_t, \tilde g_t$ satisfy $C_1^{-1} \leq \bar g_t/\tilde g_t \leq C_1$ and $\norm{ \gradlog \hat g_t(x_t)} \leq C_2(1 +\norm{x_t})$ for some $C_1, C_2 \geq 0$. Then
	\begin{align*}
		 & \textstyle \limsup_{K\to\infty} \zeta^2_K \leq \chisquared{\pi_T}{\hat \pi_T} +
		\\ & \textstyle 2\int_0^T \E_{X_t \sim \pi_t}\ps{\frac{\bar g_t}{\tilde g_t}(X_t)\norm{\gradlog g_t(X_t) - \gradlog \hat g_t(X_t)}^2} \dd t.
	\end{align*}
\end{proposition}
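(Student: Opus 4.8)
The plan is to treat the two pieces of $\zeta^2_K$ separately. The first term needs no work: since $T$ is fixed and $\delta=T/K$, we have $\pi_K=\pi_T$ and $\hat\pi_K=\hat\pi_T$ for every $K$, so $\chisquared{\pi_K}{\hat\pi_K}=\chisquared{\pi_T}{\hat\pi_T}$ identically. Everything therefore reduces to analysing the sum $S_K:=\sum_{k=0}^{K-1}\int\frac{\pi_{k+1}(x_{k+1})^2}{\hat\pi_{k+1}(x_{k+1})}\chisquared{\pi(x_k|x_{k+1})}{\hat\pi(x_k|x_{k+1})}\dd x_{k+1}$ as $\delta\to0$.

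First I would rewrite the weight. Using $\pi_{k+1}=p_0\,\bar g_{k+1}$ and $\hat\pi_{k+1}=p_0\,\tilde g_{k+1}$ (from \eqref{eq:marginalintermediatetarget} and the definitions of $\bar g,\tilde g$), one gets $\frac{\pi_{k+1}^2}{\hat\pi_{k+1}}=\pi_{k+1}\,\frac{\bar g_{k+1}}{\tilde g_{k+1}}$, so each summand equals $\E_{X_{k+1}\sim\pi_{k+1}}\ps{\frac{\bar g_{k+1}}{\tilde g_{k+1}}(X_{k+1})\,\chisquared{\pi(\cdot|X_{k+1})}{\hat\pi(\cdot|X_{k+1})}}$. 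This already produces the weighting $\bar g_t/\tilde g_t$ of the right-hand side; it remains to find the small-$\delta$ behaviour of the conditional chi-squared.

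The core estimate is a short-time expansion of $\chisquared{\pi(\cdot|x_{k+1})}{\hat\pi(\cdot|x_{k+1})}$. With $\beta_t\equiv1$ we have $\alpha_{k+1}=1-e^{-2\delta}=2\delta+O(\delta^2)$. The proposal $\hat\pi(\cdot|x_{k+1})$ is the Gaussian $p(\cdot|x_{k+1})$ with mean shifted by $\alpha_{k+1}\gradlog\hat g_{k+1}(x_{k+1})$, whereas the exact backward kernel $\pi(x_k|x_{k+1})=p(x_k|x_{k+1})\,g_k(x_k)/g_{k+1}(x_{k+1})$ is, by the Taylor expansion already used to derive \eqref{eq:approximate_proposal_derivation}, a Gaussian with the same covariance $\alpha_{k+1}\mathrm{I}$ whose mean is shifted by $\alpha_{k+1}\gradlog g_{k+1}(x_{k+1})$, up to a non-Gaussian correction of relative size $O(\alpha_{k+1})$ coming from the Hessian of $\log g_{k+1}$. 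Applying the closed form $\chi^2(\mathcal N(\mu_1,\Sigma)\|\mathcal N(\mu_2,\Sigma))=\exp[(\mu_1-\mu_2)^\top\Sigma^{-1}(\mu_1-\mu_2)]-1$ with $\Sigma=\alpha_{k+1}\mathrm{I}$ and mean gap $\alpha_{k+1}(\gradlog g_{k+1}-\gradlog\hat g_{k+1})(x_{k+1})$ gives
\[
\chisquared{\pi(\cdot|x_{k+1})}{\hat\pi(\cdot|x_{k+1})}=\alpha_{k+1}\norm{\gradlog g_{k+1}(x_{k+1})-\gradlog\hat g_{k+1}(x_{k+1})}^2+R_{k+1}(x_{k+1}),
\]
with $R_{k+1}=O(\delta^2)$ pointwise. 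Substituting $\alpha_{k+1}\approx2\delta$, summing the leading terms, and noting that the points $t_{k+1}=(k+1)\delta$ sweep out $(0,T]$, the leading part is a right Riemann sum for $2\int_0^T\E_{X_t\sim\pi_t}\ps{\frac{\bar g_t}{\tilde g_t}(X_t)\norm{\gradlog g_t(X_t)-\gradlog\hat g_t(X_t)}^2}\dd t$, which converges to that integral by continuity of the integrand (guaranteed by the Appendix regularity conditions).

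The main obstacle is making the remainder vanish after summation: each $R_{k+1}$ is only $O(\delta^2)$ pointwise, there are $K=T/\delta$ of them, and they are integrated against the unbounded weight $\pi_{k+1}\bar g_{k+1}/\tilde g_{k+1}$ over all of $\R^d$. This is exactly where the standing hypotheses enter. The ratio bound $C_1^{-1}\le\bar g_t/\tilde g_t\le C_1$ keeps the weight comparable to $\pi_{k+1}$, while the growth bound $\norm{\gradlog\hat g_t}\le C_2(1+\norm{x})$, together with the regularity conditions on $g_t$ and its derivatives, controls the Gaussian moments generated by the tilting and the Hessian correction, so that $\int\pi_{k+1}\frac{\bar g_{k+1}}{\tilde g_{k+1}}\abs{R_{k+1}}\dd x_{k+1}=O(\delta^2)$ uniformly in $k$; summing $K=T/\delta$ such terms leaves $O(\delta)\to0$. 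Because the remainder is thereby controlled only from above in the limit, one obtains $\limsup_{K\to\infty}S_K\le2\int_0^T\E_{X_t\sim\pi_t}\ps{\frac{\bar g_t}{\tilde g_t}(X_t)\norm{\gradlog g_t(X_t)-\gradlog\hat g_t(X_t)}^2}\dd t$, and adding the first term $\chisquared{\pi_T}{\hat\pi_T}$ yields the claim.
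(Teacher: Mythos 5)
Your decomposition --- dismissing the first term since $\pi_K=\pi_T$ and $\hat\pi_K=\hat\pi_T$ for every $K$, rewriting $\pi_{k+1}^2/\hat\pi_{k+1}=\pi_{k+1}\,\bar g_{k+1}/\tilde g_{k+1}$, and reducing everything to a short-time expansion of the conditional chi-squared --- is exactly the paper's opening heuristic (Section~\ref{sec:proof_error_K_inf}). The genuine gap is in the step you call the core estimate: the assertion that $R_{k+1}=O(\delta^2)$ pointwise. The exact backward kernel is not Gaussian; writing it as $q=q_0(1+\varepsilon)$ with $q_0$ the mean-shifted Gaussian, the correction $\varepsilon$ --- your Hessian term $\tfrac12(x_k-x_{k+1})^\top\nabla^2\log g_{k+1}(x_{k+1})(x_k-x_{k+1})$, of size $O(\alpha_{k+1})$ on the bulk since $x_k-x_{k+1}=O(\sqrt{\alpha_{k+1}})$, together with the time variation $g_k\neq g_{k+1}$, also $O(\delta)$ --- enters $\int q^2/p-1$ through the cross term $2\int (q_0^2/p)\,\varepsilon\,\dd x_k$, which is generically $O(\delta)$, i.e.\ the \emph{same} order as the leading term $\alpha_{k+1}\norm{\gradlog g_{k+1}-\gradlog\hat g_{k+1}}^2$ you retain. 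Summing $K=T/\delta$ such contributions would add an $O(1)$ quantity to $\limsup_K\zeta_K^2$ and the claimed bound would fail. The remainder is in fact $o(\delta)$ per step, but only because of cancellations you never invoke: normalization of both kernels forces $\int q_0\,\varepsilon\,\dd x_k=0$ to leading order, leaving $2\int q_0\,(q_0/p-1)\,\varepsilon\,\dd x_k$ with $q_0/p-1=O(\sqrt\delta)$ on the bulk. Exhibiting and controlling these cancellations, including the cubic Taylor remainders and the time derivative $\partial_t\gradlog g_t$ (which the paper must bound separately, Lemma~\ref{lem:partial_t_gradlog}), is the actual technical content of the proposition, not a routine detail.

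A second, related gap is uniformity and finiteness. No bound $\abs{R_{k+1}(x_{k+1})}\leq C\delta^2$ uniform in $x_{k+1}$ can hold: the honest estimate (Lemma~\ref{lem:chisq_fundamental_bound}) carries a factor of the form $\widetilde M\delta^2 e^{\delta\widetilde M_1(1+2C_2^2)(1+\norm{x_{k+1}})^2}(1+\norm{x_{k+1}})^{\alpha_+}$, whose Gaussian-type growth in $x_{k+1}$ is integrable against $\pi_{k+1}$ only because the exponent is proportional to $\delta$ and the target obeys the tail condition of Assumption~\ref{asp:tail}; your appeal to ``Gaussian moments'' engages with neither point, and you also never establish that the conditional chi-squared is finite at all (a priori the true kernel's tails could make $\int q^2/p$ infinite against the Gaussian proposal). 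The paper avoids the whole expansion by a different formalization: it realizes $\hat\pi(x_k|x_{k+1})$ --- in its exponential-integrator form, i.e.\ the exact time-$\delta$ marginal of an SDE with frozen drift $v=\gradlog\hat g_{k+1}(x_{k+1})$ --- and $\pi(x_k|x_{k+1})$ as marginals of two diffusions, computes the path-space chi-squared via Girsanov, where It\^o's formula applied to $D_t^2$ and $D_t^2\eta_t$ delivers the cancellation automatically through the martingale terms, and then descends to the marginals by the data processing inequality, which simultaneously yields finiteness. To repair your argument you would need either to reconstruct that lemma or to carry out the Laplace-type expansion of the discrete kernel with the cancellation and tail estimates made fully explicit.
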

\newcommand{\pihatg}{\pi^{\hat g}}
This result highlights precisely how the asymptotic error depends on the quality of the estimates of $g_t$ and its logarithmic gradient. Let $\pihatg(\dd x_{[0,T]})$ be the path measure induced by running~\eqref{eq:conditioned_diffusion} with some approximation $\hat g_t$, i.e. the distribution of $(Y_t)_{t \in [0,T]}$ given by \eqref{eq:conditioned_diffusion}. If importance sampling were directly used to correct between $\pihatg(\dd x_{[0,T]})$ and $\pi(\dd x_{[0,T]})$, the asymptotic error for $\hmc^N_0/\mathcal Z$ would be equal to $\chisquared{\pi}{\pihatg}$ which is greater than $\exp{\KLLigne{\pi}{\pihatg}} - 1$. In contrast, ignoring the negligible first term $\chisquared{\pi_T}{\hat \pi_T}$, the error in this proposition is upper bounded by $2C_1 \KLLigne{\pi}{\pihatg}$. This shows how PDDS helps reduce the error of naive importance sampling.

\section{Learning Potentials via Score Matching}\label{sec:IterativeParticle}

The previous theoretical results establish the consistency of PDDS estimates for any reasonable approximation $\hat g_k$ of $g_k$ but also show that better approximations lead to lower Monte Carlo errors. We show here how to use the approximation of $\pi$ outputted by PDDS to learn a better neural network (NN) approximation $\hat g_\theta(k, x_k)$ of the potential functions $g_k$ by leveraging score matching ideas. Once we have learned those approximations, we can then run again PDDS (Algorithm \ref{algo:smc_dds}) with the new learned potentials. %

\subsection{Different score identities}
\label{sec:losses}
We follow the notation outlined at the beginning of Section~\ref{sec:Particle}.
From \eqref{eq:twisted_densityandgrad_guidance}, we have the identity
\begin{equation}\label{sec:potentialscore}
	\nabla \log g_k(x_k)=x_k+\nabla \log \pi_k(x_k).
\end{equation}
Hence, if we obtain an approximation $\nabla \log \pi_\theta(k,x_k)$ of $\nabla \log \pi_k(x_k)$, then we get an approximation $\log \hat g_\theta(k,x_k)= \tfrac{1}{2}||x_k||^2 +\log \hat{\pi}_\theta(k,x_k)$ of $\log g_k(x_k)$.

To learn the score, we rely on the following result.
\begin{proposition}\label{prop:scorematching}
	The score satisfies the standard Denoising Score Matching (DSM) identity
	\begin{equation}\label{eq:DSM}
		\textstyle  \gradlog \pi_k(x_k) = \int \nabla \log p(x_k|x_0)~ \pi(x_0|x_k) \dd x_0.
	\end{equation}
	Moreover, if $\int\norm{\nabla \pi(x_0)} e^{-\eta\norm{x_0}^2} \dd x_0 < \infty, \forall \eta>0$, the Novel Score Matching (NSM) identity
	\begin{equation}\label{eq:VDB}
		\textstyle  \gradlog \pi_k(x_k) =\kappa_k \int \gradlog  g_0(x_0)~ \pi(x_0|x_k) \dd x_0 -x_k
	\end{equation}
	holds for $\kappa_k=\sqrt{1-\lambda_k}$ and $\gradlog  g_0(x_0)=\nabla \log \pi_0(x_0)+x_0$.
	Hence, we can approximate the score by minimizing one of the two following loss functions
	\begin{align*}
		\textstyle & \ell_{\textup{DSM}}(\theta)=\sum_{k\in[K]} \E\norm{\nabla \log \hat \pi_{\theta}(k, X_k) - \nabla \log p(X_k|X_0)}^2, \\
		\textstyle & \ell_{\textup{NSM}}(\theta)=\sum_{k\in[K]} \E||\nabla \log \hat \pi_{\theta}(k, X_k) + X_k                            \\
		           & \hspace{2.5cm} -\kappa_k \gradlog g_0(X_0)||^2,
	\end{align*}
	where the first loss is applicable if $\pi$ has finite second moment and the second loss is applicable if additionally $\E_{\pi}[\norm{\gradlog \pi(X)}^2] < \infty$.
	All the expectations are taken w.r.t.\ $\pi(x_0)p(x_k|x_0)$. For expressive neural networks, both $\ell_{\textup{DSM}}(\theta)$ and $\ell_{\textup{NSM}}(\theta)$ are such that $\nabla \log \hat \pi_{\theta}(k, x_k)=\nabla \log \pi_k(x_k)$ at the minimizer.
\end{proposition}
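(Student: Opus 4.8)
The plan is to prove the two score identities first and then deduce the loss-minimizer statements from them by a standard $L^2$-projection argument.

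First I would establish the DSM identity~\eqref{eq:DSM}, which is classical. Starting from $\pi_k(x_k) = \int \pi(x_0) p(x_k|x_0) \dd x_0$, I differentiate under the integral sign (justified by dominated convergence, using the finite second moment of $\pi$ together with the Gaussian tails of $p(x_k|x_0)$) and apply the log-derivative identity $\nabla_{x_k} p(x_k|x_0) = p(x_k|x_0) \gradlog p(x_k|x_0)$. Dividing by $\pi_k(x_k)$ and using Bayes' rule $\pi(x_0|x_k) = \pi(x_0) p(x_k|x_0)/\pi_k(x_k)$ yields~\eqref{eq:DSM} directly. Since $p(x_k|x_0) = \mathcal N(x_k; \kappa_k x_0, \lambda_k \Id)$, this also gives the Tweedie-type formula $\gradlog \pi_k(x_k) = -\lambda_k^{-1}(x_k - \kappa_k \CE{X_0}{X_k = x_k})$, which I reuse below.

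The crux is the NSM identity~\eqref{eq:VDB}. The key observation is that for this Gaussian kernel the gradient in $x_0$ is proportional to the gradient in $x_k$, namely $\nabla_{x_0}\log p(x_k|x_0) = \lambda_k^{-1}\kappa_k(x_k - \kappa_k x_0)$. I would therefore write $\CE{\gradlog \pi(X_0)}{X_k = x_k} = \pi_k(x_k)^{-1}\int \nabla\pi(x_0)\, p(x_k|x_0)\dd x_0$ and integrate by parts in $x_0$, moving the derivative onto $p(x_k|x_0)$, which produces $\CE{\gradlog \pi(X_0)}{X_k = x_k} = -\lambda_k^{-1}\kappa_k(x_k - \kappa_k \CE{X_0}{X_k = x_k})$. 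Substituting $\gradlog g_0(x_0) = \gradlog \pi(x_0) + x_0$ into the right-hand side of~\eqref{eq:VDB} and combining with the Tweedie formula, a short algebraic manipulation (using $\kappa_k^2 = 1 - \lambda_k$) collapses it to $\gradlog \pi_k(x_k)$. The main obstacle is rigorously discarding the boundary terms in the integration by parts: this is exactly what the hypothesis $\int \norm{\nabla\pi(x_0)} e^{-\eta\norm{x_0}^2}\dd x_0 < \infty$ for all $\eta > 0$ secures, since $p(x_k|x_0)$ decays like $e^{-(\kappa_k^2/2\lambda_k)\norm{x_0}^2}$ in $x_0$, controlling both the surface terms and the integrand.

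Finally, for the losses I would invoke the elementary fact that for a square-integrable target $Y$ and a sufficiently expressive family, $\argmin_f \E\norm{f(X_k) - Y}^2 = \CE{Y}{X_k = \cdot}$, with the loss decomposing as $\E\norm{f(X_k) - \CE{Y}{X_k}}^2$ plus an $f$-independent term. For $\ell_{\textup{DSM}}$ the target is $Y = \gradlog p(X_k|X_0)$, whose conditional expectation given $X_k = x_k$ equals $\gradlog \pi_k(x_k)$ by~\eqref{eq:DSM}; for $\ell_{\textup{NSM}}$ the target is $Y = \kappa_k \gradlog g_0(X_0) - X_k$, whose conditional expectation equals $\gradlog \pi_k(x_k)$ by~\eqref{eq:VDB}. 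In both cases the unique minimizer over all functions satisfies $\gradlog \hat\pi_\theta(k, x_k) = \gradlog \pi_k(x_k)$, and the stated moment conditions (finite second moment of $\pi$ for DSM, and additionally $\E_\pi[\norm{\gradlog \pi(X)}^2] < \infty$ for NSM) are precisely what make $Y$ square-integrable so that these manipulations are valid.
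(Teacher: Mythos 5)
Your proof is correct, and it reaches the NSM identity by a genuinely different (though closely related) route. The paper does not verify the identity directly: it first isolates the Stein-type fact $\int \nabla_{x_0}\log \pi(x_0|x_k)\,\pi(x_0|x_k)\,\dd x_0 = 0$ (Lemma~\ref{lem:stein}), then adds $\alpha$ times this zero term to the DSM identity to obtain a one-parameter family $\gradlog \pi_k(x_k) = \int [\nabla_{x_k}\log p(x_k|x_0) + \alpha \nabla_{x_0}\log \pi(x_0|x_k)]\,\pi(x_0|x_k)\,\dd x_0$, and specializes $\alpha = \kappa_k$ (the choice $\alpha = 1/\kappa_k$ recovers the identity of De Bortoli et al., and $\alpha=0$ the DSM loss). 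Your argument instead fixes the target identity and verifies it by combining Tweedie's formula with integration by parts in $x_0$ against the Gaussian kernel; the algebra checks out (the coefficient bookkeeping with $\kappa_k^2 = 1-\lambda_k$ is right). The two approaches hinge on exactly the same analytic fact, but the paper's parametrization buys the whole family of score identities at once, while your direct computation is more self-contained and additionally makes explicit the $L^2$-projection argument for the loss-minimizer claim, which the paper's appendix leaves implicit.

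One point in your write-up deserves tightening: you justify discarding the boundary terms in the integration by parts by saying the hypothesis $\int \norm{\nabla \pi(x_0)} e^{-\eta\norm{x_0}^2}\,\dd x_0 < \infty$ together with the Gaussian decay of $p(x_k|x_0)$ ``controls the surface terms.'' Integrability of the gradient does \emph{not} imply pointwise decay of $\pi(x_0)p(x_k|x_0)$ at infinity, so the naive surface-term argument is not immediate; this is precisely the subtlety the paper's Lemma~\ref{lem:stein} addresses (its proof, via Fubini and Fatou, explicitly avoids assuming that $f$ or $\nabla f$ vanishes at infinity). The fix is straightforward: your integration by parts is equivalent to $\int \nabla_{x_0}\bigl[\pi(x_0)p(x_k|x_0)\bigr]\dd x_0 = 0$, and this follows from that lemma applied to $f(x_0) \propto \pi(x_0)p(x_k|x_0)$, whose gradient is integrable under the stated hypothesis (take $\eta$ of order $\kappa_k^2/(2\lambda_k)$ for the $(\nabla\pi)\,p$ term; the $\pi\,\nabla_{x_0}p$ term is integrable since $\norm{\nabla_{x_0}p(x_k|x_0)}$ is bounded in $x_0$). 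With that substitution your proof is complete and rigorous.
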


The benefit of this novel loss is that it is much better behaved compared to the standard denoising score matching loss when $\delta \ll 1$ as established below, this is due to the fact that the variance of the terms appearing in $\ell_{\textup{DSM}}$ for $k$ close to zero become very large. This loss is not applicable to generative modeling as $\pi$ is only available through samples.

\newcommand{\ggrad}{\mathfrak g}
\newcommand{\IA}{I_{\mathrm{A}}}
\newcommand{\IB}{I_{\mathrm{B}}}
\newcommand{\gga}{\ggrad^{\mathrm{A}}}
\newcommand{\ggb}{\ggrad^{\mathrm{B}}}
\subsection{Benefits of alternative score matching identity}
We establish here formally the benefits of NSM over DSM. NSM score matching prevents variance blow up as $k$ is close to time $0$ and $\delta \rightarrow 0$. This is more elegantly formalized in continuous-time. In this case, the renormalized loss functions $\ell_{\textup{DSM}}$ and $\ell_{\textup{NSM}}$ introduced in Proposition \ref{prop:scorematching} become for $s_\theta(t,x_t)=\nabla \log \hat \pi_\theta(t,x_t)$
\begin{align*}
	 & \textstyle \ell_{\textup{DSM}}(\theta)= \textstyle \int^T_0 \E\norm{s_\theta(t, X_t) - \gradlog p(X_t|X_0)}^2 \dd t,          \\
	 & \textstyle \ell_{\textup{NSM}}(\theta)= \textstyle \int^T_0 \E||s_\theta(t, X_t)+X_t - \kappa_t \gradlog g_0(X_0) ||^2 \dd t,
\end{align*}
where the expectations are w.r.t. $\pi(x_0)p(x_t|x_0)$.
Unbiased estimates of the gradient of these losses $\hat {\nabla \ell}_{\textup{DSM}}(\theta)$ and $\hat {\nabla \ell}_{\textup{NSM}}(\theta)$ are obtained by computing the gradient w.r.t. $\theta$ of the argument within the expectation at a sample $\tau \sim \operatorname{Unif}[0, T]$, $X_0 \sim \pi$ and $X_\tau \sim p(x_\tau|X_0)$.

The following result clarifies the advantage of NSM score matching.
\begin{proposition}
	\label{prop:guidance_loss_is_good}
	Let $d=1$, $\beta_t \equiv 1$, and $\pi(x_0) = \mathcal N(x_0;\mu, \sigma^2)$. Suppose that the score network $s_\theta(t, x_t)$ is a continuously differentiable function jointly on its three variables. Moreover, assume that $\abs{s} + \norm{\nabla_\theta s} \leq C(1 + |\theta|+ |t| + |x_t|)^{\alpha}$ for some $C, \alpha > 0$; and that $\E_\pi[\norm{\nabla_\theta s_\theta(0, X_0)}^2] > 0$ for all $\theta$. Then, for all $\theta$, $\ell_{\textup{DSM}}(\theta)$ and $\E[\hat {||\nabla \ell}_{\textup{DSM}}(\theta)||^2]$ are infinite whereas $\ell_{\textup{NSM}}(\theta)$ and $\E[||\hat {\nabla \ell}_{\textup{NSM}}(\theta)||^2]$ are finite.
\end{proposition}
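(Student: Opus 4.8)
First I would specialize the dynamics: with $\beta_t \equiv 1$ we have $\lambda_t = 1 - e^{-2t}$ and $\kappa_t = \sqrt{1-\lambda_t} = e^{-t}$, and I would write the sampling model as $X_t = \kappa_t X_0 + \sqrt{\lambda_t}\,Z$ with $X_0 \sim \pi$ and $Z \sim \mathcal N(0,1)$ independent. From $p(x_t|x_0)=\mathcal N(x_t;\kappa_t x_0,\lambda_t)$ this gives the key identity
\begin{equation*}
  \gradlog p(X_t|X_0) = -\frac{X_t - \kappa_t X_0}{\lambda_t} = -\frac{Z}{\sqrt{\lambda_t}}, \qquad \E\ps{\pr{\gradlog p(X_t|X_0)}^2} = \frac{1}{\lambda_t}.
\end{equation*}
Since $\lambda_t \sim 2t$ as $t \to 0$, the map $t \mapsto 1/\lambda_t$ is non-integrable at the origin, $\int_0^T \lambda_t^{-1}\dd t = \infty$, and this single fact drives the blow-up of DSM. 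Throughout I would use the elementary inequality $(a-b)^2 \geq \tfrac12 b^2 - a^2$ together with the polynomial growth hypothesis on $s$ and $\nabla_\theta s$, which guarantees that any expectation of a fixed power of $|s_\theta(t,X_t)|$, $\norm{\nabla_\theta s_\theta(t,X_t)}$, $|X_t|$ or $|\gradlog g_0(X_0)|$ is finite and locally bounded in $t$, since the Gaussian variables $X_0, Z$ have all moments.

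\textbf{DSM is infinite.} For the loss I would bound the integrand by $(s_\theta - \gradlog p)^2 \geq \tfrac12 (\gradlog p)^2 - s_\theta^2$, giving
\begin{equation*}
  \ell_{\textup{DSM}}(\theta) \geq \int_0^T \pr{\tfrac{1}{2\lambda_t} - \E\ps{s_\theta(t, X_t)^2}}\dd t = \infty,
\end{equation*}
because the first term diverges while the second is finite by the growth bound. For the gradient, the one-sample estimator is $\hat{\nabla \ell}_{\textup{DSM}}(\theta) = 2T (s_\theta(\tau, X_\tau) - \gradlog p(X_\tau|X_0))\nabla_\theta s_\theta(\tau, X_\tau)$ with $\tau \sim \operatorname{Unif}[0,T]$, so the same inequality and $(\gradlog p)^2 = Z^2/\lambda_\tau$ yield
\begin{equation*}
  \E\ps{\norm{\hat{\nabla \ell}_{\textup{DSM}}(\theta)}^2} \geq 4T \int_0^T \pr{\tfrac{1}{2\lambda_t}\E\ps{Z^2 \norm{\nabla_\theta s_\theta(t, X_t)}^2} - \E\ps{s_\theta(t, X_t)^2\,\norm{\nabla_\theta s_\theta(t, X_t)}^2}}\dd t,
\end{equation*}
the subtracted term being finite. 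It then remains to show that the leading term is infinite, which is the crux.

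\textbf{Main obstacle.} The heart is proving $\int_0^T \lambda_t^{-1}\E[Z^2\norm{\nabla_\theta s_\theta(t,X_t)}^2]\dd t = \infty$. I would argue that as $t \to 0$ we have $X_t \to X_0$ and, by joint continuity of $s_\theta$, $\nabla_\theta s_\theta(t,X_t) \to \nabla_\theta s_\theta(0,X_0)$; the polynomial growth bound supplies a dominating function with finite Gaussian moments, so uniform integrability lets me pass to the limit, giving $\E[Z^2\norm{\nabla_\theta s_\theta(t,X_t)}^2] \to \E[Z^2]\,\E_\pi[\norm{\nabla_\theta s_\theta(0,X_0)}^2] = \E_\pi[\norm{\nabla_\theta s_\theta(0,X_0)}^2] =: c_0$, where the product splits because $Z$ is independent of $X_0$ in the limit. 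The assumption $c_0 > 0$ then produces a $\tau_0$ with integrand $\geq (c_0/2)/\lambda_t$ on $(0,\tau_0)$, and $\int_0^{\tau_0}\lambda_t^{-1}\dd t = \infty$ finishes the argument. Establishing this convergence rigorously, i.e. the interchange of limit and expectation via the growth/uniform-integrability bound, is the only genuinely delicate step.

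\textbf{NSM is finite.} Here the target $\kappa_t \gradlog g_0(X_0) - X_t$ replaces the singular $\gradlog p(X_t|X_0)$. I would compute $\gradlog g_0(X_0) = \gradlog \pi(X_0) + X_0$, which is affine in the Gaussian $X_0$ and hence square-integrable, while $\kappa_t = e^{-t} \in [e^{-T},1]$ is bounded; expanding the square and using $(a+b+c)^2 \leq 3(a^2+b^2+c^2)$ bounds the NSM integrand by finite quantities uniformly in $t$ over the compact interval $[0,T]$, so $\ell_{\textup{NSM}}(\theta) < \infty$. For the gradient second moment, $\hat{\nabla \ell}_{\textup{NSM}}(\theta) = 2T(s_\theta(\tau,X_\tau) + X_\tau - \kappa_\tau \gradlog g_0(X_0))\nabla_\theta s_\theta(\tau,X_\tau)$ is, after squaring, a product of polynomial-growth functions of $(\tau, X_0, Z)$; since $(X_0,Z)$ are jointly Gaussian with all moments finite and $\tau$ ranges over a compact set, Cauchy--Schwarz together with the moment bounds gives $\E[\norm{\hat{\nabla \ell}_{\textup{NSM}}(\theta)}^2] < \infty$, completing the proof.
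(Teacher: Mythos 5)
Your proof is correct, but it reaches the DSM blow-up by a genuinely different route than the paper. The paper conditions on $X_t$ (and $\tau = t$) and uses a conditional-variance lower bound: since $s_\theta(t,X_t)$ is constant given $X_t$, it gets $\CE{\norm{s_\theta(t,X_t) - \gradlog p(X_t|X_0)}^2}{X_t} \geq \VarE{\gradlog p(X_t|X_0)}{X_t} = \frac{1-\lambda_t}{\lambda_t^2}\rho_t$ with $\rho_t = \VarE{X_0}{X_t} = \frac{\lambda_t\sigma^2}{\lambda_t + (1-\lambda_t)\sigma^2} \sim 2t$, so the network term vanishes from the lower bound with no finite correction to subtract; for the gradient estimator it then factors $\norm{\nabla_\theta s_\theta(t,X_t)}^2$ out of the conditional expectation and argues \emph{pathwise}, showing $\int_0^T \norm{\nabla_\theta s_\theta(t,X_t)}^2 \frac{1-\lambda_t}{\lambda_t^2}\rho_t \dd t = \infty$ on the positive-probability event $\{\nabla_\theta s_\theta(0,X_0) \neq 0\}$, using joint continuity of $\nabla_\theta s$ and almost-sure continuity of the forward path $X_{[0,T]}$. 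You instead use the elementary inequality $(a-b)^2 \geq \tfrac12 b^2 - a^2$ together with the exact marginal second moment $\E[(\gradlog p(X_t|X_0))^2] = 1/\lambda_t$, and for the gradient you pass to the limit $\E[Z^2\norm{\nabla_\theta s_\theta(t,X_t)}^2] \to \E_\pi[\norm{\nabla_\theta s_\theta(0,X_0)}^2] > 0$ as $t \to 0$ by dominated convergence — which is sound, since $|X_t| \leq |X_0| + |Z|$ for all $t \in [0,T]$ turns your growth hypothesis into a $t$-free integrable dominating function (one small wording slip: $Z$ is independent of $X_0$ exactly, by your coupling, not merely ``in the limit''). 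The trade-off: the paper's conditional-variance trick is subtraction-free and its pathwise argument is short but leans on path-space continuity of the diffusion; your argument never leaves the fixed-$t$ marginals, at the modest cost of subtracting finite terms ($\E[s_\theta^2]$ and $\E[s_\theta^2\norm{\nabla_\theta s_\theta}^2]$, both finite by polynomial growth and Gaussian moments) and of making the dominated-convergence interchange explicit. Both proofs exploit the same non-integrable singularity $\sim c/t$ near $t=0$ and invoke the positivity assumption $\E_\pi[\norm{\nabla_\theta s_\theta(0,X_0)}^2] > 0$ in the same role, and your NSM finiteness argument (affine $\gradlog g_0$ in the Gaussian case, bounded $\kappa_t$, compact time interval, Cauchy--Schwarz) coincides with the paper's.
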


\newcommand{\hgtheta}{\hat g_\theta}

\subsection{Neural network parametrization}
\label{sec:nn_param}
Contrary to standard practice for generative modeling \cite{ho2020denoising,song2020score}, we parameterize a function and not a vector field. This is necessary to run PDDS (Algorithm \ref{algo:smc_dds}) as it relies on potentials to weight particles.

We will use a parameterization of the form $\log \hat \pi_\theta(k,x_k)=\log \hat g_\theta(k,x_k)-\tfrac{1}{2}||x||^2$ where we parametrize $\hat g_\theta$ using two neural networks $r_\eta$ and $N_\gamma$ such that $\theta = (\eta, \gamma)$. The network $r_\eta$ returns a scalar whereas $N_\gamma$ returns a vector in $\mathbb R^d$. The precise expression is given by
\begin{multline*}
	\log \hat g_\theta(k, x_k) = \left[r_\eta(k) - r_\eta(0) \right] \langle N_\gamma(k, x_k), x_k \rangle +\\+ \left[ 1 - r_\eta(k) + r_\eta(0) \right] \log g_0(\sqrt{1 - \lambda_k} x_k).
\end{multline*}
This parametrization takes advantage of the simple approximation~\eqref{eq:naive_approximation} with which it coincides at time $0$. \citet{zhangyongxinchen2021path} used a similar parametrization to incorporate gradient information in their control policy. Although $\log \hat g_\theta(k, x_k)$ is a scalar, we deliberately let $N_\gamma(k, x_k)$ return a vector which is then scalar-multiplied with $x_k$. This is usually done in the literature when the scalar potential instead of just its gradient is learned (see e.g. \citealp{salimans2021should}) and helps improve model expressiveness.

\subsection{Training the neural network}
\label{subsect:iterative}

Both $\ell_{\textup{DSM}}(\theta)$ and $\ell_{\textup{NSM}}(\theta)$ can be written as $\sum_{k\in[K]} \E\ps{\ell(\theta,k, X_0, X_k)}$ for appropriately defined local loss functions $\ell$. Algorithm~\ref{algo:learn_theta} describes the gradient updates according to the batch size $B$.

\newcommand{\Nupdate}{N_{\mathrm{up}}}
\begin{algorithm}[tb]
	\caption{Potential neural network training}
	\label{algo:learn_theta}
	\begin{algorithmic}
		\REQUIRE Particle approx. $\hat{\pi}^N$ outputted by Algorithm~\ref{algo:smc_dds}; Potential NN $\hat g_k(\theta, x_k)$; Initialization $\theta_0$; Local loss functions $\ell(\theta, k, x_0, x_k)$; Batch size $B$; Number of gradient updates $\Nupdate$; Learning rate $\eta>0$.
		\FOR{$i = 1, 2, \ldots, \Nupdate$}
		\FOR{$b = 1, 2, \ldots, B$}
		\STATE Sample $X^{b}_0 \sim \hat{\pi}^N(\cdot)$; $k_b \sim \operatorname{Unif}[K]$;
		\STATE Sample $X_{k_b} \sim p_{k_b, 0}(\cdot|X_0^{b})$
		\ENDFOR
		\STATE $\theta_i := \theta_{i-1} -  \tfrac{\eta}{B}\sum_{b\in[B]} \nabla_{\theta}\ell(\theta_{i-1},k_b, X_0^{b}, X_{k_b})$
		\ENDFOR
		\ENSURE Potential $\hat g_{\theta_{\Nupdate}}(k, x_k)$
	\end{algorithmic}
\end{algorithm}
In practice, we first run Algorithm~\ref{algo:smc_dds} with a simple approximation such as \eqref{eq:naive_approximation}. We then use Algorithm~\ref{algo:learn_theta} to learn $\hat g_\theta(k, x_k)$ and can then execute Algorithm~\ref{algo:smc_dds} again with $\hat g_k(x_k)=\hat g_\theta(k, x_k)$ to obtain lower variance estimates of $\pi$ and $\mathcal{Z}$. We can further refine the approximation of $g_k$ using Algorithm~\ref{algo:learn_theta}. In practice we found that one or two iterations with larger $N_{\text{up}}$ are sufficient, although more frequent iterations  with smaller $N_{\text{up}}$ can give better performance under a limited budget of target density evaluations.

\subsection{Mechanisms behind potential improvement}
\label{sec:potential_iterations}
It could be unclear at first sight how the iterative procedure described in Section~\ref{subsect:iterative} leads to an improvement of $\hat g_k(x_k)$. Superficially, it seems like we try to improve a poor potential approximation using particles produced by the poor approximation itself. However, results in Section~\ref{sec:theory} imply that the SMC mechanism provides a consistent estimate of the target for \textit{any} approximation of the potential. Thus, we expect that for $N$ large enough, the output of Algorithm~\ref{algo:smc_dds} would have higher quality than the particles used to learn the current $\hat g_k$. This mechanism has been studied in a different setting in \citet{heng2020controlled} and it would be interesting to extend their results to our case.

Moreover, the training process uses not only the output of Algorithm~\ref{algo:smc_dds}, but also further information about the target injected via a variety of mechanisms (the guidance loss described in Section~\ref{sec:losses} and the neural network parametrization described in Section~\ref{sec:nn_param}). Quantifying the gain from these techniques is an open question. We provide ablation studies in \cref{app:additional_ablations}.

\section{Related Work}\label{sec:Related}
SMC samplers \citep{del2006sequential,dai2022invitation} are a general methodology to sample sequentially from a sequence of distributions. They rely on the notion of forward and backward kernels in order to move from one distribution to another. PDDS can be cast in this framework where the forward kernel is chosen to be the forward noising diffusion and the backward kernel the approximate time-reversal. The novelty of our work is the exploitation of the special structure induced by such a choice to come up with efficient backward kernel estimates. Other standard methods include Annealed Importance Sampling \citep{neal2001annealed} and Parallel Tempering \cite{geyer1991markov}. Unlike our work, all these methods approximate a sequence of tempered versions of the target. While they are standard, it is also well-known that tempering strategies can exhibit poor performance for multimodal targets \citep{woodard2009sufficient,tawn2020weight,syed2022non} as tempering can change dramatically the masses of distribution modes depending on their widths. Adding noise does not suffer from this issue \citep{mate2023learning}. It only perturbs the distribution locally, preserving the weights of the modes; see Figure~\ref{fig:tempering_vs_noising} for an illustration.

\begin{figure}
	\centering
	\includegraphics[width=\linewidth]{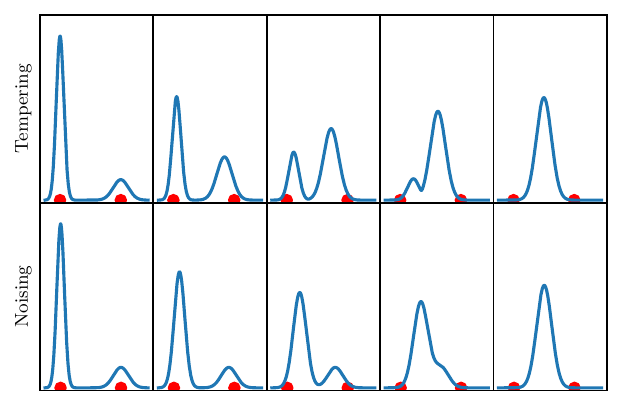}

	\caption{Tempered (top) and noised (bottom) sequences of distributions for the target $\pi(x) = 0.8 \mathcal N(x; -4, 0.5^2) + 0.2 \mathcal N(x; 4, 1)$. The tempered sequence follows $\pi_t(x) \propto \pi(x)^{(1-\eta_t)} \phi(x)^{\eta_t}$ where $\phi$ is the standard normal and $\eta_t$ increases from $0$ to $1$. The noising sequence follows the forward diffusion in \cref{eq:forward_diffusion}. Red dots indicate the position of modes in the original target. The tempered sequence suffers from mode switching, i.e. the low mass large width mode becomes dominant across the tempered path. The noised sequence does not suffer from this problem.}
	\label{fig:tempering_vs_noising}
\end{figure}

Our sampler can also be interpreted as sampling from a sequence of distributions using so-called twisted proposals. The general framework for approximating these twisted proposals using SMC has been considered in \citet{guarniero2017iterated,heng2020controlled,lawson2022sixo}. In particular, \citet{wu2023practical2023} and \citet{cardoso2023diffusion} recently apply such ideas to conditional simulation in generative modeling given access to a pretrained score network. \citet{wu2023practical2023} rely on the simple approximation (\ref{eq:naive_approximation}) and do not quantify its error, while \citet{cardoso2023diffusion} use a different proposal kernel which leverages the structure of a linear inverse problem. Our setup is here different as we consider general Monte Carlo sampling problems. We do not rely on any pretrained network and refine our potential approximations using an original loss function. We additionally provide theoretical results in particular when the discretization time step goes to zero.

\section{Experimental Results}
\label{sec:experimental}

\subsection{Normalizing constant estimation}

\begin{figure}
	\centering
	\includegraphics[width=\linewidth]{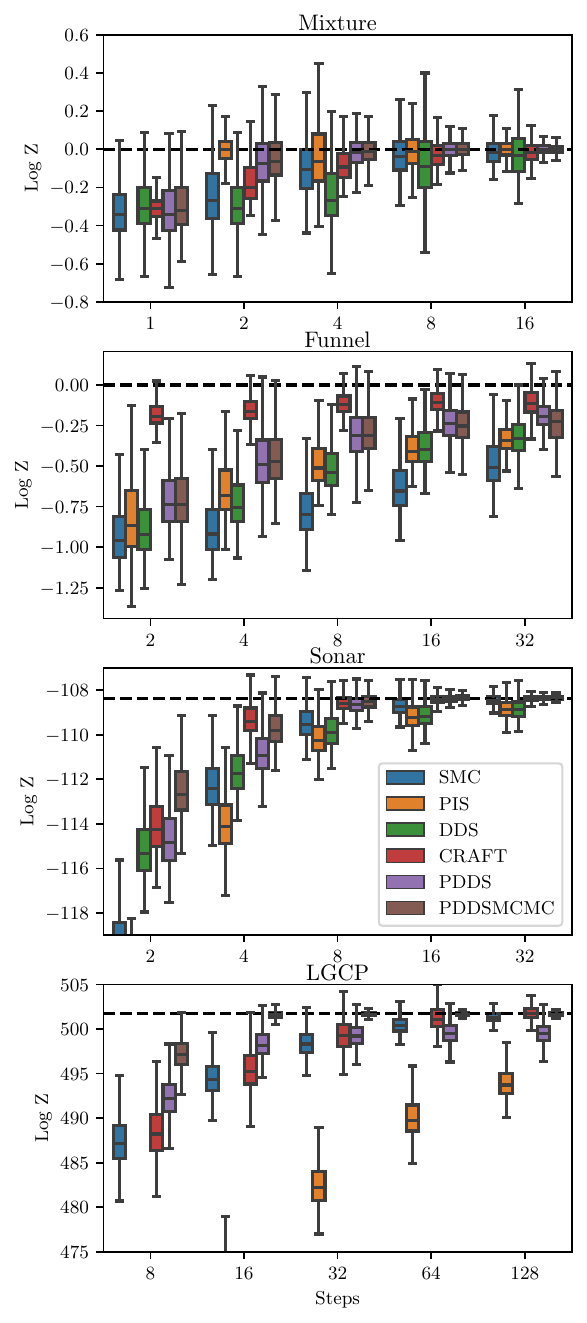}
	\caption{$\log \hmc^N_0$ for our method (PDDS and PDDS-MCMC), compared with SMC, CRAFT, DDS and PIS. Dotted black represents analytic ground truth where available, otherwise long-run SMC. Variation is displayed over both training and sampling seeds (2000 total). The y-axes on Sonar and LGCP have been cropped and outliers (present in all methods) removed for clarity. Uncurated samples are presented in \cref{app:additional_results}.}
	\label{fig:logZ_boxplots}
\end{figure}

\begin{figure}
	\centering
	\includegraphics[width=\linewidth]{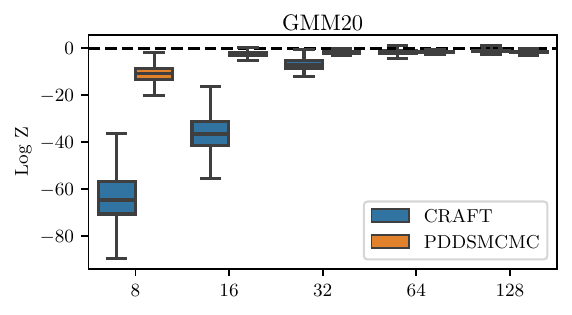}
	\caption{$\log \hmc^N_0$ for CRAFT and PDDS-MCMC on the GMM task in 20 dimensions. Variation displayed over training and sampling seeds (1000 total).}
	\label{fig:gmm_logZ}
\end{figure}

We evaluate the quality of normalizing constant estimates produced by PDDS on a variety of sampling tasks. We consider two synthetic target densities and two posterior distributions, with results on additional targets included in \cref{app:additional_results}. The synthetic targets are \verb|Mixture|, a 2-dimensional mixture of Gaussian distributions with separated modes \citep{arbel2021annealed} and \verb|Funnel|, a 10-dimensional target displaying challenging variance structure \citep{Neal:2003}. The posterior distributions are \verb|Sonar|, a logistic regression posterior fitted to the Sonar (61-dimensional) dataset and \verb|LGCP|, a log Gaussian Cox process \citep{Moller:1998} modelling the rate parameter of a Poisson point process on a $40\times40=1600$-point grid, fitted to the Pines dataset. Precise specification of these targets can be found in \cref{apx:exp_tasks}.

We compare our performance to a selection of strong baselines. We consider two annealing algorithms, firstly an SMC sampler \citep{del2006sequential} with HMC kernels and secondly CRAFT \citep{Matthews2022}, which uses normalizing flows to transport particles at each step of an SMC algorithm. We also consider two diffusion-based sampling methods, Path Integral Sampler (PIS) \citep{zhangyongxinchen2021path} and Denoising Diffusion Sampler (DDS) \citep{vargasDDSampler2023}. As mentioned in \cref{sec:algo_settings}, we reparameterize the target using a variational approximation for all methods. We include hyperparameter and optimizer settings, run times, and experimental procedures in \cref{apx:exp_settings}.

We present the normalizing constant estimation results in \cref{fig:logZ_boxplots}. PDDS uses the same training budget as PIS and DDS. We also include PDDS with optional MCMC steps (PDDS-MCMC). Considering the posterior sampling tasks, PDDS-MCMC is the best performing method in terms of estimation bias and variance everywhere, except for 4 steps on the \verb|Sonar| task where CRAFT performs the best. PDDS without MCMC steps performs on par with CRAFT on average, specifically PDDS outperforms CRAFT for larger step regimes on \verb|Sonar| and low step regimes on \verb|LGCP| while CRAFT outperforms PDDS in the opposite regimes. Both PDDS methods uniformly outperform the diffusion-based approaches (PIS and DDS). Considering the synthetic target densities, CRAFT is the best performing method on the synthetic \verb|Funnel| task while PDDS and PDDS-MCMC are the best performing methods on \verb|Mixture|. Our approach again outperforms both of the diffusion-based approaches.

While CRAFT performs competitively with our method on certain tasks, we note that CRAFT cannot be easily refined. Indeed, if we want more intermediate distributions between the reference and the target, we can simply decrease the discretization step size for PDDS, but would need to relearn the flows for CRAFT. In addition, choosing the flow structure in CRAFT can be challenging and is problem-specific. In contrast, we used the same simple MLP structure for all tasks for PDDS. Finally, CRAFT relies on MCMC moves to inject noise into the system and prevent particle degeneracy. On the other hand PDDS can produce competitive results without MCMC, with the option of boosting performance with MCMC steps if the computational budget allows.

We further compared PDDS-MCMC with CRAFT on a challenging Gaussian Mixture Model (\verb|GMM|) with 40 highly separated modes \citep{midgley2022flow} in a range of dimensions. We present the normalising constant estimates for the task in 20 dimensions in \cref{fig:gmm_logZ}. We observe that PDDS-MCMC significantly outperforms CRAFT in bias and variance, particularly in low step regimes. Furthermore, PIS and DDS failed to produce competitive results. Results in additional dimensions are included in \cref{app:additional_results}.

\subsection{Sample quality}

We also visually assess the quality of samples from each method in \cref{fig:samples}. We choose the multi-modal \verb|Mixture| task in 2-dimensions for ease of visualization. Unsurprisingly we find that both PIS and DDS do not capture all 6 modes of the distribution due to the mode-seeking behaviour of the reverse KL objective. Samples from our approach appear of similar quality to those of SMC and CRAFT.

We further quantitatively compared the sample quality of our method versus CRAFT by evaluating the entropy-regularized Wasserstein-2 distance between samples from the model and the target in the challenging \verb|GMM| task. We present the results in \cref{fig:gmm_sample_quality} for the task in 20 dimensions. Here PDDS-MCMC clearly outperforms CRAFT producing samples with significantly lower transport cost to the target distribution. From the sample visualization we observe that PDDS-MCMC is able to correctly recover a far greater proportion of the target modes than our CRAFT implementation.

\begin{figure}[t]
	\centering
	\includegraphics[width=\linewidth]{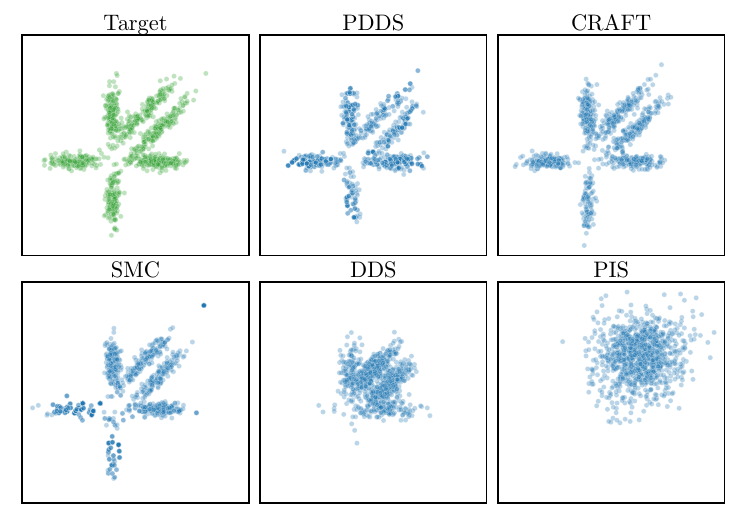}
	\caption{Samples from each method on the Gaussian Mixture task with 4 steps.}
	\label{fig:samples}
\end{figure}

\begin{figure}[h]
	\centering
	\includegraphics[width=0.8\linewidth]{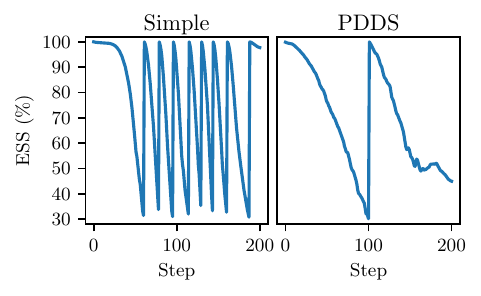}
	\caption{ESS curves on the Sonar task with 200 steps. Left: PDDS with approximation \cref{eq:naive_approximation}, right: PDDS with learnt potential approximation.}
	\label{fig:ess_curves}
\end{figure}

\begin{figure}[t]
	\centering
	\includegraphics[width=\linewidth]{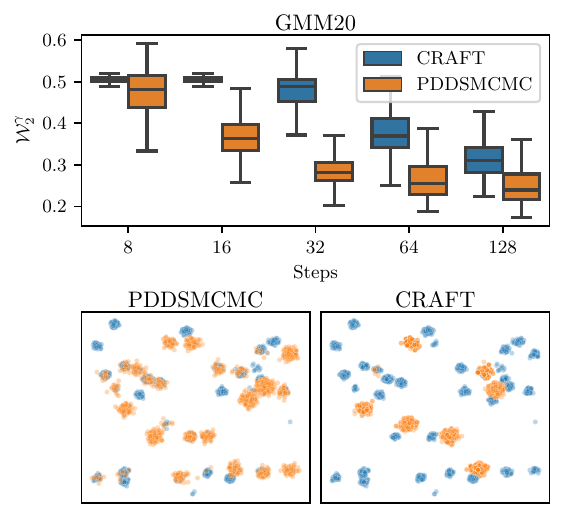}
	\caption{Top: $\mathcal{W}_2^\gamma$ distance between samples from CRAFT and PDDS-MCMC and from GMM20. Variation displayed over training and sampling seeds (200 total). Bottom: samples (first two dimensions) from PDDS-MCMC and CRAFT (orange) using 32 steps versus GMM20 target (blue).}
	\label{fig:gmm_sample_quality}
\end{figure}

\begin{figure}[h!]
	\centering
	\includegraphics[width=\linewidth]{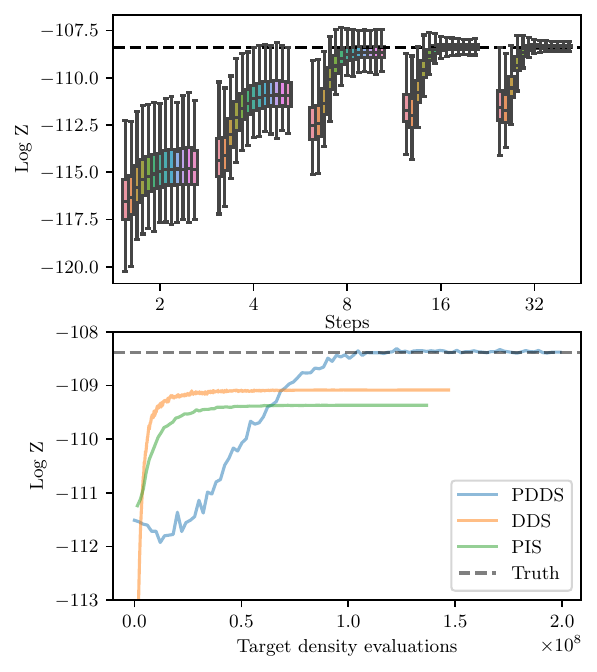}
	\caption{Top: $\log \hmc^N_0$ every 2 iterations of \cref{algo:learn_theta} on the Sonar task. Left-most bar of each group shows PDDS with the simple approximation. Bottom: $\log \hmc^N_0$ during training, one realization on the Sonar task with 16 steps.}
	\label{fig:pdds_iters}
\end{figure}

\subsection{Iterations of potential approximation}

Here we demonstrate the improvement in normalizing constant estimates due to our iterative potential approximation scheme. \cref{fig:ess_curves} shows the improvement in ESS and reduction in number of resampling steps. In the top pane of \cref{fig:pdds_iters}, we see that the simple potential approximation (\ref{eq:naive_approximation}) provides very poor normalizing constant estimates. Iterations of \cref{algo:learn_theta} considerably improve performance. %

In the second pane of \cref{fig:pdds_iters}, we show the evolution of $\log \hmc^N_0$ during training for each of the diffusion-based approaches. While PDDS initially falls below PIS and DDS due to the simple initial approximation, we exceed each of these methods after around $7\times10^7$ density evaluations. %

\section{Discussion}
This paper contributes to the growing literature on the use of denoising diffusion ideas for Monte Carlo sampling \citep{berner2022optimal,mcdonald2022proposal,vargasDDSampler2023,huang2023monte,richter2023improved}. It proposes an original iterative SMC algorithm which provides an unbiased estimate of the normalizing constant for any finite number of particles by leveraging an original score matching technique. This algorithm also provides asymptotically consistent estimates of the normalizing constant and of expectations with respect to the target. One limitation of PDDS is that it practically relies on $g_0(x)$ being a well-behaved potential function. While our approach using a variational approximation to guide a reparameterization of the target has been effective in our examples, more sophisticated techniques might have to be implemented \citep{hoffman2019neutra}.

\clearpage

\section*{Impact Statement}
Sampling is a ubiquitous problem. Therefore, PDDS can be applied to a wide range of applications. While we have obtained limit theorems for this scheme, it is important to exercise caution if the output were to be used for decision making as we practically only use a finite number of particles.

\section*{Acknowledgements}
We would like to thank Pierre Del Moral, Christian A. Naesseth, Sam Power, Saifuddin Syed, Brian Trippe, Francisco Vargas, and Luhuan Wu for helpful discussions. AP is funded by the EPSRC CDT in Modern Statistics and Statistical Machine Learning through grant EP/S023151/1. HD is funded by a postdoctoral position through the CoSInES grant EP/R034710/1. AD acknowledges support from EPSRC grants EP/R034710/1 and EP/R018561/1.
\newpage
\bibliography{refDDS}

\begin{thebibliography}{57}
\providecommand{\natexlab}[1]{#1}
\providecommand{\url}[1]{\texttt{#1}}
\expandafter\ifx\csname urlstyle\endcsname\relax
  \providecommand{\doi}[1]{doi: #1}\else
  \providecommand{\doi}{doi: \begingroup \urlstyle{rm}\Url}\fi

\bibitem[Anastasiou et~al.(2023)Anastasiou, Barp, Briol, Ebner, Gaunt,
  Ghaderinezhad, Gorham, Gretton, Ley, Liu, Mackey, Oates, Reinert, and
  Swan]{anastasiou2023steinmethod}
Anastasiou, A., Barp, A., Briol, F.-X., Ebner, B., Gaunt, R.~E., Ghaderinezhad,
  F., Gorham, J., Gretton, A., Ley, C., Liu, Q., Mackey, L., Oates, C.~J.,
  Reinert, G., and Swan, Y.
\newblock Stein's method meets computational statistics: a review of some
  recent developments.
\newblock \emph{Statistical Science}, 38\penalty0 (1):\penalty0 120--139, 2023.

\bibitem[Arbel et~al.(2021)Arbel, Matthews, and Doucet]{arbel2021annealed}
Arbel, M., Matthews, A., and Doucet, A.
\newblock Annealed flow transport {M}onte {C}arlo.
\newblock In \emph{International Conference on Machine Learning}, 2021.

\bibitem[Assaraf \& Caffarel(1999)Assaraf and
  Caffarel]{assaraf1999zerovariance}
Assaraf, R. and Caffarel, M.
\newblock Zero-variance principle for {M}onte {C}arlo algorithms.
\newblock \emph{Physical Review Letters}, 83:\penalty0 4682--4685, 1999.

\bibitem[Berner et~al.(2022)Berner, Richter, and Ullrich]{berner2022optimal}
Berner, J., Richter, L., and Ullrich, K.
\newblock An optimal control perspective on diffusion-based generative
  modeling.
\newblock In \emph{NeurIPS Workshop on Score-Based Methods}, 2022.

\bibitem[Bradbury et~al.(2018)Bradbury, Frostig, Hawkins, Johnson, Leary,
  Maclaurin, Necula, Paszke, Vander{P}las, Wanderman-{M}ilne, and
  Zhang]{Bradbury:2018}
Bradbury, J., Frostig, R., Hawkins, P., Johnson, M.~J., Leary, C., Maclaurin,
  D., Necula, G., Paszke, A., Vander{P}las, J., Wanderman-{M}ilne, S., and
  Zhang, Q.
\newblock {JAX}: composable transformations of {P}ython+{N}um{P}y programs,
  2018.

\bibitem[Cardoso et~al.(2024)Cardoso, Idrissi, Corff, and
  Moulines]{cardoso2023diffusion}
Cardoso, G., Idrissi, Y. J.~E., Corff, S.~L., and Moulines, E.
\newblock Monte {C}arlo guided diffusion for {B}ayesian linear inverse
  problems.
\newblock In \emph{International Conference on Learning Representations}, 2024.

\bibitem[Cattiaux et~al.(2023)Cattiaux, Conforti, Gentil, and
  L{\'e}onard]{cattiaux2021time}
Cattiaux, P., Conforti, G., Gentil, I., and L{\'e}onard, C.
\newblock Time reversal of diffusion processes under a finite entropy
  condition.
\newblock \emph{Annales de l'Institut Henri Poincar{\'e} (B) Probabilites et
  statistiques}, 59\penalty0 (4):\penalty0 1844--1881, 2023.

\bibitem[Chatterjee \& Diaconis(2018)Chatterjee and Diaconis]{chatterjee2018}
Chatterjee, S. and Diaconis, P.
\newblock The sample size required in importance sampling.
\newblock \emph{The Annals of Applied Probability}, 28\penalty0 (2):\penalty0
  1099--1135, 2018.

\bibitem[Chopin \& Papaspiliopoulos(2020)Chopin and
  Papaspiliopoulos]{chopin2020book}
Chopin, N. and Papaspiliopoulos, O.
\newblock \emph{An Introduction to Sequential {Monte} {Carlo}}.
\newblock Springer Ser. Stat. Springer, 2020.

\bibitem[Chopin et~al.(2022)Chopin, Singh, Soto, and
  Vihola]{chopin2022resampling}
Chopin, N., Singh, S.~S., Soto, T., and Vihola, M.
\newblock On resampling schemes for particle filters with weakly informative
  observations.
\newblock \emph{The Annals of Statistics}, 50\penalty0 (6):\penalty0
  3197--3222, 2022.

\bibitem[Chung et~al.(2023)Chung, Kim, Mccann, Klasky, and
  Ye]{chung2023diffusion}
Chung, H., Kim, J., Mccann, M.~T., Klasky, M.~L., and Ye, J.~C.
\newblock Diffusion posterior sampling for general noisy inverse problems.
\newblock In \emph{International Conference on Learning Representations}, 2023.

\bibitem[Corso et~al.(2023)Corso, Xu, De~Bortoli, Barzilay, and
  Jaakkola]{corso2023particle}
Corso, G., Xu, Y., De~Bortoli, V., Barzilay, R., and Jaakkola, T.
\newblock Particle guidance: non-iid diverse sampling with diffusion models.
\newblock In \emph{International Conference on Learning Representations}, 2023.

\bibitem[Dai et~al.(2022)Dai, Heng, Jacob, and Whiteley]{dai2022invitation}
Dai, C., Heng, J., Jacob, P.~E., and Whiteley, N.
\newblock An invitation to sequential {M}onte {C}arlo samplers.
\newblock \emph{Journal of the American Statistical Association}, 117\penalty0
  (539):\penalty0 1587--1600, 2022.

\bibitem[De~Bortoli et~al.(2021)De~Bortoli, Thornton, Heng, and
  Doucet]{debortoli2021diffusion}
De~Bortoli, V., Thornton, J., Heng, J., and Doucet, A.
\newblock Diffusion {S}chr{\"o}dinger bridge with applications to score-based
  generative modeling.
\newblock In \emph{Advances in Neural Information Processing Systems}, 2021.

\bibitem[Del~Moral(2004)]{del2004feynman}
Del~Moral, P.
\newblock \emph{Feynman-{K}ac Formulae: Genealogical and Interacting Particle
  Approximations}.
\newblock Springer, 2004.

\bibitem[Del~Moral et~al.(2006)Del~Moral, Doucet, and Jasra]{del2006sequential}
Del~Moral, P., Doucet, A., and Jasra, A.
\newblock Sequential {M}onte {C}arlo samplers.
\newblock \emph{Journal of the Royal Statistical Society: Series B (Statistical
  Methodology)}, 68\penalty0 (3):\penalty0 411--436, 2006.

\bibitem[Del~Moral et~al.(2012)Del~Moral, Doucet, and Jasra]{Del-Moral:2012}
Del~Moral, P., Doucet, A., and Jasra, A.
\newblock On adaptive resampling strategies for sequential {M}onte {C}arlo
  methods.
\newblock \emph{Bernoulli}, 18\penalty0 (1):\penalty0 252--278, 2012.

\bibitem[Douc \& Capp{\'e}(2005)Douc and Capp{\'e}]{douc2005comparison}
Douc, R. and Capp{\'e}, O.
\newblock Comparison of resampling schemes for particle filtering.
\newblock In \emph{Proceedings of the 4th International Symposium on Image and
  Signal Processing and Analysis}, pp.\  64--69. IEEE, 2005.

\bibitem[Doucet et~al.(2001)Doucet, De~Freitas, and Gordon]{Doucet:2001}
Doucet, A., De~Freitas, N., and Gordon, N.~J.
\newblock \emph{Sequential {Monte} {Carlo} {Methods} in {Practice}}.
\newblock Information {Science} and {Statistics}. New York, NY: Springer, New
  York, 2001.

\bibitem[Gerber et~al.(2019)Gerber, Chopin, and Whiteley]{gerber2019negative}
Gerber, M., Chopin, N., and Whiteley, N.
\newblock Negative association, ordering and convergence of resampling methods.
\newblock \emph{The Annals of Statistics}, 47\penalty0 (4):\penalty0
  2236--2260, 2019.

\bibitem[Geyer(1991)]{geyer1991markov}
Geyer, C.
\newblock {Markov chain Monte Carlo maximum likelihood}.
\newblock In \emph{Computing science and statistics: Proceedings of 23rd
  Symposium on the Interface Interface Foundation, Fairfax Station, 1991}, pp.\
   156--163, 1991.

\bibitem[Guarniero et~al.(2017)Guarniero, Johansen, and
  Lee]{guarniero2017iterated}
Guarniero, P., Johansen, A.~M., and Lee, A.
\newblock The iterated auxiliary particle filter.
\newblock \emph{Journal of the American Statistical Association}, 112\penalty0
  (520):\penalty0 1636--1647, 2017.

\bibitem[Haussmann \& Pardoux(1986)Haussmann and Pardoux]{haussmann1986time}
Haussmann, U.~G. and Pardoux, E.
\newblock Time reversal of diffusions.
\newblock \emph{The Annals of Probability}, 14\penalty0 (3):\penalty0
  1188--1205, 1986.

\bibitem[Heng et~al.(2020)Heng, Bishop, Deligiannidis, and
  Doucet]{heng2020controlled}
Heng, J., Bishop, A.~N., Deligiannidis, G., and Doucet, A.
\newblock Controlled sequential {M}onte {C}arlo.
\newblock \emph{The Annals of Statistics}, 48\penalty0 (5):\penalty0
  2904--2929, 2020.

\bibitem[Ho et~al.(2020)Ho, Jain, and Abbeel]{ho2020denoising}
Ho, J., Jain, A., and Abbeel, P.
\newblock Denoising diffusion probabilistic models.
\newblock In \emph{Advances in Neural Information Processing Systems}, 2020.

\bibitem[Hoffman et~al.(2019)Hoffman, Sountsov, Dillon, Langmore, Tran, and
  Vasudevan]{hoffman2019neutra}
Hoffman, M., Sountsov, P., Dillon, J.~V., Langmore, I., Tran, D., and
  Vasudevan, S.
\newblock Neu{T}ra-lizing bad geometry in {H}amiltonian {M}onte {C}arlo using
  neural transport.
\newblock \emph{arXiv preprint arXiv:1903.03704}, 2019.

\bibitem[Huang et~al.(2024)Huang, Dong, Hao, Ma, and Zhang]{huang2023monte}
Huang, X., Dong, H., Hao, Y., Ma, Y., and Zhang, T.
\newblock Reverse diffusion {M}onte {C}arlo.
\newblock In \emph{International Conference on Learning Representations}, 2024.

\bibitem[Hyv{\"a}rinen(2005)]{Hyvarinen:2005a}
Hyv{\"a}rinen, A.
\newblock Estimation of non-normalized statistical models by score matching.
\newblock \emph{The Journal of Machine Learning Research}, 6:\penalty0
  695--709, 2005.

\bibitem[Kingma \& Ba(2015)Kingma and Ba]{adam}
Kingma, D. and Ba, J.
\newblock Adam: A method for stochastic optimization.
\newblock In \emph{International Conference on Learning Representations}, 2015.

\bibitem[Kloeden \& Platen(1992)Kloeden and Platen]{Kloeden1992numerical}
Kloeden, P.~E. and Platen, E.
\newblock \emph{Numerical Solution of Stochastic Differential Equations},
  volume~23 of \emph{Appl. Math. (N. Y.)}.
\newblock Berlin: Springer-Verlag, 1992.

\bibitem[Lai et~al.(2022)Lai, Takida, Murata, Uesaka, Mitsufuji, and
  Ermon]{lai2022regularizing}
Lai, C.-H., Takida, Y., Murata, N., Uesaka, T., Mitsufuji, Y., and Ermon, S.
\newblock Regularizing score-based models with score {F}okker-{P}lanck
  equations.
\newblock In \emph{NeurIPS Workshop on Score-Based Methods}, 2022.

\bibitem[Lawson et~al.(2022)Lawson, Ravent{\'o}s, and
  Linderman]{lawson2022sixo}
Lawson, D., Ravent{\'o}s, A., and Linderman, S.
\newblock {SIXO}: Smoothing inference with twisted objectives.
\newblock \emph{Advances in Neural Information Processing Systems}, 2022.

\bibitem[Liptser \& Shiryayev(1977)Liptser and Shiryayev]{liptser1977book}
Liptser, R.~S. and Shiryayev, A.~N.
\newblock \emph{Statistics of Random Processes. {I}. {General} theory.
  {Translated} by {A}. {B}. {Aries}}, volume~5 of \emph{Appl. Math. (N. Y.)}.
\newblock Springer, New York, 1977.

\bibitem[M{\'a}t{\'e} \& Fleuret(2023)M{\'a}t{\'e} and
  Fleuret]{mate2023learning}
M{\'a}t{\'e}, B. and Fleuret, F.
\newblock Learning deformation trajectories of {B}oltzmann densities.
\newblock \emph{Transactions on Machine Learning Research}, 2023.

\bibitem[Matthews et~al.(2022)Matthews, Arbel, Rezende, and
  Doucet]{Matthews2022}
Matthews, A. G. D.~G., Arbel, M., Rezende, D.~J., and Doucet, A.
\newblock Continual repeated annealed flow transport {M}onte {C}arlo.
\newblock In \emph{International Conference on Machine Learning}, 2022.

\bibitem[McDonald \& Barron(2022)McDonald and Barron]{mcdonald2022proposal}
McDonald, C.~J. and Barron, A.~R.
\newblock Proposal of a score based approach to sampling using {M}onte {C}arlo
  estimation of score and oracle access to target density.
\newblock In \emph{NeurIPS Workshop on Score-Based Methods}, 2022.

\bibitem[Midgley et~al.(2023)Midgley, Stimper, Simm, Sch{\"o}lk~opf, and
  Hern{\'a}ndez-Lobato]{midgley2022flow}
Midgley, L.~I., Stimper, V., Simm, G.~N., Sch{\"o}lk~opf, B., and
  Hern{\'a}ndez-Lobato, J.~M.
\newblock Flow annealed importance sampling bootstrap.
\newblock In \emph{International Conference on Learning Representations}, 2023.

\bibitem[Mira et~al.(2013)Mira, Solgi, and Imparato]{mira2013zerovariance}
Mira, A., Solgi, R., and Imparato, D.
\newblock Zero variance {Markov} chain {Monte} {Carlo} for {Bayesian}
  estimators.
\newblock \emph{Statistics and Computing}, 23\penalty0 (5):\penalty0 653--662,
  2013.

\bibitem[M{\o}ller et~al.(1998)M{\o}ller, Syversveen, and
  Waagepetersen]{Moller:1998}
M{\o}ller, J., Syversveen, A.~R., and Waagepetersen, R.~P.
\newblock Log {G}aussian {C}ox processes.
\newblock \emph{Scandinavian Journal of Statistics}, 25\penalty0 (3):\penalty0
  451--482, 1998.

\bibitem[Neal(2001)]{neal2001annealed}
Neal, R.~M.
\newblock Annealed importance sampling.
\newblock \emph{Statistics and Computing}, 11\penalty0 (2):\penalty0 125--139,
  2001.

\bibitem[Neal(2003)]{Neal:2003}
Neal, R.~M.
\newblock Slice sampling.
\newblock \emph{The Annals of Statistics}, 31:\penalty0 705--767, 06 2003.

\bibitem[Nichol \& Dhariwal(2021)Nichol and Dhariwal]{nichol2021improved}
Nichol, A.~Q. and Dhariwal, P.
\newblock Improved denoising diffusion probabilistic models.
\newblock In \emph{International Conference on Machine Learning}, 2021.

\bibitem[Richter et~al.(2024)Richter, Berner, and Liu]{richter2023improved}
Richter, L., Berner, J., and Liu, G.-H.
\newblock Improved sampling via learned diffusions.
\newblock In \emph{International Conference on Learning Representations}, 2024.

\bibitem[Salimans \& Ho(2021)Salimans and Ho]{salimans2021should}
Salimans, T. and Ho, J.
\newblock Should {EBM}s model the energy or the score?
\newblock In \emph{Energy Based Models Workshop-ICLR 2021}, 2021.

\bibitem[Song et~al.(2023)Song, Vahdat, Mardani, and
  Kautz]{song2023pseudoinverseguided}
Song, J., Vahdat, A., Mardani, M., and Kautz, J.
\newblock Pseudoinverse-guided diffusion models for inverse problems.
\newblock In \emph{International Conference on Learning Representations}, 2023.

\bibitem[Song \& Ermon(2019)Song and Ermon]{song2019generative}
Song, Y. and Ermon, S.
\newblock Generative modeling by estimating gradients of the data distribution.
\newblock \emph{Advances in Neural Information Processing Systems}, 2019.

\bibitem[Song et~al.(2021)Song, Sohl-Dickstein, Kingma, Kumar, Ermon, and
  Poole]{song2020score}
Song, Y., Sohl-Dickstein, J., Kingma, D.~P., Kumar, A., Ermon, S., and Poole,
  B.
\newblock Score-based generative modeling through stochastic differential
  equations.
\newblock In \emph{International Conference on Learning Representations}, 2021.

\bibitem[Sountsov et~al.(2020)Sountsov, Radul, and
  contributors]{inferencegym2020}
Sountsov, P., Radul, A., and contributors.
\newblock Inference gym, 2020.
\newblock URL \url{https://pypi.org/project/inference_gym}.

\bibitem[Syed et~al.(2022)Syed, Bouchard-C{\^o}t{\'e}, Deligiannidis, and
  Doucet]{syed2022non}
Syed, S., Bouchard-C{\^o}t{\'e}, A., Deligiannidis, G., and Doucet, A.
\newblock Non-reversible parallel tempering: A scalable highly parallel {MCMC}
  scheme.
\newblock \emph{Journal of the Royal Statistical Society Series B}, 84\penalty0
  (2):\penalty0 321--350, 2022.

\bibitem[Tawn et~al.(2020)Tawn, Roberts, and Rosenthal]{tawn2020weight}
Tawn, N.~G., Roberts, G.~O., and Rosenthal, J.~S.
\newblock Weight-preserving simulated tempering.
\newblock \emph{Statistics and Computing}, 30\penalty0 (1):\penalty0 27--41,
  2020.

\bibitem[Vargas et~al.(2023)Vargas, Grathwohl, and Doucet]{vargasDDSampler2023}
Vargas, F., Grathwohl, W., and Doucet, A.
\newblock Denoising diffusion samplers.
\newblock In \emph{International Conference on Learning Representations}, 2023.

\bibitem[Vincent(2011)]{vincent2011connection}
Vincent, P.
\newblock A connection between score matching and denoising autoencoders.
\newblock \emph{Neural Computation}, 23\penalty0 (7):\penalty0 1661--1674,
  2011.

\bibitem[Webber(2019)]{webber2019unifying}
Webber, R.~J.
\newblock Unifying sequential {M}onte {C}arlo with resampling matrices.
\newblock \emph{arXiv preprint arXiv:1903.12583}, 2019.

\bibitem[Woodard et~al.(2009)Woodard, Schmidler, and
  Huber]{woodard2009sufficient}
Woodard, D.~B., Schmidler, S.~C., and Huber, M.
\newblock Sufficient conditions for torpid mixing of parallel and simulated
  tempering.
\newblock \emph{Electronic Journal of Probability}, 14:\penalty0 780--804,
  2009.

\bibitem[Wu et~al.(2023)Wu, Trippe, Naesseth, Blei, and
  Cunningham]{wu2023practical2023}
Wu, L., Trippe, B.~L., Naesseth, C.~A., Blei, D., and Cunningham, J.~P.
\newblock Practical and asymptotically exact conditional sampling in diffusion
  models.
\newblock In \emph{Advances in Neural Information Processing Systems}, 2023.

\bibitem[Zhang et~al.(2024)Zhang, Chen, Liu, Courville, and
  Bengio]{zhang2023diffusion}
Zhang, D., Chen, R.~T., Liu, C.-H., Courville, A., and Bengio, Y.
\newblock Diffusion generative flow samplers: Improving learning signals
  through partial trajectory optimization.
\newblock In \emph{International Conference on Learning Representations}, 2024.

\bibitem[Zhang \& Chen(2022)Zhang and Chen]{zhangyongxinchen2021path}
Zhang, Q. and Chen, Y.
\newblock Path integral sampler: a stochastic control approach for sampling.
\newblock In \emph{International Conference on Learning Representations}, 2022.

\end{thebibliography}
\bibliographystyle{icml2024}

\newpage
\appendix
\onecolumn

\section*{Appendix}
The Appendix is organized as follows. In \Cref{apx:adaptive_resampling} we detail the PDDS algorithm when adaptive resampling is used. In Appendix~\ref{sec:webber_presentation} we reformulate the results of \citet{webber2019unifying} in terms of chi-squared divergences. In \Cref{apx:proofs}, we expose our proofs. Finally, in \Cref{apx:experimental} we present details and additional work relating to our experiments.

\section{Particle Denoising Diffusion Sampler with Adaptive Resampling}
\label{apx:adaptive_resampling}

\begin{algorithm}[H]
	\caption{Particle Denoising Diffusion Sampler with Adaptive Resampling}
	\label{algo:smc_dds_adaptive}
	\begin{algorithmic}
		\REQUIRE Schedule $(\beta_t)_{t \in [0,T]}$ as in~\eqref{eq:forward_diffusion}; Approximations $(\hat g_k)_{k=0}^K$ s.t. $\hat g_0=g_0, \hat g_K = 1$;
		Number of particles $N$; ESS resampling threshold $\alpha$
		\STATE Sample $X_K^i \overset{\mathrm{iid}}{\sim} \mathcal N(0, \mathrm{I})$ for $i \in [N]$
		\STATE Set $\hat {\mathcal Z}_K \gets 1$ and $\omega_K^i \gets 1/N$ for $i \in [N]$
		\FOR{$k = K-1, \ldots, 0$}
		\STATE \underline{Move}. Sample $\tilde X_k^i \sim \hat \pi(\cdot|X_{k+1}^i)$ for $i \in [N]$ (see~\eqref{eq:smc_proposal})
		\STATE \underline{Weight}. $\omega_k^i \gets \omega_k^i~ \omega_k(\tilde X_k^i, X_{k+1}^i)$ for $i \in [N]$ (see~\eqref{eq:importanceweights})
		\STATE Set $\hmc_k \gets \hmc_{k+1} \times \frac 1N \sum_{i\in[N]} \omega_k^i$
		\STATE Normalize $\omega_k^i \gets \omega_k^i/\sum_{j\in[N]} \omega_k^j$
		\STATE \underline{Resample and MCMC}.
		\IF{$(\sum_{i \in [N]} (\omega_k^i)^2)^{-1} < \alpha N$}
		\STATE $X_k^{1:N} \gets \operatorname{resample}(\tilde X_k^{1:N}, \omega_k^{1:N})$ (see Section~\ref{sec:algo_settings})
		\STATE (Optional). Sample $X_k^i \gets \mathfrak M_k(X_k^i, \cdot)$ for $i\in[N]$ using a $\hat \pi_k$-invariant MCMC kernel.
		\STATE Reset $\omega_k^{1:N} \gets 1/N$
		\ELSE
		\STATE $X_k^{1:N} \gets \tilde{X}_k^{1:N}$
		\ENDIF
		\ENDFOR
		\ENSURE Particle estimates $\hat \pi^N = \frac 1N \sum_{i\in[N]} \delta_{X_0^i}$ of $\pi$ and $\hmc^N_0$ of $\mathcal Z$
	\end{algorithmic}
\end{algorithm}

\section{Asymptotic Error Formulae for SMC}
\label{sec:webber_presentation}
The results of \citet{webber2019unifying} play a fundamental role in our work. Compared to traditional SMC literature, they put more stress on the normalizing constant estimate, have results for different resampling schemes, and do not require weight boundedness.

In this section, we present these results using the language of chi-squared divergence. This gives alternative expressions which are easier to manipulate in our context.
\subsection{Chi-squared divergence}
For two probability distributions $p$ and $q$ such that $q \ll p$, the chi-squared divergence of $q$ with respect to $p$ is defined as
\begin{equation*}
	\chisquared qp = \int_{\mathcal X} p(\dd x) \ps{\pr{\frac{\dd q}{\dd p}}^2(x) - 1}.
\end{equation*}
We state without proof two simple properties of this divergence.
\begin{lemma}
	\label{lem:chisqvar}
	Let $g$ be a nonnegative function from $\mathcal X$ to $\mathbb R$ such that $p(g):=\E_p[g(X)] < \infty$. Define the probability distribution $\pi$ by $\pi(\dd x) \propto p(x)g(x)$. Then
	\begin{equation*}
		\Var_p(g) = \pr{p(g)}^2 \chisquared \pi p.
	\end{equation*}
\end{lemma}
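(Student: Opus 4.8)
\textbf{Proof plan for Lemma~\ref{lem:chisqvar}.}
The plan is to unfold both sides directly from the definitions and show they coincide. First I would write out the chi-squared divergence $\chisquared{\pi}{p}$ using the stated definition, which requires the Radon--Nikodym derivative $\frac{\rmd \pi}{\rmd p}$. Since $\pi(\rmd x) \propto p(x) g(x)$ with normalizing constant $p(g) = \E_p[g(X)]$, the density is precisely $\pi(x) = p(x) g(x)/p(g)$, so that $\frac{\rmd \pi}{\rmd p}(x) = g(x)/p(g)$. Substituting this into the definition gives
\begin{equation*}
	\chisquared{\pi}{p} = \int_{\X} p(\rmd x)\ps{\frac{g(x)^2}{p(g)^2} - 1} = \frac{\E_p[g(X)^2]}{p(g)^2} - 1.
\end{equation*}

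Next I would expand the right-hand side. Multiplying through by $\pr{p(g)}^2$ yields $\pr{p(g)}^2 \chisquared{\pi}{p} = \E_p[g(X)^2] - \pr{p(g)}^2 = \E_p[g(X)^2] - \pr{\E_p[g(X)]}^2$, which is exactly $\Var_p(g)$ by the standard variance identity. This closes the argument.

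There is essentially no hard part here; this is a one-line computation once the Radon--Nikodym derivative is identified. The only point requiring a small amount of care is the measurability and integrability bookkeeping: one needs $p(g) < \infty$ (assumed) to ensure $\pi$ is a well-defined probability measure, and one should note that $\chisquared{\pi}{p}$ may be $+\infty$ if $\E_p[g^2] = \infty$, in which case the identity still holds with both sides infinite (consistent with $\Var_p(g) = +\infty$). I would remark briefly that the absolute continuity $\pi \ll p$ needed for the divergence to be defined is immediate since $\pi$ has a density with respect to $p$.
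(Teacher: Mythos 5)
Your proof is correct, and it is exactly the intended argument: the paper states Lemma~\ref{lem:chisqvar} without proof (``We state without proof two simple properties of this divergence''), so there is nothing to diverge from --- identifying $\frac{\rmd\pi}{\rmd p} = g/p(g)$, substituting into the definition of $\chisquared{\pi}{p}$, and invoking $\Var_p(g) = \E_p[g^2] - \pr{\E_p[g]}^2$ is the one-line computation the authors had in mind. Your handling of the degenerate cases is also sound; the only further pedantry one could add is that $p(g) > 0$ is implicitly required for $\pi$ to be well defined at all, which the lemma's statement tacitly assumes.
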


\begin{lemma}
	\label{lem:chisq_chainrule}
	The chi-squared divergence between two probability distributions $p$ and $q$ on $\mathcal X \times \mathcal Y$ satisfies the decomposition
	\begin{equation*}
		\chisquared{q(\dd x, \dd y)}{p(\dd x, \dd y)} = \chisquared{q(\dd x)}{p(\dd x)} + \int p(\dd x) \pr{\frac{\dd q}{\dd q}}^2(x) \chisquared{q(\dd y|x)}{p(\dd y|x)}.
	\end{equation*}
\end{lemma}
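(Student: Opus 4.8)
The plan is to reduce everything to a factorization of Radon--Nikodym densities followed by a single application of Fubini--Tonelli. Throughout I write $r(x,y) := \frac{\dd q}{\dd p}(x,y)$ for the joint density, $s(x) := \frac{\dd q}{\dd p}(x)$ for the density of the $x$-marginals, and $t(x,y) := \frac{\dd q(\cdot|x)}{\dd p(\cdot|x)}(y)$ for the density of the conditionals. The starting point is the observation that, since $q \ll p$ on $\mathcal X \times \mathcal Y$, absolute continuity is inherited by the $x$-marginals and, for $p$-almost every $x$, by the conditionals; the chain rule for Radon--Nikodym derivatives under disintegration then yields the factorization $r(x,y) = s(x)\, t(x,y)$ for $p$-almost every $(x,y)$.

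Given this factorization, the computation is direct. First I would expand the definition of the joint divergence, $\chisquared{q(\dd x, \dd y)}{p(\dd x, \dd y)} = \int p(\dd x, \dd y)\, r(x,y)^2 - 1$, substitute $r = s\,t$, and disintegrate $p(\dd x, \dd y) = p(\dd x)\, p(\dd y|x)$ so that by Tonelli
\begin{equation*}
	\int p(\dd x, \dd y)\, r(x,y)^2 = \int p(\dd x)\, s(x)^2 \ps{\int p(\dd y|x)\, t(x,y)^2}.
\end{equation*}
Then I would recognize the inner integral, straight from the definition of the conditional chi-squared divergence, as $\int p(\dd y|x)\, t(x,y)^2 = \chisquared{q(\dd y|x)}{p(\dd y|x)} + 1$.

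Substituting this back splits the expression into two pieces: the term $\int p(\dd x)\, s(x)^2\, \chisquared{q(\dd y|x)}{p(\dd y|x)}$, which is exactly the integral claimed in the identity, and the leftover $\int p(\dd x)\, s(x)^2 = \chisquared{q(\dd x)}{p(\dd x)} + 1$. Adding the $+1$ and subtracting the global $-1$ cancels, leaving precisely $\chisquared{q(\dd x)}{p(\dd x)} + \int p(\dd x) \pr{\frac{\dd q}{\dd p}}^2(x)\, \chisquared{q(\dd y|x)}{p(\dd y|x)}$, as required. (I note in passing that the statement's $\pr{\frac{\dd q}{\dd q}}^2(x)$ is a typo for $\pr{\frac{\dd q}{\dd p}}^2(x)$, i.e. $s(x)^2$.)

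The only genuinely delicate points are justifying the density factorization $r = s\,t$ and the interchange of the order of integration; both rest on the disintegration theorem together with nonnegativity of the integrand, so that Tonelli applies without any integrability caveats and the identity holds in $[0,\infty]$ even when the divergences are infinite. I expect this to be the main obstacle only in the sense of carefully stating the measure-theoretic hypotheses, since the algebraic manipulation itself is entirely routine.
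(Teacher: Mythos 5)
Your proof is correct: the paper states Lemma~\ref{lem:chisq_chainrule} \emph{without proof}, and your argument --- factorizing $\frac{\dd q}{\dd p}(x,y) = \frac{\dd q}{\dd p}(x)\,\frac{\dd q(\cdot|x)}{\dd p(\cdot|x)}(y)$ via disintegration, applying Tonelli to the nonnegative integrand, and recognizing the inner integral as $\chisquared{q(\dd y|x)}{p(\dd y|x)} + 1$ --- is precisely the standard computation the authors leave implicit, valid in $[0,\infty]$ with the convention $0\cdot\infty = 0$ on the set where the marginal density vanishes (where the conditional divergence need not even be defined). You are also right that $\pr{\frac{\dd q}{\dd q}}^2(x)$ in the statement is a typo for $\pr{\frac{\dd q}{\dd p}}^2(x)$, as confirmed by how the lemma is invoked in the decomposition of $\sigma^2_{\mathrm{mult}}$ in Theorem~\ref{thm:webber_restated}, where the squared factor is the marginal ratio $\pr{\frac{\Q_T(\dd x_{t-1})}{\Q_{t-1}(\dd x_{t-1})}}^2$.
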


\subsection{Generic Feynman-Kac formula and the associated SMC algorithm}
\label{sec:fk_general}
\newcommand{\M}{\mathbb M}
Let
\begin{equation*}
	\M(\dd x_{0:T}) = \M(\dd x_0) \M(\dd x_1|x_0) \ldots \M(\dd x_T|x_{T-1})
\end{equation*} be a Markov measure defined on $\mathcal X_0 \times \ldots \times \mathcal X_T$. Let $G_0(x_0)$, $G_1(x_0, x_1)$, $\ldots$, $G_T(x_{T-1}, x_T)$ be strictly positive functions. For any $t<T$, assume that there exists $Z_t>0$ such that
\begin{equation*}
	\Q_t(\dd x_{0:T}) = \frac{1}{Z_t} \M_0(\dd x_0) G_0(x_0) \M_1(\dd x_1|x_0) G_1(x_0, x_1) \ldots \M_t(\dd x_t|x_{t-1}) G_t(x_t) \M(\dd x_{t+1:T}|x_t)
\end{equation*}
is a probability measure. Then $\Q_t(\dd x_{0:t})$ is called the Feynman-Kac model associated with the Markov measure $\M(\dd x_{0:t})$ and the weight functions $(G_s)_{s\leq t}$. Given a number of particles $N$, Algorithm~\ref{algo:smc_generic} approximates $\Q_T(\dd x_T)$ and the normalizing constant $Z_T$. It is called the SMC algorithm associated to the Feynman-Kac model $\Q_T$.

\begin{algorithm}
	\begin{algorithmic}
		\caption{Generic SMC algorithm}
		\label{algo:smc_generic}
		\REQUIRE Markov kernels $\M(\dd x_t|x_{t-1})$; Functions $G_t(x_{t-1}, x_t)$; Number of particles $N$
		\STATE Sample $X_0^{1:N} \simiid \M_0(\dd x_0)$
		\STATE Set $\omega_0^n = G_0(X_0^n)$ and $Z_0^N = \frac 1N \sum \omega_0^n$
		\STATE Normalize $\omega_0^n \gets \omega_0^n/\sum_m \omega_0^m$
		\FOR{$t=1, \ldots, T$}
		\STATE Resample particles $\tilde X_{t-1}^{1:N}$ among particles $X_{t-1}^{1:N}$ with weights $\omega_{t-1}^{1:N}$
		\STATE Sample $X_t^n \sim \M_t(\dd x_t|\tilde X_{t-1}^n)$
		\STATE Set $\omega_t^n = G_t(\tilde X_{t-1}^n, X_t^n)$ and $Z_t^n = Z_{t-1}^n \frac 1N \sum \omega_t^n$
		\STATE Normalize $\omega_t^n \gets \omega_t^n/\sum \omega_t^m$
		\ENDFOR
		\ENSURE Empirical measure $\sum \omega_T^n \delta_{X_T^n}$ approximating $\Q_T(\dd x_T)$ and estimate $Z_T^N$ approximating $Z_T$
	\end{algorithmic}
\end{algorithm}

\subsection{Sorted stratified resampling schemes}
The resampling step of Algorithm~\ref{algo:smc_generic} can be done in a number of different ways. It is well known that multinomial resampling should be avoided. Practitioners often rely on alternative schemes, in particular systematic resampling. However, the theoretical properties of these schemes are less well-studied.

To investigate theoretically the asymptotic error of PDDS when the discretization step tends to zero, we consider the stratified resampling scheme where particles are sorted by a certain coordinate $\theta:\mathbb R^d \to \mathbb R$, see Algorithm~\ref{algo:sorted_stratified}.

\begin{algorithm}
	\begin{algorithmic}
		\caption{Generic sorted stratified resampling}
		\label{algo:sorted_stratified}
		\REQUIRE Particles $X^{1:N}$ in $\mathbb R^d$ with weights $W^{1:N}$; Sorting function $\theta: \mathbb R^d \to \mathbb R$
		\STATE Sort the particles $X^{1:N}$ such that $\theta(X^{s_1}) \leq \ldots \leq \theta(X^{s_N})$ for a permutation $s_{1:N}$ of $\px{1, 2, \ldots, N}$
		\FOR{$n = 1, \ldots, N$}
		\STATE Simulate $U^n \sim \operatorname{Uniform}[0, 1]$
		\STATE Find the index $k_n$ such that $\sum_{i=1}^{k_n-1} W^{s_i} \leq \frac{n-1+U^n}{N} < \sum_{i=1}^{k_n} W^{s_i}$
		\STATE Set $\tilde X_n \gets X^{s_{k_n}}$
		\ENDFOR
		\ENSURE Resampled particles $\tilde X^{1:N}$
	\end{algorithmic}
\end{algorithm}

\newcommand{\vqmo}{\Var_{\Q_{t-1}}}
\newcommand{\eqmo}{\E_{\Q_{t-1}}}
Depending on the chosen coordinate $\theta$, sorted stratified resampling can significantly reduce the asymptotic error of the particle filter. Precise formulations is given in \citet[Theorem 3.2]{webber2019unifying}. In a nutshell, the variance of $Z_T^N/Z_T$ comes from two sources: the mismatch between the PF proposal $\M_t(\dd x_t|x_{t-1})$ and the target law $\Q_T(\dd x_t|x_{t-1})$; and the error caused by the resampling step. The first source is common to all resampling methods. The magnitude of the second source for multinomial resampling is $\sum_{t=1}^T \vqmo\pr{\bar h_{t-1}(X_{t-1})}$ where
\[ \bar h_{t-1}(x_{t-1}) = \frac{\Q_T(\dd x_{t-1})}{\Q_{t-1}(\dd x_{t-1})} = \frac{\int
	\prod_{s=t}^T \M_s(\dd x_s|x_{s-1}) G_s(x_{s-1}, x_s) \dd x_{t:T}
	}{\int
	\Q_{t-1}(\dd x'_{t-1})\prod_{s=t}^T \M_s(\dd x_s|x_{s-1}) G_s(x_{s-1}, x_s)\dd x'_{t-1} \dd x_{t:T}
	}. \]
Write the resampling error for multinomial resampling as
\[ \sum_{t=1}^T \vqmo\pr{\bar h_{t-1}(X_{t-1})} = \sum_{t=1}^T \eqmo\ps{\vqmo\pr{ \bar h_{t-1}(X_{t-1}) | \theta(X_{t-1}) }}   + \sum_{t=1}^T \vqmo\pr{\eqmo\ps{ \bar h_{t-1}(X_{t-1}) | \theta(X_{t-1}) }}. \]
Then the error for stratified resampling when particles are sorted by $\theta: \mathbb R^d \to \mathbb R$ contains only the first term and not the second one. Thus ideally we would like to choose $\theta = \bar h_{t-1}$ so that
\begin{equation}
	\label{eq:zero_rs_variance}
	\Var\pr{\bar h_{t-1}(X_{t-1}) | \theta(X_{t-1})} = 0.
\end{equation}
While the ideal function $\bar h_{t-1}$ is intractable, there are more generic choices of $\theta$ which guarantee~\eqref{eq:zero_rs_variance}. If $\theta:\mathbb R^d \to \mathbb R$ is an injective map then~\eqref{eq:zero_rs_variance} automatically holds. Such maps usually arise as pseudo-inverses of space-filling curves.

Following~\citet{gerber2019negative}, we take $\theta$ to be the pseudo-inverse of the Hilbert curve, of which the existence is given by their Proposition 2. More precisely, that proposition gives an injective map from $[0,1]^d$ to $[0,1]$; so to get an injection from $\mathbb R^d$ to $[0,1]$ one might first apply any injection from $\mathbb R^d$ to $[0,1]^d$. In practice, numerical implementations are available to sort particles with the Hilbert curve, for instance the \verb+hilbert+ module of the Python package \verb+particles+ .

We point out that we consider this particular resampling strategy mainly for theoretical convenience. In our experiments, adaptive systematic resampling works well, but as we mentioned earlier, very little is known about their theoretical properties.

\subsection{Asymptotic error}
We recall the following definition from \citet{webber2019unifying}.
\begin{definition}
	The notation $|Y_n - c| \lesssim U_n$ means that there exists a sequence of sets $(B_n)$ such that $\P(B_n) \to 1$ and
	\begin{equation*}
		\limsup_{n \to \infty} \E\ps{\mathbbm{1}_{B_n} \abs{\frac{Y_n-c}{U_n}}^2} \leq 1.
	\end{equation*}
\end{definition}
We are now ready to restate parts of Theorem 3.2 and Example 3.4 of \citet{webber2019unifying} in terms of chi-squared divergences.

\newcommand{\sigmasqmult}{\sigma^2_{\mathrm{mult}}}
\newcommand{\sigmasqsort}{\sigma^2_{\mathrm{sort}}}
\newcommand{\lcfirstt}{\chisquared{\Q_T(\dd x_0)}{\M_0(\dd x_0)}}
\begin{theorem}
	\label{thm:webber_restated}
	Given the Feynman-Kac model defined in Section~\ref{sec:fk_general}, assume that $\chisquared{\Q_T(\dd x_0)}{\M_0(\dd x_0)} < \infty$ and $\chisquared{\Q_T(\dd x_t)}{\Q_t(\dd x_t)} < \infty$, $\forall t$. Then $\sqrt N(Z_T^N/Z_T-1)$ is asymptotically normal with variance $\sigmasqmult$ if multinomial resampling is used; $\abs{\sqrt N (Z_T^N/Z_T-1)}^2 \lesssim \sigmasqsort$ if sorted resampling \citep{gerber2019negative} is used; with
	\begin{equation*}
		\begin{split}
			\sigmasqmult &= \lcfirstt + \sum_{t=1}^T \chisquared{\Q_T(\dd x_{t-1}, \dd x_t)}{\Q_{t-1}(\dd x_{t-1}) M_t(x_{t-1}, \dd x_t)} \\
			&= \underbrace{\lcfirstt + \sum_{t=1}^T \int \Q_{t-1}(\dd x_{t-1}) \pr{\frac{\Q_T(\dd x_{t-1})}{\Q_{t-1}(\dd x_{t-1})}}^2 \chisquared{\Q_T(\dd x_t|x_{t-1})}{\M_t(\dd x_t|x_{t-1})}}_{\sigmasqsort} +\\
			&+ \sum_{t=1}^T \chisquared{\Q_T(\dd x_{t-1})}{\Q_{t-1}(\dd x_{t-1})}
		\end{split}
	\end{equation*}
	where we have decomposed $\sigmasqmult$ using the chain rule (Lemma~\ref{lem:chisq_chainrule}).
\end{theorem}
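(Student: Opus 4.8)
The plan is to reduce everything to the SMC error formulae of \citet{webber2019unifying} and then perform a purely notational translation into chi-squared divergences. Concretely, their Theorem 3.2 expresses the asymptotic relative variance of $Z_T^N/Z_T$ as the sum of two contributions: a \emph{proposal-mismatch} part, common to every resampling scheme, collecting the discrepancy between each proposal $\M_t(\dd x_t|x_{t-1})$ and the Feynman-Kac conditional $\Q_T(\dd x_t|x_{t-1})$ together with the initial mismatch between $\Q_T(\dd x_0)$ and $\M_0(\dd x_0)$; and a \emph{resampling-error} part $\sum_{t=1}^T \vqmo(\bar h_{t-1}(X_{t-1}))$ governed by the variance of the future-density ratio $\bar h_{t-1} = \dd\Q_T(x_{t-1})/\dd\Q_{t-1}(x_{t-1})$. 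First I would quote these formulae in their original variance form, together with the decomposition $\vqmo(\bar h_{t-1}) = \eqmo[\vqmo(\bar h_{t-1}\mid\theta(X_{t-1}))] + \vqmo(\eqmo[\bar h_{t-1}\mid\theta(X_{t-1})])$ and the statement (their Example 3.4) that sorted stratified resampling with an injective sorting coordinate $\theta$, as in \citet{gerber2019negative}, retains only the first of these two terms.

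The core of the argument is the identification of each variance term with a chi-squared divergence. For the resampling error, the key observation is that $\bar h_{t-1}$ is exactly the Radon--Nikodym derivative $\dd\Q_T(x_{t-1})/\dd\Q_{t-1}(x_{t-1})$, which integrates to $1$ under $\Q_{t-1}$; hence, directly from the definition of the chi-squared divergence, $\vqmo(\bar h_{t-1}) = \eqmo[\bar h_{t-1}^2] - 1 = \chisquared{\Q_T(\dd x_{t-1})}{\Q_{t-1}(\dd x_{t-1})}$. The same device, or Lemma~\ref{lem:chisqvar} applied to the relevant incremental weight, recasts the initial and per-step proposal-mismatch contributions as $\lcfirstt$ and as $\int \Q_{t-1}(\dd x_{t-1}) \pr{\frac{\Q_T(\dd x_{t-1})}{\Q_{t-1}(\dd x_{t-1})}}^2 \chisquared{\Q_T(\dd x_t|x_{t-1})}{\M_t(\dd x_t|x_{t-1})}$ respectively. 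Summing the mismatch contributions together with the initial term gives precisely $\sigmasqsort$, while adding back the resampling-error terms $\sum_{t=1}^T \chisquared{\Q_T(\dd x_{t-1})}{\Q_{t-1}(\dd x_{t-1})}$ yields $\sigmasqmult$, establishing the second (underbraced) equality; the injectivity of the Hilbert-curve sorting function makes the inner conditional variance vanish, so that sorted resampling discards exactly these marginal terms.

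To obtain the first, compact expression for $\sigmasqmult$, I would apply the chi-squared chain rule, Lemma~\ref{lem:chisq_chainrule}, to each joint divergence $\chisquared{\Q_T(\dd x_{t-1}, \dd x_t)}{\Q_{t-1}(\dd x_{t-1}) M_t(x_{t-1}, \dd x_t)}$ with $x=x_{t-1}$, $y=x_t$, $q=\Q_T$ and $p(\dd x_{t-1},\dd x_t)=\Q_{t-1}(\dd x_{t-1}) M_t(x_{t-1},\dd x_t)$: here $p(\dd x_{t-1})=\Q_{t-1}(\dd x_{t-1})$ and $p(\dd x_t|x_{t-1})=M_t(x_{t-1},\dd x_t)$, so the marginal term produces $\chisquared{\Q_T(\dd x_{t-1})}{\Q_{t-1}(\dd x_{t-1})}$ and the weighted conditional term produces the integral displayed above, and summation reproduces exactly the second decomposition. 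Finiteness of every term, hence applicability of the central limit theorem and of the $\lesssim$ bound, follows from the two stated hypotheses $\lcfirstt<\infty$ and $\chisquared{\Q_T(\dd x_t)}{\Q_t(\dd x_t)}<\infty$ for all $t$. The hard part will not be any deep calculation but the careful bookkeeping needed to align Webber's Feynman-Kac normalization conventions and their precise definition of the per-step mismatch term with our conditional chi-squared expressions; once that correspondence is pinned down, every remaining step is a one-line consequence of the definition of $\chisquared{\cdot}{\cdot}$ and the two lemmas.
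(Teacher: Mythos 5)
Your proposal is correct and takes essentially the same route as the paper: both reduce the statement to Theorem 3.2 and Example 3.4 of \citet{webber2019unifying} and then translate each variance term into a chi-squared divergence via Lemmas~\ref{lem:chisqvar} and~\ref{lem:chisq_chainrule}, with the sorted-stratified claim following from the injectivity of the sorting coordinate killing the conditional variance. The only cosmetic difference is that you identify the resampling term directly through the normalized mean-one Radon--Nikodym derivative $\bar h_{t-1} = \Q_T(\dd x_{t-1})/\Q_{t-1}(\dd x_{t-1})$, whereas the paper's formal proof works with Webber's unnormalized $h_t$ and $\tilde G_t$ and absorbs the constants via the identity $Z_t \Q_t(h_t) = Z_T$ --- exactly the normalization bookkeeping you correctly flag as the only nontrivial step.
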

\begin{proof}
	The original formulation of Theorem 3.2 \citep{webber2019unifying} is written in terms of the following quantities
	\begin{align*}
		\tilde G_t & := \E_\M\ps{\prod_{s=0}^{t-1} G_s} G_t/\E\ps{\prod_{s=0}^t G_s}, \\
		h_t(x_t)   & := \CE{\prod_{s=t+1}^T G_s}{X_t=x_t}.
	\end{align*}
	To translate these notations into our case, note that $\tilde G_t = \Q_t(\dd x_{0:T})/\Q_{t-1}(\dd x_{0:T})$
	and thus
	\begin{equation*}
		\begin{split}
			\min_{c\in \mathbb R} \E_\M\ps{\prod_{s=0}^t \tilde G_s|h_t-c|^2} &= \min_{c\in \mathbb R} \E_{\Q_t}\ps{|h_t-c|^2} = \Var_{\Q_t}(h_t) = \ps{\Q_t(h_t)}^2 \chisquared{\Q_T(\dd x_t)}{\Q_t(\dd x_t)}
		\end{split}
	\end{equation*}
	using Lemma~\ref{lem:chisqvar}.
	Moreover,
	\begin{equation*}
		\Var_\M(G_{t+1}(X_t, X_{t+1})h_{t+1}(X_{t+1})|X_t) = h_t^2(X_t) \chisquared{\Q_T(\dd x_{t+1}|x_t)}{\M_T(\dd x_{t+1}|x_t)}
	\end{equation*}
	and thus, using $h_t(x_t) = \frac{\Q_T(\dd x_t)}{\Q_t(\dd x_t)} \Q_t(h_t)$ we get
	\begin{equation*}
		\E\ps{\prod_{s=0}^t G_s \Var(G_{t+1}h_{t+1}|X_t)} = Z_t \Q_t(h_t)^2 \E_{\Q_t}\ps{\px{\frac{\Q_T(\dd x_t)}{\Q_t(\dd x_t)}(X_t)}^2 \chisquared{\Q_T(\dd x_{t+1}|X_t)}{\M_T(\dd x_{t+1}|X_t}}.
	\end{equation*}
	The identity $Z_t \Q_t(h_t) = Z_T$ helps conclude the proof.
\end{proof}

\section{Proofs}
\label{apx:proofs}

\subsection{Proof of Proposition~\ref{lemma:score}}
\label{proof_lemma_score}
\begin{proof}
	Write
	\newcommand{\inverseZ}{\frac{1}{\mathcal Z}}
	\begin{equation}
		\begin{split}
			\pi_t(x_t) &= \int \pi_0(x_0) p(x_t|x_0) \dd x_0 = \inverseZ \int g_0(x_0) p_0(x_0) p(x_t|x_0) \dd x_0 \\
			&= \inverseZ \int g_0(x_0) p_0(x_t) p(x_0|x_t) \dd x_0 \\
			&= \inverseZ p_0(x_t) g_t(x_t).
		\end{split}
	\end{equation}
	The proof is concluded by taking the log gradient of the obtained identity with respect to $x_t$.
\end{proof}

\subsection{Proof of Lemma~\ref{lem:error_guidance}}
\label{proof_lem_error_guidance}
We first state the following elementary lemma on the solution of a linear SDE \citep[Chapter 4.2]{Kloeden1992numerical}.
\begin{lemma}
	\label{lem:sde_exact}
	Let $a: [0,T] \to \mathbb R$ and $c:[0, T] \to \mathbb R$ be two continuous functions. Put $\Phi_t= \exp\int_0^t a_s \dd s$, $\forall t \in [0, T]$. Then the solution to
	\begin{equation}
		\label{eq:sde_general}
		\dd Z_t = (a_t Z_t + c_t) \dd t+ \sqrt 2 \dd W_t
	\end{equation}
	is
	\begin{equation}
		Z_t = \Phi_t\pr{Z_0 + \int_0^t \Phi_s^{-1} c_s \dd s + \int_0^t \Phi_s^{-1} \sqrt 2 \dd W_s}.
	\end{equation}
\end{lemma}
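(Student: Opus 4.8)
The plan is to solve this linear SDE via the classical integrating-factor technique, treating $\Phi_t^{-1}$ as the integrating factor exactly as one would for a first-order linear ODE. Since $a$ is continuous, $\Phi_t = \exp\int_0^t a_s \dd s$ is deterministic and continuously differentiable on $[0,T]$, with $\Phi_0 = 1$ and $\dd(\Phi_t^{-1}) = -a_t \Phi_t^{-1} \dd t$. The key structural observation is that $\Phi_t^{-1}$ has finite variation and no martingale part, so in the Itô product rule for the transformed process $Y_t := \Phi_t^{-1} Z_t$ there is no quadratic covariation term to account for.

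First I would compute, using the product rule and then substituting the dynamics of $Z_t$,
\begin{align*}
	\dd Y_t &= \Phi_t^{-1}\dd Z_t + Z_t\, \dd(\Phi_t^{-1}) \\
	&= \Phi_t^{-1}\ps{(a_t Z_t + c_t)\dd t + \sqrt 2\, \dd W_t} - a_t \Phi_t^{-1} Z_t \dd t \\
	&= \Phi_t^{-1} c_t \dd t + \sqrt 2\, \Phi_t^{-1}\dd W_t,
\end{align*}
where the two $a_t \Phi_t^{-1} Z_t \dd t$ contributions cancel. Integrating from $0$ to $t$ and using $Y_0 = \Phi_0^{-1} Z_0 = Z_0$ then yields $Y_t = Z_0 + \int_0^t \Phi_s^{-1} c_s \dd s + \int_0^t \sqrt 2\, \Phi_s^{-1}\dd W_s$. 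Multiplying through by $\Phi_t$ and recalling $Z_t = \Phi_t Y_t$ gives precisely the claimed formula. Equivalently, one could instead apply Itô's formula directly to the candidate right-hand side and verify that it satisfies \eqref{eq:sde_general}, which sidesteps having to guess the integrating factor.

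There is essentially no deep obstacle here; the real work is in checking the regularity conditions that make each manipulation legitimate. I would verify that the stochastic integral $\int_0^t \Phi_s^{-1}\dd W_s$ is well defined, which follows because $s \mapsto \Phi_s^{-1}$ is continuous and hence bounded on the compact interval $[0,T]$, so that $\int_0^T \Phi_s^{-2}\dd s < \infty$. Finally, uniqueness of the solution --- needed so that the process we construct is genuinely \emph{the} solution --- follows from standard strong existence and uniqueness theory for SDEs with affine (hence globally Lipschitz, linear-growth) coefficients on the bounded interval $[0,T]$.
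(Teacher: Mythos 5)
Your proof is correct and is exactly the classical integrating-factor argument for linear SDEs; the paper does not prove this lemma itself but simply cites \citet[Chapter 4.2]{Kloeden1992numerical}, where the same construction via $Y_t = \Phi_t^{-1} Z_t$ appears. Your additional checks (vanishing quadratic covariation since $\Phi^{-1}$ is deterministic of finite variation, integrability of $\Phi_s^{-2}$ on $[0,T]$, and uniqueness from globally Lipschitz affine coefficients) are all the regularity that is needed.
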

Using It\^o isometry, we get the following straightforward corollary.
\begin{corollary}
	\label{cor:exact_mean_cov}
	Under the setting of Lemma~\ref{lem:sde_exact}, if $Z_0 \sim \mathcal N(0, 1)$, then
	\begin{align}
		\E[Z_t] = \Phi_t \int_0^t \Phi_s^{-1} c_s \dd s,\quad \operatorname{Var}(Z_t) = \Phi_t^2 + 2\Phi_t^2\int_0^t \Phi_s^{-2} \dd s.
	\end{align}
\end{corollary}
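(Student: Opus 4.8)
The plan is to apply Lemma~\ref{lem:sde_exact} directly with the given $a_t$ and $c_t$, which yields the closed-form solution
\begin{equation*}
	Z_t = \Phi_t\pr{Z_0 + \int_0^t \Phi_s^{-1} c_s \dd s + \sqrt 2 \int_0^t \Phi_s^{-1} \dd W_s}.
\end{equation*}
Both desired formulas then follow simply by taking the expectation and the variance of this expression, exploiting that $\Phi_t$ is deterministic and that $Z_0 \sim \mathcal N(0,1)$ is independent of the driving Brownian motion $(W_s)_{s\in[0,t]}$.

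For the mean, I would apply $\E[\cdot]$ to both sides. The initial term contributes $\E[Z_0]=0$; the integral $\int_0^t \Phi_s^{-1} c_s \dd s$ is deterministic and passes through unchanged; and the Itô integral $\int_0^t \Phi_s^{-1}\dd W_s$ is a martingale started at zero, hence has zero expectation. Factoring out the deterministic $\Phi_t$ gives $\E[Z_t]=\Phi_t\int_0^t \Phi_s^{-1} c_s \dd s$.

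For the variance, I would first center the solution: since the only random ingredients are $Z_0$ and the stochastic integral, $Z_t - \E[Z_t] = \Phi_t\pr{Z_0 + \sqrt 2\int_0^t \Phi_s^{-1}\dd W_s}$. Because $Z_0$ is independent of $W$ and both summands have zero mean, the cross term vanishes and the variance splits as $\Var(Z_t) = \Phi_t^2\pr{\Var(Z_0) + 2\Var\!\pr{\int_0^t \Phi_s^{-1}\dd W_s}}$. Plugging in $\Var(Z_0)=1$ and invoking the Itô isometry $\E\ps{\pr{\int_0^t \Phi_s^{-1}\dd W_s}^2}=\int_0^t \Phi_s^{-2}\dd s$ produces $\Var(Z_t)=\Phi_t^2 + 2\Phi_t^2\int_0^t \Phi_s^{-2}\dd s$, as claimed.

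The computation is essentially mechanical, so I do not anticipate a substantial obstacle; the only points requiring (routine) care are justifying that $\int_0^t \Phi_s^{-1}\dd W_s$ is a genuine square-integrable martingale, so that both its zero-mean property and the isometry apply, which holds because $s\mapsto \Phi_s^{-1}$ is continuous and hence bounded on the compact interval $[0,t]$, and the use of independence of $Z_0$ and $W$ to eliminate the cross term in the variance decomposition.
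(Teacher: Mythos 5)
Your proof is correct and follows exactly the route the paper intends: the paper derives this corollary in one line from Lemma~\ref{lem:sde_exact} ``using It\^o isometry,'' which is precisely the computation you spell out (zero-mean martingale for the expectation, independence of $Z_0$ and $W$ plus the isometry for the variance). Your added care about square-integrability of the stochastic integral and the independence of $Z_0$ from the driving noise is routine but sound, and fills in exactly what the paper leaves implicit.
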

We are now ready to give the proof of Proposition~\ref{lem:error_guidance}.
\newcommand{\hatbar}[1]{\hat {\bar #1}}
\newcommand{\powertp}[1]{{#1}^{(T)}}
\begin{proof}[Proof of Proposition~\ref{lem:error_guidance}]
	Without loss of generality, we can assume that $\beta_t \equiv 1$. Indeed, putting $\hatbar g_t(x_t):= g_0(e^{-t} x_t)$ and defining $\powertp{\bar Z_t}$ by
	\begin{equation}
		\dd \powertp{\bar Z_t} = \ps{-\powertp{\bar Z_t} + 2 \gradlog \hatbar g_{T-t}(\powertp{\bar Z_t})} \dd t+ \sqrt 2 \dd \tilde B_t, \quad \powertp{\bar Z_0} \sim \mathcal N(0, 1),
	\end{equation}
	we see that $Z_{0:t}^{(T)}$ has the same law as $\bar Z^{\int_0^T \beta_s \dd s}_{\int_{T-t}^T \beta_s \dd s}$.

	We only consider the case $\sigma \neq 1$ here. The case $\sigma = 1$ can be treated using similar but simpler calculations; it can also be recovered by letting $\sigma \to 1$ in the expressions below.

	Since $\gradlog g_0(x_0) = \gradlog \pi(x_0) - \gradlog p_0(x_0) = -(x-\mu)/\sigma^2 + x$, Equation~\eqref{eq:naive_approximation_in_action} has the form~\eqref{eq:sde_general} with
	\begin{equation}
		a_t = -\ps{1 + 2e^{2(t-T)}(1/\sigma^2 - 1)}, \quad c_t = 2e^{t-T}\mu/\sigma^2.
	\end{equation}
	Tedious but standard calculations yield
	\begin{equation}
		\Phi_t = \exp{-t - e^{-2T}(1/\sigma^2 - 1)(e^{2t}-1)}
	\end{equation}
	and the integral of $t$-dependent terms in $\int_0^T \Phi_t^{-1} c_t \dd t$ is
	\begin{equation*}
		\int_0^T \exp{2t+(1/\sigma^2-1)e^{2(t-T)}} \dd t = \frac{1}{2e^{-2T}(1/\sigma^2 - 1)} \ps{\exp{1/\sigma^2 - 1} - \exp{e^{-2T}(1/\sigma^2 - 1)}}.
	\end{equation*}
	Using Corollary~\ref{cor:exact_mean_cov}, we have
	\begin{equation}
		\E[Z_T^{(T)}] = \frac{\mu}{1-\sigma^2} \ps{1 - \exp{(e^{-2T} - 1)(1/\sigma^2 - 1)}}
	\end{equation}
	and $\operatorname{Var}(Z_T^{(T)}) = v_{T, 1} + v_{T, 2}$, where
	\begin{align}
		v_{T, 1} & = \frac{1}{2(1/\sigma^2 - 1)} \ps{1 - \exp{-2(1/\sigma^2 - 1)(1 - e^{-2T})}}, \\
		v_{T, 2} & = \exp{-2T-2(1/\sigma^2 - 1)(1-e^{-2T})}.
	\end{align}
	The lemma is proved.
\end{proof}
\subsection{Proof of Propositions~\ref{prop:standard_smc} and~\ref{prop:smc_sorted}}
The propositions follow from a direct application of Theorem~\ref{thm:webber_restated} to the Feynman-Kac model
\begin{equation*}
	\Q_K(y_{0:K}) = \frac{1}{\mathcal Z} \M_K(y_{0:K}) G_0(y_0) \prod_{k=1}^K G_k(y_{k-1}, y_k)
\end{equation*}
where we have the correspondence $y_k = x_{K-k}$ and
\begin{align*}
	\M_K(y_{0:K})     & = \mathcal N(x_K|0, \operatorname{Id}) \hat \pi(x_{K-1}|x_K) \ldots \hat \pi(x_0|x_1) \\
	G_0(y_0)          & = \hat g_K(x_K)                                                                       \\
	G_k(y_{k-1}, y_k) & = \omega_{K-k}(x_{K-k}, x_{K-k+1}).
\end{align*}

\subsection{Proof of Proposition~\ref{prop:error_K_infinite}}
\label{sec:proof_error_K_inf}
We start by providing some intuition for the result. Repeat that $X_k$ is a shorthand for $X_{k\delta}$ where $\delta$ is the discretization step size. This convention only applies for $X_k$.

When $K$ is big ($\delta$ is small), we have, as established in ~\eqref{eq:approximate_proposal_derivation},
\begin{align*}
	\pi(x_k|x_{k+1})     & \approx \mathcal N(x_k; \sqrt{1-\alpha_{k+1}} x_{k+1} + \alpha_{k+1} \gradlog g_{k+1}(x_{k+1}), \alpha_{k+1}I), \\
	\hat\pi(x_k|x_{k+1}) & = \mathcal N(x_k;\sqrt{1-\alpha_{k+1}} x_{k+1} + \alpha_{k+1} \gradlog \hat g_{k+1}(x_{k+1}), \alpha_{k+1}I).
\end{align*}
Using the analytic formula for the chi-squared divergence between two Gaussians we get
\begin{align*}
	\chisquared{\pi(x_k|x_{k+1})}{\hat\pi(x_k|x_{k+1})} \approx e^{\alpha_{k+1} \norm{\gradlog g_{k+1} - \gradlog \hat g_{k+1}}^2(x_{k+1})} - 1 \approx 2\delta \norm{\gradlog g_{k+1} - \gradlog \hat g_{k+1}}^2(x_{k+1}).
\end{align*}
Thus
\begin{align*}
	\zeta_{K}^2 & = \chisquared{\pi_K}{\mathcal N(0, \operatorname{Id})} + \sum_k \int \frac{\pi_{k+1}(x_{k+1})^2}{\hat \pi_{k+1}(x_{k+1})} \chisquared{\pi(x_k|x_{k+1})}{\hat\pi(x_k|x_{k+1})} \dd x_{k+1}                      \\
	            & \approx \chisquared{\pi_K}{\mathcal N(0, \operatorname{Id})} + \sum_k \int \frac{\pi_{k+1}(x_{k+1})^2}{\hat \pi_{k+1}(x_{k+1})} 2\delta \norm{\gradlog g_{k+1} - \gradlog \hat g_{k+1}}^2(x_{k+1}) \dd x_{k+1} \\
	            & \approx \chisquared{\pi_T}{\mathcal N(0, \operatorname{Id})} + 2\int_0^T \int_{\mathcal X} \frac{\pi_t(x)^2}{\hat \pi_t(x)} \norm{\gradlog g_t(x) - \gradlog \hat g_t(x)}^2 \dd x \dd t.
\end{align*}
\subsubsection{Regularity conditions for Proposition~\ref{prop:error_K_infinite}}
\label{apx:regularity_conditions}
We assume that the sequence of distributions $\pi_t(\cdot)$ satisfy the following properties.
\begin{assumption}
	\label{asp:order1}
	There exists $M_1 > 0$ such that $\norm{\gradlog \pi_t(x_t)} \leq M_1(1 + \norm{x_t})$.
\end{assumption}
\begin{assumption}
	\label{asp:order23}
	There exist $M_2 > 0$, $M_3 > 0$, $\alpha_2 \geq 1$, and $\alpha_3 \geq 1$ such that $\norm{\nabla^2\log \pi_t(x_t)} \leq M_2(1+\norm{x_t})^{\alpha_2}$ and $\norm{\nabla^3 \log\pi_t(x_t)} \leq M_3(1 + \norm{x_t})^{\alpha_3}$.
\end{assumption}
\begin{assumption}
	\label{asp:tail}
	There exist $\vartheta>0$ and $M_\infty < \infty$ such that $\int \pi_t(x_t) e^{\vartheta \norm{x_t}^2} \dd x_t < M_\infty$, $\forall t$.
\end{assumption}

These assumptions are satisfied, for example, when the target distribution $\pi_0$ is Gaussian.

\newcommand{\opnorm}[1]{\norm{#1}_{\operatorname{op}}}
\begin{remark}
	Let $\varphi(x_t) = \gradlog \pi_t(x_t)$. Then the notation $\nabla^3\log\pi_t(x_t)$ refers to the second order differential $\varphi''(x_t)$ which is a bilinear mapping from $\mathbb R^d \times \mathbb R^d$ to $\mathbb R^d$. For any multilinear operator $H: \mathcal X_1 \times \ldots \times \mathcal X_n \to \mathcal Y$, we define
	\begin{equation*}
		\opnorm{H}:= \sup_{x_1, \ldots, x_n \neq 0} \frac{\norm{H(x_1, \ldots, x_n)}}{\norm{x_1} \ldots \norm{x_n}}.
	\end{equation*}
	By writing $\norm{H}$ we implicitly refer to $\opnorm{H}$. In fact, the space of such operators is of finite dimensions in our cases of interests, hence any two norms are bounded by a constant factor of each other.
\end{remark}
The above assumptions only concern the differential of $\gradlog \pi_t(x_t)$ with respect to $x$. The following lemma derives a bound with respect to $t$.
\begin{lemma}
	\label{lem:partial_t_gradlog}
	Under the above assumptions, there exist constants $\bar M_1$ and $\bar \alpha_1$ such that for all $t$
	\begin{equation*}
		\norm{\frac{\partial}{\partial t} \gradlog \pi_t(x_t)} \leq \bar M_1 (1+\norm{x_t})^{\bar \alpha_1}.
	\end{equation*}
\end{lemma}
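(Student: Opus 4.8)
The plan is to obtain an explicit formula for $\partial_t \gradlog \pi_t$ in terms of the score and its first three spatial derivatives, and then bound each resulting term using the growth conditions in Assumptions~\ref{asp:order1} and~\ref{asp:order23}. Since $\beta_t \equiv 1$, the marginal density $\pi_t$ solves the Fokker--Planck equation $\partial_t \pi_t = \divx(x \pi_t) + \Delta \pi_t$. For $t>0$, $\pi_t$ is a nondegenerate Gaussian convolution of $\pi$, hence smooth and strictly positive, so I can divide by $\pi_t$ and use $\divx(x\pi_t)/\pi_t = d + x\cdot\gradlog\pi_t$ together with the identity $\Delta \pi_t/\pi_t = \Delta \log \pi_t + \norm{\gradlog \pi_t}^2$ to get
\begin{equation*}
	\partial_t \log \pi_t(x) = d + x \cdot \gradlog \pi_t(x) + \Delta \log \pi_t(x) + \norm{\gradlog \pi_t(x)}^2.
\end{equation*}

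Next I would differentiate this identity in $x$, interchanging $\partial_t$ and $\nabla$ (legitimate by the smoothness of $\pi_t$ at positive times), which yields
\begin{equation*}
	\partial_t \gradlog \pi_t = \gradlog \pi_t + (\nabla^2 \log \pi_t)\, x + \nabla\!\Delta \log \pi_t + 2 (\nabla^2 \log \pi_t)\, \gradlog \pi_t.
\end{equation*}
It then remains to bound the four terms. The first is $O(1+\norm{x})$ by Assumption~\ref{asp:order1}. The second obeys $\norm{(\nabla^2\log\pi_t)x} \leq \norm{\nabla^2\log\pi_t}_{\mathrm{op}}\norm{x} \leq M_2(1+\norm{x})^{\alpha_2+1}$ by Assumption~\ref{asp:order23}. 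The third is a contraction of the third-order tensor $\nabla^3 \log \pi_t$, so up to a dimension-dependent factor it is bounded by $M_3(1+\norm{x})^{\alpha_3}$, again by Assumption~\ref{asp:order23}. The fourth combines both assumptions to give $O((1+\norm{x})^{\alpha_2+1})$. Collecting these, the claim follows with $\bar \alpha_1 = \max(\alpha_2 + 1, \alpha_3)$ and $\bar M_1$ a constant depending only on $d, M_1, M_2, M_3$; since these $M_i$ are uniform in $t$, the resulting bound is uniform in $t$ as required.

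The main obstacle is analytic rather than algebraic: one must justify that $\pi_t$ is regular enough for the Fokker--Planck equation and the interchange $\partial_t \nabla = \nabla \partial_t$ to hold in the classical pointwise sense, and that the growth bounds are uniform over $t \in [0,T]$. The Gaussian smoothing at positive $t$ supplies the regularity, while the regularity conditions (Assumptions~\ref{asp:order1}--\ref{asp:tail}) are precisely what guarantee the polynomial growth of the score and its derivatives with $t$-independent constants; in particular the exponential-moment tail control of Assumption~\ref{asp:tail} is what permits differentiation under the integral sign in the convolution representation and controls the behaviour near $t=0$ and as $\norm{x}\to\infty$.
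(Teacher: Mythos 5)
Your proof is correct and is essentially the paper's argument: since $\log g_t$ and $\log \pi_t$ differ only by the time-independent function $\tfrac 12\norm{x}^2$ (up to a constant), the identity you derive from the Fokker--Planck equation for the density is exactly the paper's score evolution equation $\partial_t \log g_t = \divx \gradlog g_t + \gradlog g_t \circ (\gradlog g_t - x_t)$, which the paper cites from \citet{lai2022regularizing} rather than rederiving. Your subsequent spatial differentiation and termwise bounds via Assumptions~\ref{asp:order1} and~\ref{asp:order23}, with $\bar\alpha_1 = \max(\alpha_2+1, \alpha_3)$, match the paper's treatment of $\psi'(x_t)$.
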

\begin{proof}
	Using the Fokker--Planck equation for the score \citep{lai2022regularizing}, we write
	\begin{equation*}
		\partial_t \log g_t(x_t) = \operatorname{div}_x \gradlog g_t(x_t) + \gradlog g_t(x_t) \circ (\gradlog g_t(x_t)-x_t).
	\end{equation*}
	For a fixed $t$, put $\varphi(x_t) = \gradlog g_t(x_t)$ and $\psi(x_t)=\Tr(\varphi'(x_t)) + \varphi(x_t) \circ (\varphi(x_t) - x_t)$. Then $\partial_t \gradlog g_t(x_t) = \nabla\psi(x_t)$. Viewing $\psi'(x_t)$, $\varphi'(x_t)$, and $\varphi''(x_t)$ as elements of $\mathcal L(\mathbb R^d, \mathbb R)$, $\mathcal L(\mathbb R^d, \mathbb R^d)$, and $\mathcal L(\mathbb R^d, \mathcal L(\mathbb R^d, \mathbb R^d))$ respectively; where $\mathcal L(A,B)$ is the space of linear operators from $A$ to $B$; we can write
	\begin{equation*}
		\psi'(x_t)h = \Tr(\varphi''(x_t)h) + (\varphi'(x_t)h)\circ(\varphi(x_t)-x_t) + \varphi(x_t)\circ(\varphi'(x_t)h-h), \forall h \in \mathbb R^d
	\end{equation*}
	where $\circ$ stands for the usual scalar product between two vectors in $\R^d$.
	There is a constant $C$ depending on the dimension such that $\Tr(L) \leq C\opnorm{L}$ for endomorphisms $L$. Thus
	\begin{align*}
		\opnorm{\psi'(x_t)h} & \leq C\opnorm{\varphi''(x_t)} \norm{h} + \opnorm{\varphi'(x_t)} \norm{\varphi(x_t)-x_t} \norm{h} + \norm{\varphi(x_t)} \opnorm{\varphi'(x_t)-\operatorname{Id}} \norm{h} \\
		                     & \leq \bar M_1(1+\norm{x_t})^{\bar \alpha_1} \norm{h}
	\end{align*}
	for some $\bar M_1$ and $\bar \alpha_1$ by Assumptions~\ref{asp:order1} and~\ref{asp:order23}. This entails $\norm{\partial_t \gradlog g_t(x_t)} = \opnorm{\psi'(x_t)} \leq \bar M_1(1+\norm{x_t})^{\bar \alpha_1}$.
\end{proof}

\subsubsection{Formal proof}
To formalize the error of the heuristic approximations presented at the beginning of Section~\ref{sec:proof_error_K_inf} we need the following technical lemma.
\newcommand{\xxa}{X^{\mathrm{A}}}
\newcommand{\xxb}{X^{\mathrm{B}}}
\newcommand{\wwa}{W^{\mathrm{A}}}
\newcommand{\wwb}{W^{\mathrm{B}}}
\newcommand{\wtm}{\widetilde M}
\newcommand{\ppa}{\P_{\mathrm{A}}}
\newcommand{\ppb}{\P_{\mathrm{B}}}
\newcommand{\xbzt}{\ppb(\dd x_{[0:t]})}
\newcommand{\xazt}{\ppa(\dd x_{[0:t]})}
\begin{lemma}
	\label{lem:chisq_fundamental_bound}
	Let $x_0$ and $v$ be two vectors in $\mathbb R^d$ and suppose that $\xxa_t$ and $\xxb_t$ are respectively the unique solutions of the SDEs:
	\begin{align*}
		 & \mathrm{(A):} \quad \dd \xxa_t = (-\xxa_t + 2v)\dd t + \sqrt 2 \dd \wwa_t, \quad X_0 = x_0          \\
		 & \mathrm{(B):} \quad \dd \xxb_t = (-\xxb_t+2f_t(\xxb_t)) \dd t + \sqrt 2 \dd \wwb_t, \quad X_0 = x_0
	\end{align*}
	where there exist strictly positive constants $M_1$, $\bar M_1$, $M_2$, and $M_3$; and strictly greater than $1$ constants $\bar \alpha_1$, $\alpha_2$, and $\alpha_3$; such that the function $f_t(x_t)$ satisfies $\norm{f_t(x_t)} \leq M_1(1+\norm{x_t})$, $\norm{\partial_t f_t(x_t)} \leq \bar M_1(1+\norm{x_t})^{\bar \alpha_1}$, and $\norm{\nabla^i f_t(x_t)} \leq M_{i+1}(1+\norm{x_t})^{\alpha_{i+1}}$ for $i \in \px{1,2}$. (The notation $\nabla$ refers implicitly to the gradient with respect to $x$.) Denote $\ppa(\dd x_{[0,T]})$ and $\ppb(\dd x_{[0:T]})$ respectively the path measures associated with the solutions of (A) and (B). Then, there exist a parameter $\widetilde M$ depending on all the aforementioned constants and a parameter $\widetilde M_1$ depending only on $M_1$ such that for any $t \leq 1/\widetilde M_1$, the chi-squared divergence of $\ppb(\dd x_{[0,t]})$ with respect to $\ppa(\dd x_{[0,t]})$ is finite, and
	\begin{equation*}
		\abs{\chisquared{\xbzt}{\xazt}- 2t\norm{f_0(x_0) -v}^2} \leq \widetilde Mt^2 e^{t\widetilde M_1(\norm{x_0}^2 + \norm{v}^2)} \pr{1 + \norm{x_0} + \norm{v}}^{4(1+\bar\alpha_1 + \alpha_2 + \alpha_3)}
	\end{equation*}
\end{lemma}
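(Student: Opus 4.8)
The plan is to use Girsanov's theorem to reduce the path-space chi-squared divergence to a one-dimensional Taylor expansion in $t$. Both SDEs share the same diffusion coefficient $\sqrt 2$ and the same (deterministic) starting point $x_0$, so $\ppb \ll \ppa$ on $[0,t]$ and the Radon--Nikodym derivative is an exponential martingale driven by the drift difference. Writing $u_s := \tfrac{1}{\sqrt 2}\pr{(-\xxb_s + 2f_s(\xxb_s)) - (-\xxb_s + 2v)} = \sqrt 2\,(f_s(\xxb_s) - v)$, Girsanov gives $\dd\ppb/\dd\ppa = \exp\pr{\int_0^t \langle u_s, \dd \wwa_s\rangle - \tfrac12\int_0^t \norm{u_s}^2 \dd s}$. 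Since $\chisquared{\xbzt}{\xazt} = \E_{\ppa}[(\dd\ppb/\dd\ppa)^2] - 1$, I would square, complete the square to isolate a new exponential martingale with integrand $2u_s$, and tilt by it. This defines an auxiliary probability measure $\widetilde\P$ under which the canonical process solves $\dd X_s = (-X_s + 4f_s(X_s) - 2v)\dd s + \sqrt 2\,\dd\widetilde W_s$, and yields the clean identity
\begin{equation*}
	\chisquared{\xbzt}{\xazt} = \E_{\widetilde\P}\ps{\exp\pr{2\int_0^t \norm{f_s(X_s) - v}^2 \dd s}} - 1 .
\end{equation*}
The linear growth bound $\norm{f_s(x)} \leq M_1(1+\norm x)$ makes the relevant Novikov condition hold for small $t$, justifying the martingale property and the finiteness of the divergence; this is the step tying the validity to $t \leq 1/\wtm_1$.

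Next I would set $I_t := 2\int_0^t \norm{f_s(X_s)-v}^2 \dd s \geq 0$ and expand $e^{I_t} - 1 = I_t + (e^{I_t} - 1 - I_t)$, using the elementary bound $0 \leq e^{I_t} - 1 - I_t \leq \tfrac12 I_t^2 e^{I_t}$. The triangle inequality then splits the target estimate into a \emph{main-term} error $\abs{\E_{\widetilde\P}[I_t] - 2t\norm{f_0(x_0)-v}^2}$ and a \emph{remainder} $\tfrac12\E_{\widetilde\P}[I_t^2 e^{I_t}]$, and it suffices to show each is $O(t^2)$ times the claimed growth factor.

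For the main term, I would write $\E_{\widetilde\P}[I_t] = 2\int_0^t \E_{\widetilde\P}[\norm{f_s(X_s)-v}^2]\dd s$ and compare the integrand at time $s$ to its value $\norm{f_0(x_0)-v}^2$ at $s=0$ (where $X_0 = x_0$ is deterministic). Applying It\^o's formula to $\phi(s,X_s) = \norm{f_s(X_s)-v}^2$ under $\widetilde\P$ and taking expectations gives $\tfrac{\dd}{\dd s}\E_{\widetilde\P}[\phi(s,X_s)]$ as the expected generator applied to $\phi$; each contribution involves $\partial_s f_s$, $\nabla f_s$ (against the linear-growth drift), and $\nabla^2 f_s$ (the second-order term), all bounded by polynomials in $\norm{X_s}$ of degree controlled by $\bar\alpha_1, \alpha_2, \alpha_3$ via the hypotheses, plus $\norm v$-factors. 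Hence $\abs{\E_{\widetilde\P}[\phi(s,X_s)] - \norm{f_0(x_0)-v}^2}$ is $O(s)$ and its time integral is $O(t^2)$, once the moments $\E_{\widetilde\P}[(1+\norm{X_s})^p]$ are controlled. This reuses exactly the style of the derivation in Lemma~\ref{lem:partial_t_gradlog}.

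The main obstacle, and the source of the exponential factor $e^{t\wtm_1(\norm{x_0}^2 + \norm v^2)}$, is the moment and exponential-moment control of $X_s$ under the tilted measure $\widetilde\P$. The drift $-x + 4f_s(x) - 2v$ only has linear growth, so the mean reversion need not dominate; I would instead rely on small-time estimates: a Gr\"onwall argument bounds $\E_{\widetilde\P}[(1+\norm{X_s})^p]$ in terms of $(1+\norm{x_0}+\norm v)^p$ for the main-term polynomial moments, while bounding the remainder $\E_{\widetilde\P}[I_t^2 e^{I_t}]$ requires a genuine exponential moment, since $I_t \leq C t\,(1 + \sup_{s\leq t}\norm{X_s}^2 + \norm v^2)$ forces us to estimate $\E_{\widetilde\P}[\exp(c\,t\sup_{s\leq t}\norm{X_s}^2)]$. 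Such a bound is finite and controlled only for $t$ below a threshold determined by $M_1$, which is precisely why the statement restricts to $t \leq 1/\wtm_1$ and pays the price $e^{t\wtm_1(\norm{x_0}^2+\norm v^2)}$; combined with the degree-$4(1+\bar\alpha_1+\alpha_2+\alpha_3)$ polynomial arising from squaring $I_t$ and the generator terms, this produces the stated right-hand side. Assembling the two $O(t^2)$ pieces via the triangle inequality then gives the claimed inequality.
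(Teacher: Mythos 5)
Your argument is sound and would deliver the stated bound, but it organizes the proof in a genuinely different way from the paper. The paper stays under $\ppa$ throughout: it writes $D_t = \dd\ppb/\dd\ppa$ via Girsanov, applies It\^o's formula directly to $D_t^2$ and then to $D_t^2\eta_t$ with $\eta_t = \norm{\Delta_t}^2$, $\Delta_t = 2f_t - 2v$, and controls all drift terms through a preliminary bound on $\E_{\mathrm{A}}[D_t^\alpha(1+\norm{X_t}+\norm{v})^n]$ obtained by double Cauchy--Schwarz together with the fact that under $\ppa$ the process is an explicit Gaussian, $X_s \sim \mathcal N(\sqrt{1-\lambda_s}(x_0-2v)+2v, \lambda_s)$, which makes the exponential square moments and the threshold $t \leq 1/\wtm_1$ completely elementary. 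You instead complete the square in $D_t^2$ and tilt by the exponential martingale with integrand $2u_s$, obtaining the clean identity $\chisquared{\ppb}{\ppa} = \E_{\widetilde\P}[e^{I_t}]-1$ with $I_t = 2\int_0^t \norm{f_s(X_s)-v}^2 \dd s$, and then Taylor-expand $e^{I_t}$. The two routes coincide at the generator level: your tilted drift $-x + 4f_s(x) - 2v$ is exactly the combination $\tilde f + 2\Delta_s$ that appears in the paper's It\^o expansion of $D_t^2\eta_t$, so your main-term computation for $\phi(s,x)=\norm{f_s(x)-v}^2$ under $\widetilde\P$ reproduces the paper's drift bounds term by term, and both approaches use the same assumption budget ($\bar\alpha_1$ from $\partial_s f$, $\alpha_2$ from $\nabla f$, $\alpha_3$ from $\nabla^2 f$). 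What your version buys is conceptual clarity: an exact formula for the chi-squared divergence and a transparent main-term/remainder split via $0 \leq e^{I}-1-I \leq \tfrac12 I^2 e^{I}$. What it costs is that all moment and exponential-moment control must now be carried out under the non-Gaussian law $\widetilde\P$, whose drift has only linear growth --- the step you correctly identify as the crux. That step does go through (a pathwise Gr\"onwall argument reduces $\sup_{s\leq t}\norm{X_s}$ to $C(1+\norm{x_0}+\norm{v}) + C\sup_{s\leq t}\norm{\widetilde W_s}$, and the Brownian supremum has exponential square moments at scale $1/t$, so $\E_{\widetilde\P}[\exp(ct\sup_{s\leq t}\norm{X_s}^2)]$ is finite precisely for $t \lesssim 1/M_1$, producing the factor $e^{t\wtm_1(\norm{x_0}^2+\norm{v}^2)}$), but to make the argument complete you must also verify that the tilting martingale $\mathcal E_t(2u)$ is a true martingale rather than a local one; the paper faces the identical issue and disposes of it by the same Girsanov argument under $\ppa$, where the explicit Gaussian law makes Novikov's condition checkable for small $t$ --- the same verification is available to you since the stochastic exponential is evaluated under $\ppa$, not $\widetilde\P$.
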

\newcommand{\spito}[2]{\langle #1, #2 \rangle}
\begin{proof}
	Put $\Delta_t(x_t) = 2f_t(x_t) - 2v$. By an application of Girsanov's theorem \citep[Example 3, Section 6.2.3]{liptser1977book}, we have
	\begin{equation*}
		D_t := \frac{\dd \ppb}{\dd \ppa}(X_{[0,t]}) = \exp{\int_0^t \frac{\spito{\Delta_s(X_s)} {\dd \wwa_s}}{\sqrt 2} - \frac 14 \int_0^t \norm{\Delta_s(X_s)}^2 \dd s}
	\end{equation*}
	where, for a vector-valued process $V_t$, the notation $\int_0^t \spito{V_s}{\dd W_s} := \sum_{i=1}^d \int_0^t V_s^i \dd W_s^i$.
	As a preliminary step, we would like to bound $\E_{\operatorname{A}}[D_t^{\alpha} (1+\norm{X_t} + \norm{v})^n]$ for some $\alpha, n \geq 1$. Here, $c(\cdot)$ denotes a constant whose value might change from line to line and depends on the variables inside the round bracket. We also drop the subscript/superscript A from $\E_{\operatorname{A}}$ and $\wwa_t$ whenever there is no risk of confusion. We have
	\newcommand{\ideltasws}{\int_0^t \spito{\Delta_s}{\dd W_s}}
	\newcommand{\ideltaswsbis}{\int_0^t \frac{\spito{\Delta_s}{\dd W_s}}{\sqrt 2}}
	\newcommand{\inormd}{\int_0^t \norm{\Delta_s}^2 \dd s}
	\newcommand{\inormx}{\int_0^t \norm{X_s}^2 \dd s}
	\newcommand{\polynormterm}{(1 + \norm{X_t} + \norm{v})^n}
	\begin{equation*}
		\begin{split}
			&\E[D_t^\alpha (1+\norm{X_t} + \norm{v})^n] = \E\ps{\exp{\alpha \ideltaswsbis - \frac \alpha 4 \inormd}\polynormterm} \\
			&= \E\ps{\exp{\alpha \ideltaswsbis - \frac{\alpha^2}{2} \inormd} \exp{\pr{
						\frac{\alpha^2}{2} - \frac \alpha 4
					} \inormd}\polynormterm} \\
			&\leq e^{t(c(M_1, \alpha) + c(\alpha) \norm{v}^2)} \E\bigg[\exp{\alpha \ideltaswsbis - \frac{\alpha^2}{2} \inormd} \exp{c(M_1, \alpha) \inormx} \times \\ &\times \polynormterm\bigg] \text{ using } \norm{\Delta_s}^2 \leq c(M_1)(1 + \norm{x_s}^2) + 8\norm{v}^2 \\
			&\leq e^{t(c(M_1, \alpha) + c(\alpha) \norm{v}^2)} \E^{1/2}\ps{\exp{\sqrt 2 \alpha \ideltasws - \alpha^2 \inormd}} \times \\ &\times \E^{1/4}\ps{\exp{4c(M_1, \alpha) \inormx}} \E^{1/4}\ps{\polynormterm}
		\end{split}
	\end{equation*}
	using double Cauchy-Schwarz $\E[XYZ] \leq \E^{1/2}[X^2] \E^{1/4}[Y^4] \E^{1/4}[Z^4]$.

	In the last line of the above display, the first expectation is equal to $1$ by the same Girsanov argument as \citet[Example 3, Section 6.2.3]{liptser1977book}. To bound the second expectation, we note that under $\P_{\operatorname{A}}$, we have $X_s \sim \mathcal N(\sqrt{1-\lambda_s} (x_0-2v)+2v, \lambda_s)$. Elementary calculations yield the bound
	\begin{equation*}
		\E[e^{k\norm{X_s}^2}] = \pr{\frac{1}{\sqrt{1-2k\lambda_s}}}^d \exp{\frac{k\norm{
					\sqrt{1-\lambda_s}(x_0-2v) + 2v
				}^2}{1-2k\lambda_s}} \leq (\sqrt 2)^d e^{16k(\norm{x_0}^2 + \norm{v}^2)}
	\end{equation*}
	for $0<k<1/4$. Write
	\begin{equation*}
		\begin{split}
			\E\ps{\exp{4c(M_1, \alpha) \inormx}} \leq \frac 1t \int_0^t \E\ps{e^{4tc(M_1, \alpha) \norm{X_s}^2}} \dd s \leq c(1) e^{64tc(M_1, \alpha) (\norm{x_0}^2 + \norm{v}^2)}
		\end{split}
	\end{equation*}
	if $t \leq \frac{1}{16c(M_1, \alpha)}$, using Jensen's inequality and the above bound. The third expectation can be bounded by $c(n)(1 + \norm{x_0}^{4n} + \norm{v_0}^{4n})$.

	Putting everything together, we establish that there exist constants $c(n)$ and $c(M_1, \alpha)$ such that
	\begin{equation}
		\label{eq:fundamental_bound}
		\E[D_t^\alpha \polynormterm] \leq c(n)e^{tc(M_1, \alpha) (\norm{x_0}^2 + \norm{v}^2)} (1 + \norm{x_0}^{4n} + \norm{v}^{4n}), \forall t \leq \frac{1}{c(M_1, \alpha)}.
	\end{equation}
	Now to study $\chisquared{\xbzt}{\xazt}$, we apply Ito's formula to $D_t^2$ and get, under $\P_{\operatorname{A}}$
	\begin{equation}
		\label{eq:ito1}
		D_t^2 = 1 + \sqrt 2 \int_0^t D_s^2 \spito{\Delta_s}{\dd W_s} + \int_0^t \frac{D_s^2 \norm{\Delta_s}^2}{2} \dd s.
	\end{equation}
	Putting $\eta_t(x_t) = \norm{\Delta_t(x_t)}^2$ and $\tilde f_t(x_t) = - x_t + 2v$, we have
	\begin{multline}
		\label{eq:ito2}
		D_t^2 \eta_t = \eta_0 + \sqrt 2 \int_0^t D_s^2\spito{\eta_s \Delta_s + \nabla \eta_s}{\dd W_s} + \\
		+ \int_0^t D_s^2 \px{\eta_s \frac{\norm{\Delta_s}^2}{2} + \nabla \eta_s(\tilde f(X_s) + 2 \Delta_s) + \frac{\partial \eta}{\partial s} + \Tr(\nabla^2 \eta_s)} \dd s.
	\end{multline}
	To study \eqref{eq:ito1} and \eqref{eq:ito2}, we use \eqref{eq:fundamental_bound} together with the following bounds which are consequences of the lemma's assumptions:
	\begin{align*}
		\norm{\Delta_s(x_s)}                           & \leq c(M_1)(1 + \norm{x_s} + \norm{v})                                       \\
		\eta_s(x_s)                                    & \leq c(M_1) (1 + \norm{x_s} + \norm{v})^2                                    \\
		\norm{\nabla \eta_s(x_s)}                      & \leq c(M_1, M_2) (1 + \norm{x_s} + \norm{v})^{1 + \alpha_2}                  \\
		\norm{\frac{\partial\eta}{\partial s}(s, X_s)} & \leq c(M_1, \bar M_1)(1 + \norm{x_s} + \norm{v})^{1 + \bar \alpha_1}         \\
		\norm{\Tr(\nabla^2 \eta_s)}                    & \leq c(M_1, M_2, M_3) (1 + \norm{x_s} + \norm{v})^{1 + \alpha_2 + \alpha_3}.
	\end{align*}
	The last inequality is justified by the fact that $\Tr(\nabla^2 \eta_s) \lesssim \norm{\nabla^2 \eta_s}$, where by considering $\nabla^2 \eta_s(x_s)$ as the second differential of $\eta_s$ at $x_s$ (i.e. a bilinear form from $\mathbb R^d \times \mathbb R^d$ to $\mathbb R$), we have
	\begin{equation*}
		\frac{\partial^2\eta_s}{\partial x^2}(x)[h, k] = 2\ps{\Delta_s(x)\circ \frac{\partial^2\Delta}{\partial x^2}(x)[h,k] + (\frac{\partial\Delta}{\partial x}(x)h) \circ (\frac{\partial\Delta}{\partial x}(x)k)}, \forall (h, k) \in \mathbb R^d \times \mathbb R^d
	\end{equation*}
	where $\circ$ stands for the usual scalar product between two vectors in $\R^d$.
	These bounds show that the stochastic integrals (w.r.t. $\dd W_s$) in~\eqref{eq:ito1} and~\eqref{eq:ito2} are true martingales (as opposed to merely local martingales). Moreover, there exist a constant $\wtm$ depending on $M_1$, $M_2$, $M_3$, $\bar M_1$, $\bar \alpha_1$, $\alpha_2$, and $\alpha_3$; and a constant $\wtm_1$ depending on $M_1$ only such that
	\newcommand{\ssnxonv}{\norm{x_0}^2 + \norm{v}^2}
	\newcommand{\vtailterm}{(1+\norm{x_0} + \norm{v})^{4(1+\bar\alpha_1+\alpha_2+\alpha_3)}}
	\begin{multline*}
		\E\ps{D_s^2 \abs{\eta_s \frac{\norm{\Delta_s}^2}{2} + \nabla \eta_s(\tilde f(X_s) + 2 \Delta_s) + \frac{\partial \eta}{\partial s} + \Tr(\nabla^2 \eta_s)}} \leq 4\wtm e^{t\wtm_1(\ssnxonv)} \times \\ \times \vtailterm, \forall s\leq t\leq \frac{1}{\wtm_1}.
	\end{multline*}
	Taking expectation of both sides of~\eqref{eq:ito2} and rearranging yields
	\begin{equation*}
		\abs{\E[D_t^2 \eta_t] - \eta_0} \leq 4\mathbf{t}\wtm e^{t\wtm_1(\ssnxonv)}  \vtailterm, \forall t\leq \frac{1}{\wtm_1}.
	\end{equation*}
	Then we have, by taking expectation of both sides of~\eqref{eq:ito1}:
	\begin{equation*}
		\begin{split}
			&\abs{\chisquared{\xbzt}{\xazt} - t\frac{\eta_0}{2}} = \abs{\E(D_t^2) - 1 - t\frac{\eta_0}{2}} = \abs{\int_0^t \pr{\E\ps{\frac{D_s^2 \eta_s}{2}} - \frac{\eta_0}{2}} \dd s} \leq \int_0^t \abs{\E\ps{\frac{D_s^2\eta_s}{2}} - \frac{\eta_0}{2}} \dd s \\
			&\leq \int_0^t 2s\wtm e^{t\wtm_1(\ssnxonv)} \vtailterm \dd s \\
			&= t^2 \wtm e^{t\wtm_1(\ssnxonv)} \vtailterm.
		\end{split}
	\end{equation*}
	The proof is completed.
\end{proof}
We are now ready to give the proof of Proposition~\ref{prop:error_K_infinite}.
\begin{proof}
	To make the arguments clearer, we shall assume that the proposal distribution is
	\begin{equation*}
		\hat\pi(x_k|x_{k+1}) = \mathcal N(x_k|\sqrt{1-\alpha_{k+1}} x_{k+1} + 2(1-\sqrt{1-\alpha_{k+1}}) \gradlog \hat g_{k+1}(x_{k+1}), \alpha_{k+1}I)
	\end{equation*}
	which is slightly different from~\eqref{eq:smc_proposal}. As we will see, the proof also applies to the original discretization with minimal changes. For $0<s<u<T$, the distributions $\hat \pi(x_s|x_u)$ and $\pi(x_s|x_u)$ can be obtained by respectively solving the following SDEs between times $T-u$ and $T-s$:
	\newcommand{\yya}{Y^{\mathrm{A}}}
	\newcommand{\yyb}{Y^{\mathrm{B}}}
	\begin{align*}
		 & \hat\pi(x_s|x_u):\quad \dd \yya_t = (-\yya_t + 2\gradlog \hat g_{T-\mathbf u}(\yya_t)) \dd t + \sqrt 2 \dd \wwa_t,\quad \yya_{T-u}=x_u \\
		 & \pi(x_s|x_u):\quad \dd \yyb_t = (-\yyb_t + 2\gradlog g_{T-t}(\yyb_t)) \dd t + \sqrt 2 \dd \wwb_t,\quad \yyb_{T-u}=x_u.
	\end{align*}
	The assumptions in Section~\ref{apx:regularity_conditions} and Lemma~\ref{lem:partial_t_gradlog} show that the conditions of Lemma~\ref{lem:chisq_fundamental_bound} are satisfied for this pair of SDEs. Thus
	\begin{multline*}
		\abs{\chisquared{
				\ppb(\dd y_{[T-u, T-s]})
			}{
				\ppa(\dd y_{[T-u, T-s]})
			} - 2(u-s)\norm{\gradlog g_u(x_u) - \gradlog \hat g_u(x_u)}^2} \leq \wtm(u-s)^2 \times \\ \times e^{(u-s)\wtm_1 (\norm{x_u}^2 + \norm{\gradlog \hat g_u(x_u)}^2)} (1+\norm{x_u}+\norm{\gradlog \hat g_u(x_u)})^{\alpha_+}
	\end{multline*}
	for $\alpha_+ = 4(1+\bar\alpha_1+\alpha_2+\alpha_3)$ and $u-s\leq \frac{1}{\wtm_1}$. This, together with the data processing inequality and the assumption on $|\gradlog \hat g_t|$, implies
	\begin{multline*}
		\chisquared{\pi(x_k|x_{k+1})}{\hat \pi(x_k|x_{k+1})} \leq 2\delta\norm{\gradlog g_{k+1}-\gradlog \hat g_{k+1}}^2(x_{k+1}) + \wtm \delta^2 e^{\delta\wtm_1(1+2C_2^2)(1+\norm{x_{k+1}})^2} \times \\ \times (1+C_2)^{\alpha_+}(1+\norm{x_{k+1}})^{\alpha_+}, \forall \delta \leq \frac{1}{\wtm_1}.
	\end{multline*}
	Thus, for a sufficiently fine discretization,
	\begin{multline}
		\label{eq:last_bound}
		\sum_k \int \frac{\pi_{k+1}(x_{k+1})^2}{\hat \pi_{k+1}(x_{k+1})} \chisquared{\pi(x_k|x_{k+1})}{\hat\pi(x_k|x_{k+1})} \dd x_{k+1} \leq \\ \leq \sum_k \delta \int \frac{\pi_{k+1}(x_{k+1})^2}{\hat\pi_{k+1}(x_{k+1})} 2 \norm{\gradlog g_{k+1}-\gradlog \hat g_{k+1}}^2(x_{k+1}) \dd x_{k+1} + \\ + \sum_k\int C_1 \pi_{k+1}(x_{k+1})\wtm \delta^2 e^{\delta \wtm_1 (1+2C_2^2)(1+\norm{x_{k+1}}^2)} (1+C_2)^{\alpha_+} (1+\norm{x_{k+1}})^{\alpha_+} \dd x_{k+1}.
	\end{multline}
	The first term is a Riemann sum and converges to $\int_0^T \int \frac{\pi_t(x_t)^2}{\hat \pi_t(x_t)} 2 \norm{\gradlog g_t - \gradlog\hat g_t}^2(x_t) \dd x_t \dd t$. To bound the second term, first note that
	\newcommand{\epikpo}{\E_{\pi_{k+1}}}
	\begin{align*}
		\epikpo[(1 + \norm{X})^{2\alpha_+}] & \leq 2^{2\alpha_+-1} \E\ps{1 + \norm{X}^{2\alpha_+}} \leq 2^{2\alpha_+-1} \E\ps{1 +
			e^{\vartheta \norm{X}^2} \max\pr{\frac{\lceil \alpha_+\rceil !}{\vartheta^{\lceil \alpha_+ \rceil}}, \frac{\lfloor \alpha_+ -1\rfloor !}{\vartheta^{\lfloor \alpha_+ -1 \rfloor }}}
		}                                                                                                                         \\ &\leq 2^{2\alpha_+-1} \pr{1 + M_\infty\max\pr{\frac{\lceil \alpha_+\rceil !}{\vartheta^{\lceil \alpha_+ \rceil}}, \frac{\lfloor \alpha_+ -1\rfloor !}{\vartheta^{\lfloor \alpha_+ -1 \rfloor }}}}
	\end{align*}
	where $M_\theta$ and $\vartheta$ appear in Assumption~\ref{asp:tail}. Thus, as long as $\delta \wtm_1 (1+2C_2^2) \leq \vartheta/2$, it holds that
	\begin{align*}
		\int \pi_{k+1}(x) e^{\delta \wtm_1(1+2C_2^2) \norm{x}^2}(1+\norm{x})^{\alpha_+} \dd x & \leq \epikpo\ps{e^{\vartheta \norm{X}^2/2} (1+\norm{X})^{\alpha_+}}                                                      \\
		                                                                                      & \leq \E^{1/2}\ps{e^{\vartheta \norm{X}^2}} \E^{1/2}\ps{(1 + \norm{X})^{2\alpha_+}} \leq C(M_\infty, \alpha_+, \vartheta)
	\end{align*}
	for some constant $C(M_\infty, \alpha_+)$ depending on $M_\infty$, $\alpha_+$, and $\vartheta$. Hence the second sum of~\eqref{eq:last_bound} tends to $0$ when $\delta\to0$. The proof is finished.
\end{proof}

\subsection{Proof of Proposition~\ref{prop:scorematching}}
\label{proof_log_g_res}
The denoising score matching identity is standard and recalled here for convenience. We have
\begin{align*}
	\pi_k(x_k)=\int p(x_k|x_0) \pi_0(x_0)  \rmd x_0
\end{align*}
so by using the log derivative we obtain
\begin{align}
	\nabla \log \pi_k(x_k) & =\int \nabla \log p(x_k|x_0) \frac{\pi_0(x_0) p(x_k|x_0)}{\pi_k(x_k)} \rmd x_0 \\
	                       & =\int \nabla \log p(x_k|x_0) \pi(x_0|x_k) \rmd x_0. \label{eq:standard_score}
\end{align}
It can be easily verified that the interchange of differentiation and integration here does not require any regularity assumption on $\pi_0(x_0)$ apart from differentiability.

To prove the novel score identity, we first note that, under the condition $\int \norm{\nabla \pi(x_0)} e^{-\eta\norm{x_0}^2} \dd x_0 < \infty, \forall \eta > 0$, we have
\begin{equation}
	\label{eq:stein}
	\int \nabla_{x_0}\log \pi(x_0|x_k) \pi(x_0|x_k) \dd x_0 = 0
\end{equation}
according to Lemma~\ref{lem:stein}. Combining this identity with~\eqref{eq:standard_score}, we have, for any $\alpha \in \mathbb R$,
\begin{equation*}
	\gradlog \pi(x_k) = \int \ps{\nabla_{x_k} \log p(x_k|x_0) + \alpha\nabla_{x_0} \log \pi(x_0|x_k)} \pi(x_0|x_k) \dd x_0.
\end{equation*}
In particular:
\begin{itemize}
	\item For $\alpha = \frac{1}{\sqrt{1-\lambda_k}}$, we get
	      \begin{equation*}
		      \gradlog \pi(x_k) = \frac{1}{\sqrt{1-\lambda_k}} \int \gradlog \pi(x_0) \pi(x_0|x_k) \dd x_0
	      \end{equation*}
	      which is the identity presented in Appendix C.1.3 \citep{debortoli2021diffusion};
	\item For $\alpha = \sqrt{1-\lambda_k}$, we get
	      \begin{equation*}
		      \gradlog \pi(x_k)=  \int\pr{\sqrt{1-\lambda_k} \gradlog g_0(x_0) - x_k} \pi(x_0|x_k) \dd x_0
	      \end{equation*}
	      which is the identity we wanted to prove.
\end{itemize}
The verifications are straightforward by remarking that $\nabla_{x_0} \log \pi(x_0|x_k) = \gradlog g_0(x_0) + \nabla_{x_0} \log p(x_0|x_k)$. We also note that choosing $\alpha = 0$ brings us back to the classical score matching loss. Therefore, different values of $\alpha$ give losses with different properties.

We finish this section with a technical lemma giving conditions for~\eqref{eq:stein} to hold. The identity is a particular case of what is known in the literature as zero-variance control variates and Stein's control variates \citep{assaraf1999zerovariance,mira2013zerovariance,anastasiou2023steinmethod}.
\begin{lemma}
	\label{lem:stein}
	Let $f: \mathbb R^d \to \mathbb R$ be a probability density, i.e. $f \geq 0$ and $\int f(x) \dd x = 1$. Suppose that $f$ is continuously differentiable and $\int \norm{\nabla f(x)} \dd x < \infty$. Then $\int \nabla f(x) \dd x = 0$.
\end{lemma}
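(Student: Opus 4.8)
The plan is to reduce the vector identity to the $d$ scalar identities $\int_{\mathbb{R}^d} \partial_i f(x)\,dx = 0$ for each coordinate $i=1,\dots,d$, and to prove each one by integrating first along the $i$-th axis using the fundamental theorem of calculus, then over the remaining coordinates using Fubini's theorem.

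Fix a coordinate $i$ and write $x = (x_i, x_{-i})$ with $x_{-i} \in \mathbb{R}^{d-1}$ the other $d-1$ coordinates. Since $\int_{\mathbb{R}^d} \norm{\nabla f(x)} \dd x < \infty$, Fubini's theorem guarantees that for almost every $x_{-i}$ the one-dimensional map $t \mapsto \partial_i f(t, x_{-i})$ is integrable over $\mathbb{R}$. For any such $x_{-i}$, the continuous differentiability of $f$ together with the fundamental theorem of calculus gives $f(b, x_{-i}) - f(a, x_{-i}) = \int_a^b \partial_i f(t, x_{-i})\,\dd t$, so letting $a \to -\infty$ and $b \to +\infty$ shows that both one-sided limits $L_\pm(x_{-i}) := \lim_{t \to \pm\infty} f(t, x_{-i})$ exist and are finite.

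The crucial step is to show that these boundary limits vanish, and here I would exploit that $f$ is nonnegative and integrable. Indeed, $\int_{\mathbb{R}^d} f < \infty$ gives, again by Fubini, $\int_{\mathbb{R}} f(t, x_{-i})\,\dd t < \infty$ for almost every $x_{-i}$; and a nonnegative function on the line that is integrable and whose limit at $+\infty$ exists must have that limit equal to zero (a nonzero limit would make the tail integral diverge), and symmetrically at $-\infty$. Hence $L_+(x_{-i}) = L_-(x_{-i}) = 0$ for almost every $x_{-i}$, so that $\int_{\mathbb{R}} \partial_i f(t, x_{-i})\,\dd t = L_+(x_{-i}) - L_-(x_{-i}) = 0$ for almost every $x_{-i}$.

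Finally I would integrate this identity over $x_{-i} \in \mathbb{R}^{d-1}$, the integrability of $\partial_i f$ justifying the application of Fubini, to obtain $\int_{\mathbb{R}^d} \partial_i f \,\dd x = 0$; repeating for each $i$ yields $\int_{\mathbb{R}^d} \nabla f \,\dd x = 0$. The only genuinely delicate point is the vanishing of the boundary terms: integrability of $f$ by itself does not force $f(t) \to 0$, but combined with the \emph{existence} of the limit (supplied by integrability of $\partial_i f$) and with nonnegativity, the conclusion follows cleanly.
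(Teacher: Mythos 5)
Your proof is correct, and it takes a genuinely different route from the paper's. The paper first integrates out the transverse coordinates, defining the marginal $g(x_1) := \int f(x_1, x_{2:d}) \dd x_{2:d}$, identifies $\int \partial_1 f$ with $\lim_{M\to\infty}\bigl(g(M)-g(-M)\bigr)$, and then kills this limit with an averaging trick: $I(M) := \int_0^\infty \abs{g(M+x)-g(-M-x)} \dd x \to 0$ by integrability of $g$, and Fatou's lemma converts this into $\abs{\int \partial_1 f \dd x} = 0$. The point of that maneuver is that it never requires the individual limits $\lim_{M\to\infty} g(\pm M)$ to exist --- integrability of $g$ alone does not supply them, only the limit of the difference exists (by dominated convergence on $\int_{-M}^M \partial_1 f$). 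You instead work fiberwise, before integrating out the other coordinates: on almost every line parallel to the $i$-th axis, integrability of $\partial_i f$ along the line (Fubini) plus the fundamental theorem of calculus gives \emph{existence} of the limits $L_\pm(x_{-i})$ of $f$ at $\pm\infty$, and integrability of $f$ along the line (Fubini again) forces $L_\pm(x_{-i}) = 0$; a final Fubini application finishes. You correctly flag and avoid the standard trap --- integrability of $f$ alone does not force decay --- by deriving existence of the limit from the integrable derivative first, which is exactly the missing ingredient. Your route is arguably more elementary (only FTC and Fubini, no Fatou) and yields the stronger intermediate fact that $f$ vanishes at infinity along almost every axis-parallel line, whereas the paper's trick buys freedom from all fiberwise almost-everywhere bookkeeping by working directly with the one-dimensional marginal. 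One small remark: your appeal to nonnegativity at the limit step is not actually needed, since any integrable function on $\mathbb R$ whose limit at $+\infty$ exists must have that limit equal to zero; so, like the paper's argument, yours extends verbatim to signed integrable $f$.
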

\begin{remark}
	The condition $\int \norm{\nabla f(x)} \dd x < \infty$ is clearly the minimum necessary for $\int \nabla f(x) \dd x =0$ to make sense. On the other hand, we do \textit{not} explicitly require that $f$ or $\nabla f$ vanishes at infinity.
\end{remark}
\begin{proof}
	Without loss of generality, we only prove that $\int \partial_1f(x) \dd x =0$. Put $g(x_1):= \int f(x_1, x_{2:d}) \dd x_{2:d}$. Fubini's theorem then implies that $\int_{\mathbb R} g(x_1)\dd x_1 = 1$. We have
	\begin{equation}
		\label{eq:intgrad}
		\begin{split}
			\int_{\mathbb{R}^d} \partial_1f(x)\dd x &= \lim_{M\to\infty} \int_{\mathbb R^{d-1}} \int_{-M}^M \partial_1f(x_1, x_{2:d}) \dd x_1 \dd x_{2:d} = \lim_{M\to\infty} \int_{\R^{d-1}} f(M, x_{2:d}) - f(-M, x_{2:d}) \dd x_{2:d} \\
			&= \lim_{M\to\infty} g(M) - g(-M).
		\end{split}
	\end{equation}
	Put $I(M):= \int_0^\infty \abs{g(M+x) - g(-M-x)} \dd x$. We have
	\begin{equation*}
		I(M) \leq \int_0^\infty \abs{g(M+x)} + \abs{g(-M-x)} \dd x,
	\end{equation*}
	thus $\lim_{M\to\infty} I(M) = 0$ by the integrability of $g$. Combining this with Fatou's lemma yields
	\begin{equation*}
		0=\liminf_{M\to\infty} I(M) \geq\int_0^\infty \liminf_{M\to\infty} \abs{g(M+x) - g(-M-x)} \dd x = \int_0^\infty \abs{\int_{\mathbb R^d} \partial_1f(y)\dd y} \dd x
	\end{equation*}
	by~\eqref{eq:intgrad}. This means that $\int_{\mathbb R^d} \partial_1f(y)\dd y = 0$.
\end{proof}

\newcommand{\ceh}[1]{\CE{#1}{X_t}}
\newcommand{\cehbis}[1]{\CE{#1}{X_t, \tau=t}}
\newcommand{\veh}[1]{\VarE{#1}{X_t}}
\subsection{Proof of Proposition~\ref{prop:guidance_loss_is_good}}
\begin{proof}
	Since we are in the Gaussian case with $d=1$, we have $\gradlog g_0(x_0) = ax_0 + b$ for some $a,b\in \mathbb R$. Therefore $\ell_{\textup{NSM}}(\theta)$ and $\operatorname{Var}(\hat \ell_{\textup{DSM}}(\theta))$ are trivially bounded. To study $\ell_{\textup{DSM}}(\theta)$, we first note that
	\begin{equation*}
		\operatorname{Var}(X_0|X_t) = \frac{\lambda_t \sigma^2}{\lambda_t + (1-\lambda_t) \sigma^2} =: \rho_t.
	\end{equation*}
	Write
	\begin{equation*}
		\begin{split}
			\ceh{\norm{s_\theta(t, X_t) - \gradlog p(X_t|X_0)}^2} &\geq \veh{s_\theta(t, X_t) - \gradlog p(X_t|X_0)} = \veh{-\frac{X_t-\sqrt{1-\lambda_t} X_0}{\lambda_t}} \\
			&= \frac{1-\lambda_t}{\lambda_t^2} \rho_t
		\end{split}
	\end{equation*}
	so
	\begin{equation*}
		\ell_{\textup{DSM}}(\theta) \geq \int_0^T \frac{1-\lambda_t}{\lambda_t^2} \rho_t \dd t = \infty
	\end{equation*}
	since $\rho_t \sim 2t$ as $ t \to 0$.
	Concerning $\hat \nabla \ell_{\textup{DSM}}(\theta) = 2T(s_\theta(\tau, X_\tau) - \gradlog p(X_\tau|X_0))\nabla_\theta s_\theta(\tau, X_\tau)$, we have
	\begin{equation*}
		\begin{split}
			\cehbis{\norm{\hat \nabla \ell_{\textup{DSM}}(\theta)}^2} &= 4T^2\norm{\nabla_\theta s_\theta(t, X_t)}^2 \cehbis{\norm{s_\theta(t, X_t) - \gradlog p(X_t|X_0)}^2} \\
			&\geq 4T^2\norm{\nabla_\theta s_\theta(t, X_t)}^2 \frac{1-\lambda_t}{\lambda_t^2} \rho_t
		\end{split}
	\end{equation*}
	so
	\begin{equation*}
		\begin{split}
			\E\ps{\norm{\hat \nabla \ell_{\textup{DSM}}(\theta)}^2} &= \frac 1T \int_0^T \CE{\norm{\hat \nabla \ell_{\textup{DSM}}(\theta)}^2}{\tau=t} \dd t \geq 4T \int_0^T \E\ps{\norm{\nabla_\theta s_\theta(t, X_t)}^2 \frac{1-\lambda_t}{\lambda_t^2} \rho_t} \dd t \\
			&= 4T \E\ps{\int_0^T \norm{\nabla_\theta s_\theta(t, X_t)}^2 \frac{1-\lambda_t}{\lambda_t^2} \rho_t \dd t}.
		\end{split}
	\end{equation*}
	The integral inside the last expectation is infinite whenever the event $\norm{\nabla_\theta s_\theta(0, X_0)} \neq 0$ holds, thanks to the continuity of $\nabla_\theta s$ and the path $X_{[0,T]}$. Since $\E\norm{\nabla_\theta s_\theta(t, X_0)}^2 > 0$ by assumption, that event has non-zero probability, which concludes the proof.
\end{proof}

\section{Experimental Details}
\label{apx:experimental}

In this section we give additional details and ablations relating to our experimental results. We begin by providing details of the sampling tasks we considered. We then provide details of our implementation and the baseline methods. We finally provide additional ablation studies and results which demonstrate the properties of our method.

\subsection{Benchmarking targets}
\label{apx:exp_tasks}

\paragraph{Gaussian}
Here we consider the target $\pi(x) = \mathcal{N}(x; 2.75, 0.25^2)$.

\paragraph{Mixture} This target was used in \citet{arbel2021annealed}. It is an equally weighted mixture of 6 bivariate Gaussian distributions with means $\mu_1=(3.0,0.0), \mu_2=(-2.5, 0.0), \mu_3=(2.0, 3.0), \mu_4=(0.0, 3.0), \mu_5=(0.0, -2.5), \mu_6=(3.0,2.0)$ and covariances $\Sigma_1=\Sigma_2=\big(\begin{smallmatrix}
			0.7 & 0.0\\
			0.0 & 0.05
		\end{smallmatrix}\big), \Sigma_4=\Sigma_5=\big(\begin{smallmatrix}
			0.05 & 0.0\\
			0.0 & 0.07
		\end{smallmatrix}\big), \Sigma_3=\Sigma_6=\big(\begin{smallmatrix}
			1.0 & 0.95\\
			0.95 & 1.0
		\end{smallmatrix}\big)$. This target is symmetric around $y=x$.

\paragraph{Funnel} This target was proposed by \citet{Neal:2003}. Its density follows $x_0 \sim \mathcal{N}(0, \sigma_f^2), x_{1:9}|x_0 \sim \mathcal{N}(0, \exp(x_0)\mathbf{I})$, with $\sigma_f=3$.

\paragraph{Logistic Regression} The Bayesian logistic regression model is defined by the prior distribution $\theta  \sim \mathcal{N}(0, \sigma^2\mathrm{I})$ and likelihood $y|\theta, x \sim \text{Bernoulli}(\sigma(\theta^T x))$ where $\sigma$ is the sigmoid function. We consider sampling the posterior $\theta| y, x$ on the Ionosphere and Sonar datasets, which are of $35$ and $61$ dimensions respectively.

\paragraph{Brownian Motion} In this task, we make noisy observations of a simple Brownian motion over 30 time steps. The model was introduced by \citet{inferencegym2020} and is defined by the prior \(\alpha_{\text{inn}} \sim \text{LogNormal}(0, 2)\), $\alpha_{\text{obs}} \sim \text{LogNormal}(0, 2)$, $x_1 \sim \mathcal{N}(0, \alpha_{\text{inn}}^2)$ and $x_i \sim \mathcal{N}(x_{i-1}, \alpha_{\text{inn}}^2)$ for $i=2, ..., 30$. The observation likelihood is given by $y_i \sim \mathcal{N}(x_i, \alpha_{\text{obs}}^2)$ for $i=1, ..., 30$. The goal is to sample the posterior distribution of $\alpha_\text{inn}, \alpha_{\text{obs}}, x_1, ..., x_{30} | \{y_i\}_{i=1}^{10} \bigcup \{y_i\}_{i=21}^{30}$. This task is in $32$ dimensions.

\paragraph{Log Gaussian Cox Process} The LGCP model \cite{Moller:1998} was developed for the analysis of spatial data. The Poisson rate parameter $\lambda(x)$ is modelled on the grid using an exponentially-transformed Gaussian process, and observations come from a Poisson point process. The unnormalized posterior density is given directly by $\gamma(x) = \mathcal{N}(x; \mu, K)\prod_{i \in [1:M]^2} \exp(x_i y_i - a \exp(x_i))$, where $x_i$ are the points of a regular $M\times M$ grid. In our experiments, we fit this model on the Pines forest dataset where $M=40$, resulting in a problem in $1600$ dimensions.

\paragraph{GMM} The challenging Gaussian Mixture Model used in \cite{midgley2022flow} is an unequally weighted mixture of 40 Gaussian components. The mean of each component is uniformly distributed in the range $[-40,40]^d$, the covariance is $\sigma^2 I_d$ where $\sigma = \log(1+\exp(0.1))$ and the unnormalized weight is uniformly distributed in $[0,1]$. We consider dimensions $d \in \{1,2,5,10,20\}$.

\subsection{Algorithmic details and hyperparameter settings}
\label{apx:exp_settings}

Here we give details of the algorithmic settings used in our experiments. We first describe the considerations taken to ensure a fair comparison between baselines, and then we detail exact hyperparameter settings.

Our method was implemented in Python using the libraries of JAX \cite{Bradbury:2018}, Haiku and Optax. Our implementation is available on Github\footnote{\url{https://github.com/angusphillips/particle_denoising_diffusion_sampler}}. We used the open source code-bases of \citet{arbel2021annealed} to run the SMC and CRAFT baselines and of \citet{vargasDDSampler2023} to run the PIS and DDS benchmarks, both of which are also implemented in JAX.

In all experiments we used $2000$ particles to estimate the normalizing constant.

\paragraph{Variational approximation} We used a variational approximation as the reference distribution for all methods. We found that this was required for numerical stability of the potential function $g_t(x_t)$ in our method. We therefore used the same variational approximation for all methods to ensure a fair comparison. Note that PIS reverses a pinned brownian motion and thus the reference distribution depends on the diffusion time span $T$ and the noise coefficient $\sigma$. Since $\sigma$ affects the performance of the PIS algorithm itself, we tune this parameter independently rather than setting this via the variational approximation. The variational approximation was a mean-field variational distribution i.e. a diagonal Gaussian distribution learnt by optimizing the ELBO. We used $20,000$ optimisation steps ($50,000$ for the \verb|Funnel| and \verb|Brownian| tasks) with the Adam optimizer \cite{adam} and learning rate $1e-3$. We did not use a variational approximation in the \verb|Gaussian| and \verb|GMM| tasks where we used $\mathcal{N}(0, 1)$ and $\mathcal{N}(0, 20^2 I_d)$ respectively.

\paragraph{Network architectures and optimizer settings} For the CRAFT baseline, we followed the flow network architectures and optimizer settings given in \citet{Matthews2022}, which are restated below for completeness. For the diffusion-based methods (PDDS, DDS and PIS) we use the same network architecture and optimizer settings for each method. The neural network follows the PISGRAD network of \citet{zhangyongxinchen2021path} with minor adaptations. We use a sinusoidal embedding of 128 dimensions for the time input. We use a 3-layer MLP with 64 hidden units per layer for the `smoothing' network ($r_\eta(t)$ in our notation and $\mathrm{NN}_2(t)$ in \citet{zhangyongxinchen2021path}). For the main potential/score network ($\mathrm{N}_\gamma$ in our notation and $\mathrm{NN}_1(t, x)$ in \citet{zhangyongxinchen2021path}), we use a 2 layer MLP of 64 hidden units per layer to encode the 128-dimensional time embedding. This is concatenated with the state input $x$ before passing through a 3-layer MLP with 64 hidden units per layer, outputting a vector of dimension $d$. In PDDS, we take the scalar product of this output with the state input $x$ to approximate the potential function, while PIS and DDS use the $d$-dimensional output to approximate the optimal control term. The activation function is GeLU throughout. We train for $10,000$ iterations of the Adam optimizer \cite{adam} with batch size $300$ and a learning rate of $1e-3$, which decays exponentially at a rate of $0.95$ every 50 iterations (with the exception of the \verb|Funnel| task where we did not use any learning rate decay).

The number of trainable parameters for each method and task can be found in \cref{tab:num_params}, along with training time and sampling time (performed on a NVIDIA GeForce GTX 1080 Ti GPU).

\begin{table}[]
	\centering
	\def\arraystretch{2.0}
	\begin{tabular}{lrrrrrrr}
		\toprule
		      & Gaussian (16)     & Mixture (16) & Funnel (32)  & Brownian (16) & Ion (32)     & Sonar (32)   & LGCP (128)   \\
		\midrule
		PDDS  & \makecell{37570 /                                                                                            \\84 / 0.10} & \makecell{37764 /\\84 / 0.13} & \makecell{39316 /\\114 / 0.15} & \makecell{43584 /\\90 / 0.19} & \makecell{44166 /\\97 / 0.12} & \makecell{49210 /\\105 / 0.12} & \makecell{347776 /\\2492 / 1.86} \\
		CRAFT & \makecell{32 /                                                                                               \\4 / 0.02} & \makecell{176608 /\\26 / 0.09} & \makecell{6077440 /\\178 / 0.18} & \makecell{1024 /\\27 / 0.21} & \makecell{2240 /\\38 / 0.09} & \makecell{3904 /\\40 / 0.09} & \makecell{409600 /\\3072 / 14.9} \\
		PIS   & \makecell{37570 /                                                                                            \\129 / 0.00} & \makecell{37764 /\\167 / 0.01} & \makecell{39316 /\\394 / 0.02} & \makecell{43854 /\\176 / 0.01} & \makecell{44166 /\\324 / 0.02} & \makecell{49210 /\\326 / 0.01} & \makecell{347776 /\\3931 / 0.64} \\
		DDS   & \makecell{37570 /                                                                                            \\136 / 0.00} & \makecell{37764 /\\187 / 0.01} & \makecell{39316 /\\381 / 0.02} & \makecell{43854 /\\183 / 0.01}  & \makecell{44166 /\\338 / 0.01} & \makecell{49210 /\\332 / 0.02} & \makecell{347776 /\\3941 / 0.68} \\
		SMC   & 0 / 0 / 0.02      & 0 / 0 / 0.07 & 0 / 0 / 0.17 & 0 / 0 / 0.20  & 0 / 0 / 0.09 & 0 / 0 / 0.09 & 0 / 0 / 14.6 \\
		\bottomrule\
	\end{tabular}
	\caption{Number of trainable parameters / training time total (seconds) / sampling time per 2000 particles (seconds). Timings are averaged over 3 training seeds. }
	\label{tab:num_params}
\end{table}

\paragraph{Annealing and noise schedules} For the annealing based approaches (SMC and CRAFT) we used a geometric annealing schedule with initial distribution the variational approximation as described above. For the diffusion based approaches (PDDS, DDS and PIS) we carefully considered the appropriate noise schedules for each method. Firstly, we fix the diffusion time span at $T=1$ and adapt the discretization step size depending on the number of steps $K$ of the experiment, i.e. $\delta = T/K$. This choice is equivalent to the fixed discretization step and varying diffusion time $T$ as considered by \citet{vargasDDSampler2023}, up to the choice of $\alpha_\text{max}$.

For PIS, the original work of \citet{zhangyongxinchen2021path} used by default a uniform noise schedule controlled by $\sigma$. Further, \citet{vargasDDSampler2023} were unable to find a noise schedule which improved performance above the default uniform noise schedule. As such as we stick with the uniform noise schedule and tune the noise coefficient $\sigma$ by optimizing the ELBO objective over a grid search.%

For DDS, it was observed by \citet{vargasDDSampler2023} that controlling the transition scale $\sqrt{\alpha_k}$ such that it goes smoothly to zero as $k \to 0$ is critical to performance. To achieve this they choose a cosine-based schedule $\alpha_k^{1/2} = \alpha_{\text{max}}^{1/2} \cos^2\big(\frac{\pi}{2} \frac{1-k/K + s}{1+s}\big)$ for $s$ small ($0.008$ following \citet{nichol2021improved}), which we term the DDS cosine schedule. The parameter $\alpha_{\text{max}}$ is tuned such that the noise at the final step in the reverse process is sufficiently small. We found that this scheduler did indeed result in the best performance when compared to a linear noise schedule ($\beta_t = \beta_0 + \beta_T t/T$) or the popular cosine schedule of \citet{nichol2021improved} ($\lambda_t = 1 - \cos^2\big(\frac{\pi}{2} \frac{t/T + s}{1+s}\big)$). As such we use the DDS cosine schedule and tune $\alpha_{\text{max}}$ by optimizing the ELBO objective over a grid search. %

For our method PDDS, we obtained the best performance using the cosine schedule of \citet{nichol2021improved}, which sets $\lambda_t = 1 - \cos^2\big(\frac{\pi}{2} \frac{t/T + s}{1+s}\big)$ where we recall that $\lambda_t = 1 - \exp(-2\int_{0}^{t} \beta_s ds)$, i.e. the variance of the transition from $0$ to $t$. We provide an illustration of the benefits of this schedule in the following section. In particular we found that the alternative DDS cosine schedule did not improve performance and added the complexity of tuning the parameter $\alpha_\text{max}$.

In summary, while each of the diffusion based approaches used different noise schedulers, each was chosen to provide the best performance for the individual approach and thus ensures a fair comparison.

\subsubsection{SMC and CRAFT settings}

We used 1 iteration of an HMC kernel with 10 leapfrog integrator steps as the proposal distribution in the SMC and CRAFT baselines. We tuned the HMC step sizes based on initial runs and obtained the step size schedules given below. We performed simple resampling when the ESS dropped below $30\%$. We trained CRAFT for $500$ iterations ($1000$ on \verb|Funnel|) with a batch size of $2000$ and learning rate schedule detailed below.  We also list the flow architecture in each task. Our parameter settings differ to those in \citet{Matthews2022} since we use the variational approximation, which results in larger MCMC step sizes and smaller learning rates.

\paragraph{Gaussian} Step sizes $[0.7, 0.7, 0.5, 0.4]$ linearly interpolated between times $[0.0, 0.25, 0.5, 1.0]$. CRAFT used a diagonal affine flow, with a learning rate of $1e-2$.
\paragraph{Mixture} Step sizes $[0.5, 0.5, 0.5, 0.3]$ linearly interpolated between times $[0.0, 0.25, 0.5, 1.0]$. CRAFT used a spline inverse autoregressive flow with 10 spline bins and a 3 layer autoregressive MLP of 30 hidden units per layer, with a learning rate of $1e-3$.
\paragraph{Funnel} Step sizes $[1.0, 0.9, 0.8, 0.7, 0.6]$ linearly interpolated between times $[0.0, 0.25, 0.5, 0.75, 1.0]$. CRAFT used an affine inverse autoregressive flow, trained for 4000 iterations with a learning rate of $1e-3$.
\paragraph{Brownian} Step sizes $[0.8,0.8,0.7,0.6,0.5]$ linearly interpolated between times $[0.0, 0.25, 0.5, 0.75, 1.0]$. CRAFT used a diagonal affine flow, with a learning rate of $1e-3$.
\paragraph{Ion} Step sizes $[0.7, 0.7, 0.6, 0.5, 0.4]$ linearly interpolated between times $[0.0, 0.1, 0.25, 0.5, 1.0]$. CRAFT used a diagonal affine flow, with a learning rate of $1e-3$.
\paragraph{Sonar} Step sizes $[0.7, 0.7, 0.6, 0.5, 0.35]$ linearly interpolated between times $[0.0, 0.1, 0.25, 0.5, 1.0]$. CRAFT used a diagonal affine flow, with a learning rate of $1e-3$.
\paragraph{LGCP} Step sizes $[0.35, 0.35, 0.3, 0.2]$ linearly interpolated between times $[0.0, 0.25, 0.5, 1.0]$. CRAFT used a diagonal affine flow, with a learning rate of $1e-4$.
\paragraph{GMM1} Step sizes $[5, 4, 3, 2.8, 2.5]$ linearly interpolated between times $[0.0, 0.3, 0.5, 0.85, 1.0]$. CRAFT used a diagonal affine flow, with a learning rate of $1e-3$.
\paragraph{GMM2} Step sizes $[4, 3, 2.5, 2.1, 2]$ linearly interpolated between times $[0.0, 0.3, 0.5, 0.85, 1.0]$. CRAFT used a diagonal affine flow, with a learning rate of $1e-3$.
\paragraph{GMM5} Step sizes $[5, 3.3, 2.3, 1.8, 1.6]$ linearly interpolated between times $[0.0, 0.25, 0.5, 0.8, 1.0]$. CRAFT used a diagonal affine flow, with a learning rate of $1e-3$.
\paragraph{GMM10} Step sizes $[3, 2, 1.5, 1.5]$ linearly interpolated between times $[0.0, 0.5, 0.85, 1.0]$. CRAFT used a diagonal affine flow, with a learning rate of $1e-3$.
\paragraph{GMM20} Step sizes $[3, 1.8, 1.4, 1.3]$ linearly interpolated between times $[0.0, 0.5, 0.85, 1.0]$. CRAFT used a diagonal affine flow, with a learning rate of $1e-3$.

We used SMC with the above settings for $1000$ steps to estimate the `ground truth' normalizing constant on the Bayesian posterior targets.

\subsubsection{DDS settings}
Optimal settings for $\alpha_\text{max}$ are given in \cref{tab:dds_settings}. Note that we do not tune $\sigma$ as in \citet{vargasDDSampler2023} since we used a variational approximation. We also tuned DDS on the \verb|GMM| tasks but did not present the results as they were not competitive with PDDS and CRAFT.

\begin{table}[]
	\centering
	\begin{tabular}{lrrrrrr}
		\toprule
		         & Base step & 1    & 2    & 4    & 8    & 16   \\
		\midrule
		Gaussian & 1         & 0.86 & 0.86 & 1.00 & 0.96 & 0.82 \\
		Mixture  & 1         & 0.28 & 0.28 & 0.36 & 0.52 & 0.54 \\
		Brownian & 1         & 0.76 & 0.76 & 0.84 & 0.80 & 0.72 \\
		Funnel   & 2         & 0.60 & 0.68 & 0.68 & 0.60 & 0.64 \\
		Ion      & 2         & 0.68 & 0.80 & 0.74 & 0.64 & 0.52 \\
		Sonar    & 2         & 0.68 & 0.82 & 0.78 & 0.64 & 0.50 \\
		LGCP     & 8         & 0.74 & 0.62 & 0.60 & 0.44 & 0.26 \\
		GMM1     & 2         & 0.22 & 0.28 & 0.24 & 0.20 & 0.22 \\
		GMM2     & 2         & 0.14 & 0.18 & 0.18 & 0.16 & 0.16 \\
		GMM5     & 2         & 0.20 & 0.28 & 0.26 & 0.24 & 0.20 \\
		GMM10    & 4         & 0.36 & 0.34 & 0.30 & 0.26 & 0.22 \\
		GMM20    & 8         & 0.40 & 0.32 & 0.32 & 0.26 & 0.18 \\
		\bottomrule
	\end{tabular}
	\caption{Optimal settings for $\alpha_\text{max}$. The number of steps for a given entry is the base steps in the first column multiplied by the step multiplier in the zeroth row.}
	\label{tab:dds_settings}
\end{table}

\subsubsection{PIS settings}
Optimal settings for $\sigma$ are given in \cref{tab:pis_settings}. Note that we were unable to obtain reasonable performance with PIS with only 1 step. We also tuned PIS on the \verb|GMM| tasks but did not present the results as they were not competitive with PDDS and CRAFT.

\begin{table}[]
	\centering
	\begin{tabular}{lrrrrrr}
		\toprule
		         & Steps & 1     & 2     & 4     & 8     & 16    \\
		\midrule
		Gaussian & 1     & NA    & 1.00  & 1.00  & 1.00  & 1.00  \\
		Mixture  & 1     & NA    & 2.40  & 2.20  & 1.92  & 1.88  \\
		Brownian & 1     & NA    & 0.08  & 0.10  & 0.10  & 0.13  \\
		Funnel   & 2     & 1.50  & 1.00  & 1.00  & 1.00  & 1.00  \\
		Ion      & 2     & 0.37  & 0.40  & 0.40  & 0.46  & 0.46  \\
		Sonar    & 2     & 0.25  & 0.31  & 0.40  & 0.46  & 0.49  \\
		LGCP     & 8     & 1.36  & 1.64  & 1.78  & 1.99  & 2.06  \\
		GMM1     & 2     & 15.00 & 14.00 & 15.00 & 15.00 & 16.00 \\
		GMM2     & 2     & 4.40  & 1.30  & 1.30  & 7.30  & 8.70  \\
		GMM5     & 2     & 1.30  & 1.30  & 1.40  & 1.40  & 1.40  \\
		GMM10    & 4     & 1.30  & 1.30  & 1.30  & 1.30  & 1.30  \\
		GMM20    & 8     & 1.30  & 1.30  & 1.30  & 1.20  & 1.20  \\
		\bottomrule
	\end{tabular}
	\caption{Optimal settings for $\sigma$. The number of steps for a given entry is the base steps in the first column multiplied by the step multiplier in the zeroth row. We were unable to tune PIS with one step size.}
	\label{tab:pis_settings}
\end{table}

\subsubsection{PDDS settings}
No tuning of the cosine noise schedule was required. We performed systematic resampling \cite{douc2005comparison} when the ESS dropped below $30\%$. PDDS-MCMC used 10 Metropolis-adjusted Langevin MCMC steps with step sizes tuned based on initial runs with the initial simple approximation, targeting an acceptance rate of approximately 0.6. The resulting step sizes can be found below. We used 20 iterations of PDDS, each trained for 500 steps with a fresh instance of the learning rate schedule ($1e-3$ with exponential decay at a rate of $0.95$ per 50 iterations). At each refinement we initialise the potential approximation at it's previous state, rather than training from scratch at each refinement. We found that for a limited computational budget, better performance was obtained for a fast iteration rate (20 iterations with 500 training steps each). We also tested a slower iteration rate (2 iterations with 10,000 training steps each) which performed equivalently but required a larger overall training budget. The fast iteration schedule uses a lower number of density evaluations but has a larger training time due to requiring more frequent compilation of the PDDS sampler.

\paragraph{Gaussian} Step sizes $[0.1, 0.2, 0.5, 0.6]$ linearly interpolated between times $[0, 0.5, 0.75, 1.0]$.
\paragraph{Mixture} Step sizes $[0.05, 0.15, 0.4, 0.6]$ linearly interpolated between times $[0, 0.5, 0.75, 1.0]$.
\paragraph{Funnel} Step sizes $[0.4, 0.3, 0.5, 0.6, 0.6] $ linearly interpolated between times $[0, 0.2, 0.5, 0.75, 1.0]$. We found that the Langevin MCMC can become unstable on the \verb|Funnel| task due to extreme gradients of the density function, therefore at each iteration of PDDS we reduced the step sizes by 50\% for the first 10 iterations.
\paragraph{Brownian} Step sizes $[0.2, 0.4, 0.5, 0.5]$ linearly interpolated between times $[0, 0.5, 0.75, 1.0]$.
\paragraph{Ion} Step sizes $[0.15, 0.25, 0.5, 0.6]$ linearly interpolated between times $[0, 0.5, 0.75, 1.0]$.
\paragraph{Sonar} Step sizes $[0.1, 0.1, 0.18, 0.32, 0.4]$ linearly interpolated between times $[0, 0.25, 0.5, 0.75, 1.0]$.
\paragraph{LGCP} Step sizes $[0.1, 0.1, 0.15, 0.2] $ linearly interpolated between times $[0, 0.5, 0.75, 1.0]$.
\paragraph{GMM1} Step sizes $[10,11,20,35,100]$ linearly interpolated between times $[0.0,0.25,0.5,0.75,1.0]$.
\paragraph{GMM2} Step sizes $[2,2.5,4,6,12,50]$ linearly interpolated between times $[0.0,0.25,0.5,0.6,0.75,1.0]$.
\paragraph{GMM5} Step sizes $[1.5,1.5,2.5,4,8,30]$ linearly interpolated between times $[0.0,0.25,0.5,0.6,0.75,1.0]$.
\paragraph{GMM10} Step sizes $[1,1.2,1.8,3,6,20]$ linearly interpolated between times $[0.0,0.25,0.5,0.6,0.75,1.0]$.
\paragraph{GMM20} Step sizes $[0.8,1.0,2,3,6,20]$ linearly interpolated between times $[0.0,0.25,0.5,0.6,0.75,1.0]$.
\subsection{Ablation studies}
\label{app:additional_ablations}

\paragraph{Importance of SMC and iterative potential approximations}
Here we study the behaviour of PDDS on a gaussian mixture task where the initial (here referred to as `naive') approximation \cref{eq:naive_approximation} only captures one out of three modes when simulating the reverse SDE \cref{eq:conditioned_diffusion}. \cref{fig:mix3_samples_pdds} shows that our method is able to recover the unknown modes after only two iterations of potential training. Following the discussion in \cref{sec:potential_iterations}, there are two mechanisms at play which allow our method to recover additional modes which are missed by the naive approximation.

Firstly, our asymptotically correct SMC scheme \cref{algo:smc_dds} means that we do not simply re-learn the potential function of the previous iteration during training, but instead we learn an improved potential function since the training samples are `improved' by the SMC scheme. We evidence the improvement in potential functions by plotting the path of distributions induced by \cref{eq:conditioned_diffusion} with the learnt potential at each PDDS iteration in \cref{fig:mix3_paths_smc}. We see that at each iteration of PDDS the sequence of distributions moves closer to the true denoising sequence for the given target. Furthermore we show that our SMC scheme is critical by showing the behaviour of PDDS if the SMC scheme is ignored (i.e. remove all resampling steps). \cref{fig:mix3-samples-no-smc} and \cref{fig:mix3-paths-no-smc} show the resulting samples and sequence of distributions when we simply simulate the reverse SDE \cref{eq:conditioned_diffusion} using the previous potential approximation without applying the SMC correction scheme. We observe that very few particles do still reach the missing modes but this is not compounded in each iteration and after the first iteration no improvements are made.

The second, more subtle, mechanism at play is that the learning of the potential approximation is not based on the sample alone, but also uses information from the target distribution via the way we parametrize our neural network (\cref{sec:nn_param}). This is true for both the original and NSM losses (although the advantage of the NSM loss is to provide a lower variance regression target than the original DSM loss). We note, however, that this second mechanism alone is not sufficient without the help of SMC, again illustrated in \cref{fig:mix3-samples-no-smc} and \cref{fig:mix3-paths-no-smc}.

\begin{figure}
	\centering
	\includegraphics[width=0.6\textwidth]{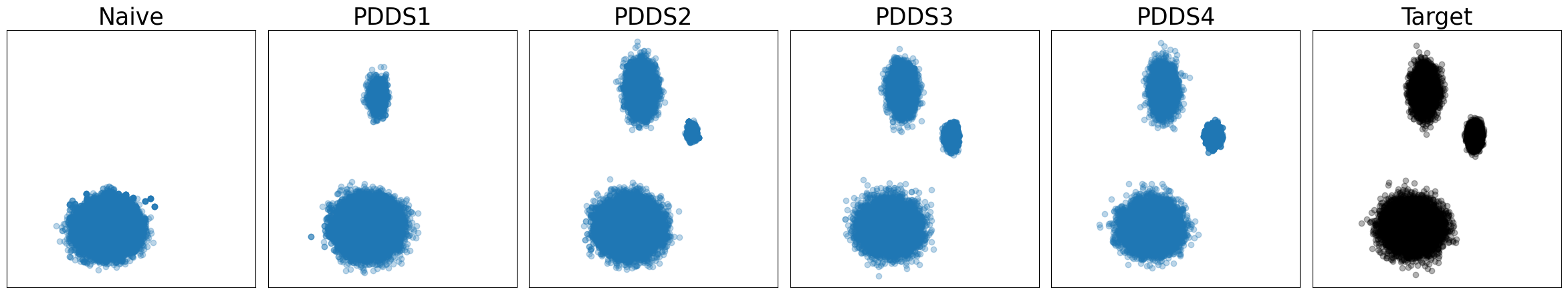}
	\caption{Samples from PDDS on a 3-mode mixture of Gaussians.}
	\label{fig:mix3_samples_pdds}
\end{figure}

\begin{figure}
	\centering
	\includegraphics[width=0.6\textwidth]{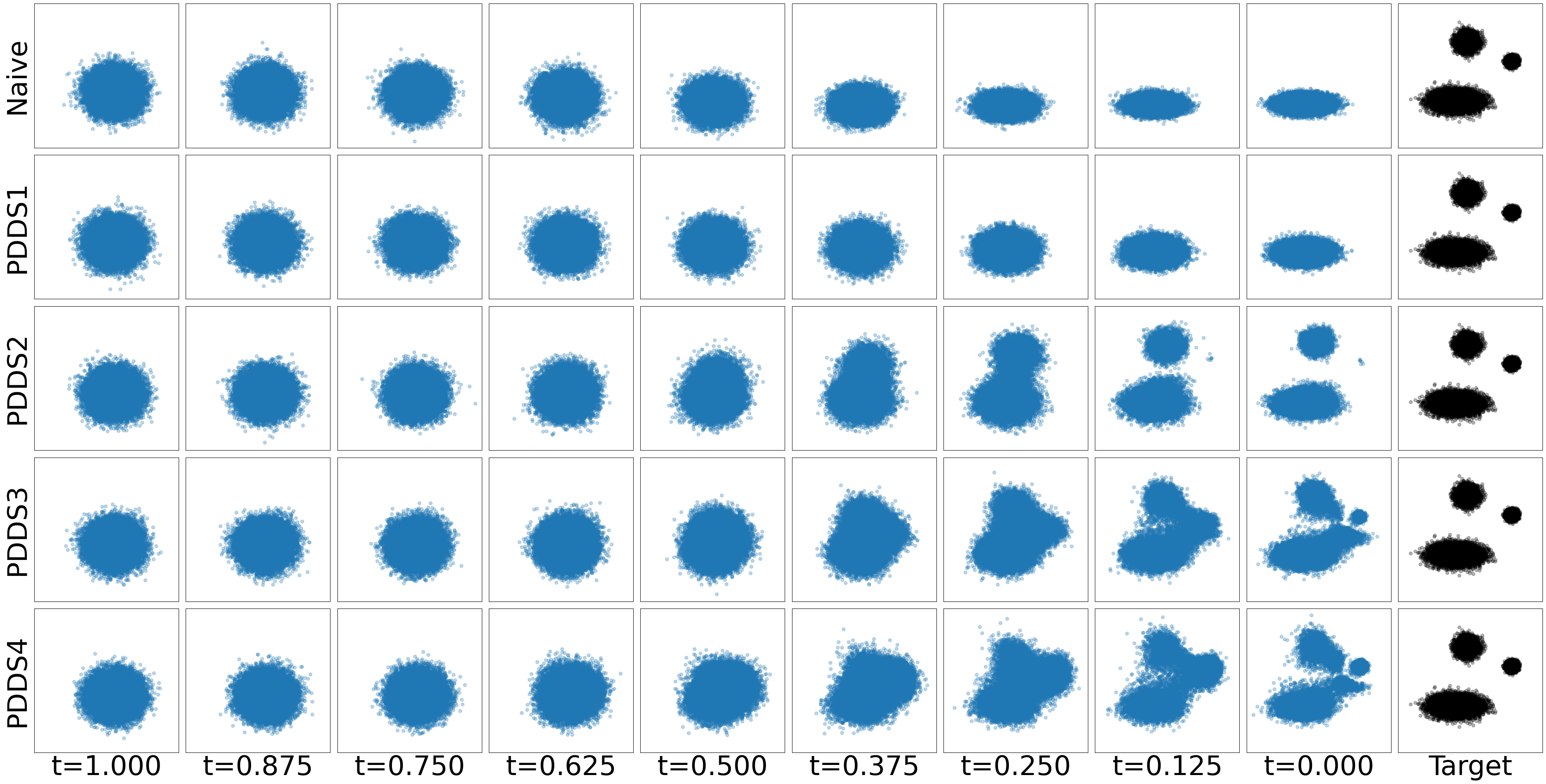}
	\caption{Left to right: marginal distributions of the reverse SDE \cref{eq:conditioned_diffusion} using the potential approximation at each iteration of PDDS.}
	\label{fig:mix3_paths_smc}
\end{figure}

\begin{figure}
	\centering
	\includegraphics[width=0.6\textwidth]{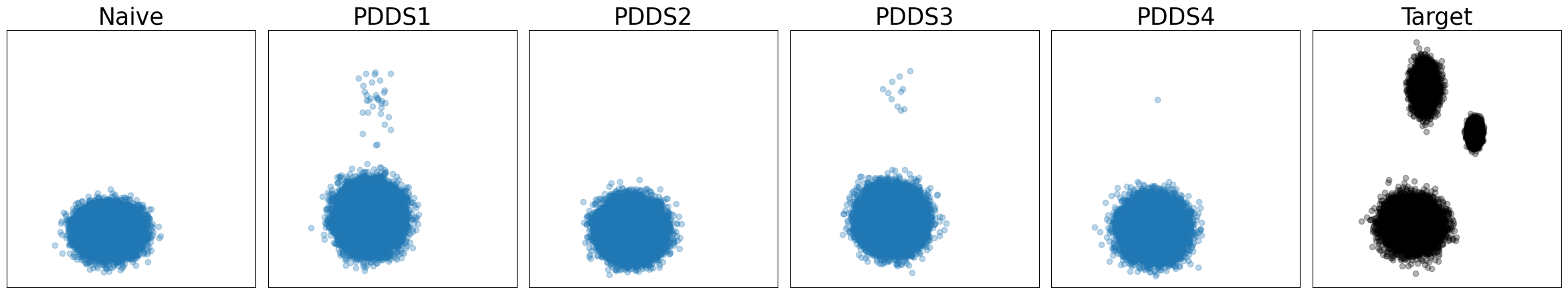}
	\caption{Samples from PDDS without SMC on a 3-mode mixture of Gaussians}
	\label{fig:mix3-samples-no-smc}
\end{figure}

\begin{figure}
	\centering
	\includegraphics[width=0.6\textwidth]{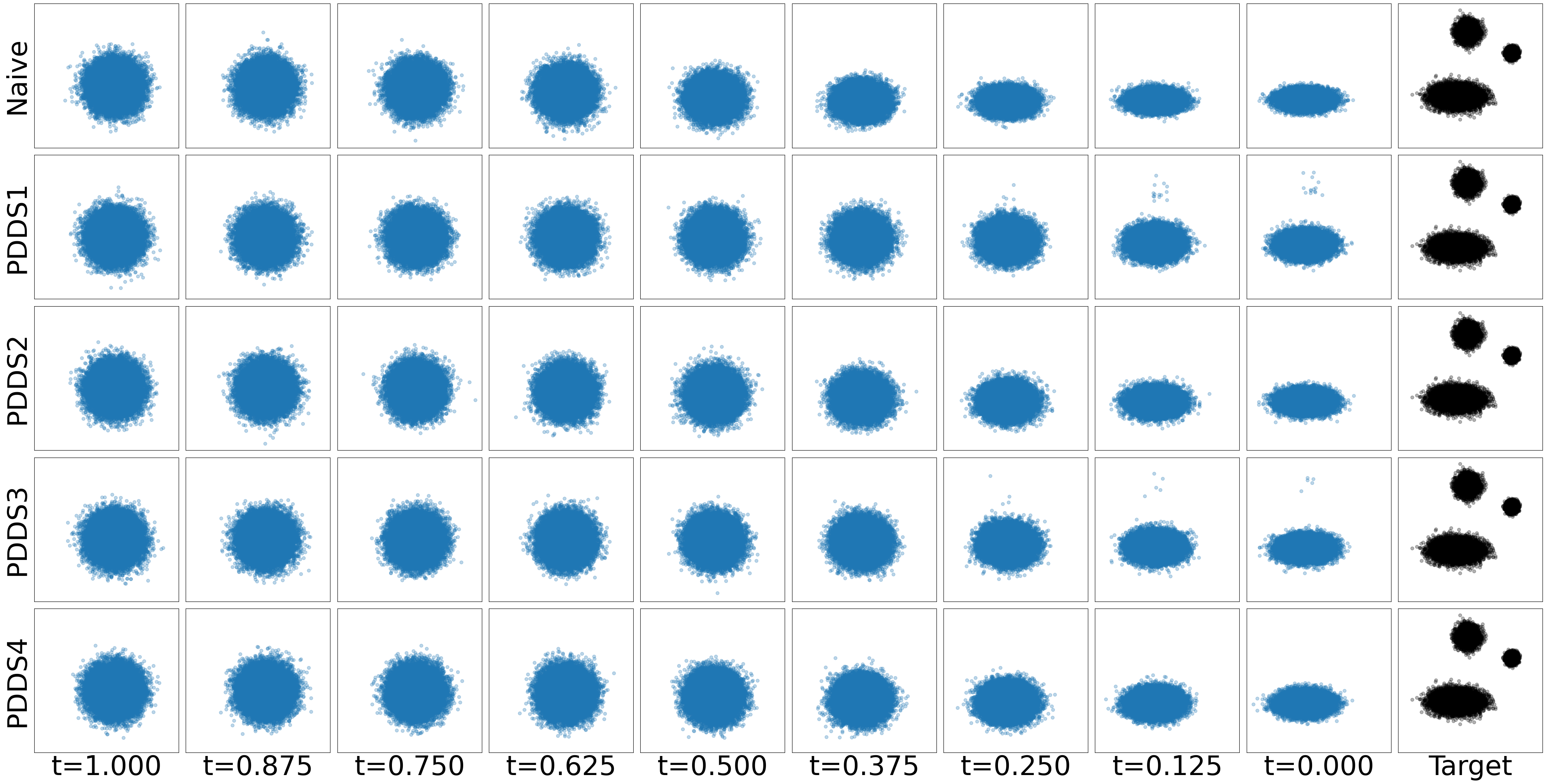}
	\caption{Left to right: marginal distributions of the reverse SDE \cref{eq:conditioned_diffusion} using the potential approximation at each iteration of PDDS without SMC.}
	\label{fig:mix3-paths-no-smc}
\end{figure}

\paragraph{Cosine scheduler}
As stated above, we follow the cosine scheduler introduced by \citet{nichol2021improved}. We found, as demonstrated in \Cref{fig:schedulers} and \Cref{fig:schedulers_ess}, that the cosine schedule was effective in ensuring the forward SDE converges to the target distribution while regularly spacing the ESS drops across the sampling path.

\begin{figure}
	\centering
	\begin{subfigure}[b]{0.32\textwidth}
		\centering
		\includegraphics[width=\textwidth]{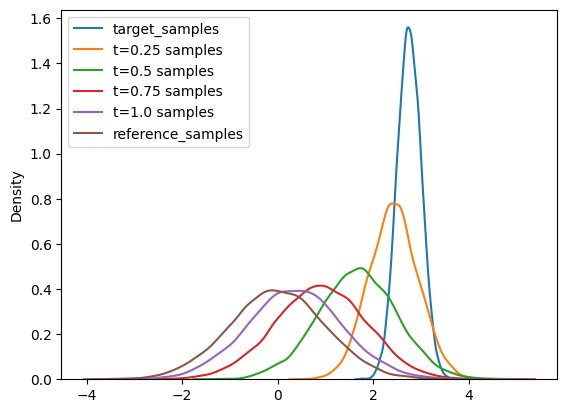}
		\label{fig:bad_lin_schedule}
	\end{subfigure}
	\hfill
	\begin{subfigure}[b]{0.32\textwidth}
		\centering
		\includegraphics[width=\textwidth]{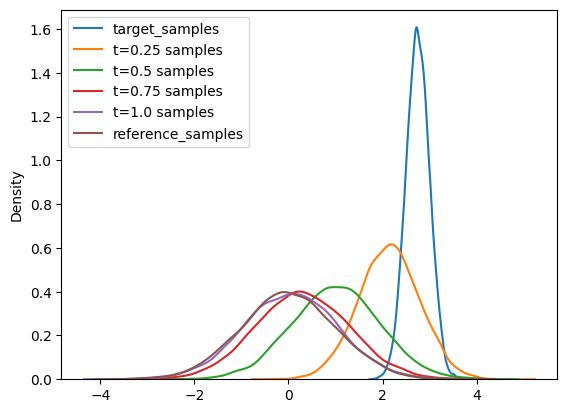}
		\label{fig:lin_schedule}
	\end{subfigure}
	\hfill
	\begin{subfigure}[b]{0.32\textwidth}
		\centering
		\includegraphics[width=\textwidth]{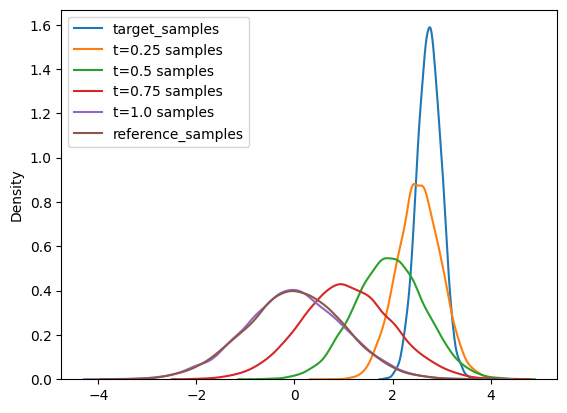}
		\label{fig:cosine_schedule}
	\end{subfigure}
	\caption{Left: linear schedule, $\beta_f=8$. Middle: linear schedule: $\beta_f=12$. Right: cosine schedule.}
	\label{fig:schedulers}
\end{figure}

\begin{figure}
	\centering
	\begin{subfigure}[b]{0.32\textwidth}
		\centering
		\includegraphics[width=\textwidth]{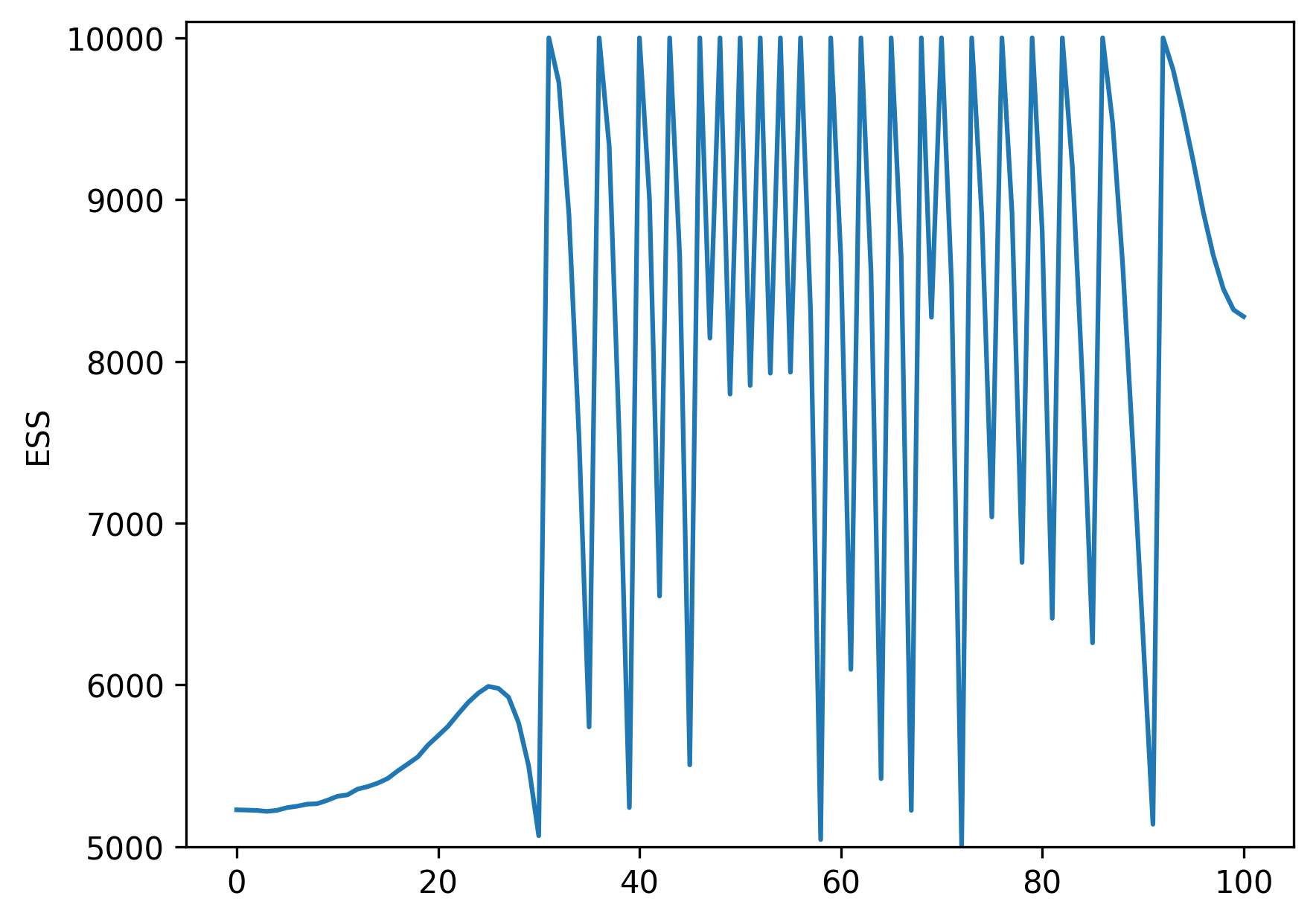}
		\label{fig:bad_lin_schedule_ess}
	\end{subfigure}
	\hfill
	\begin{subfigure}[b]{0.32\textwidth}
		\centering
		\includegraphics[width=\textwidth]{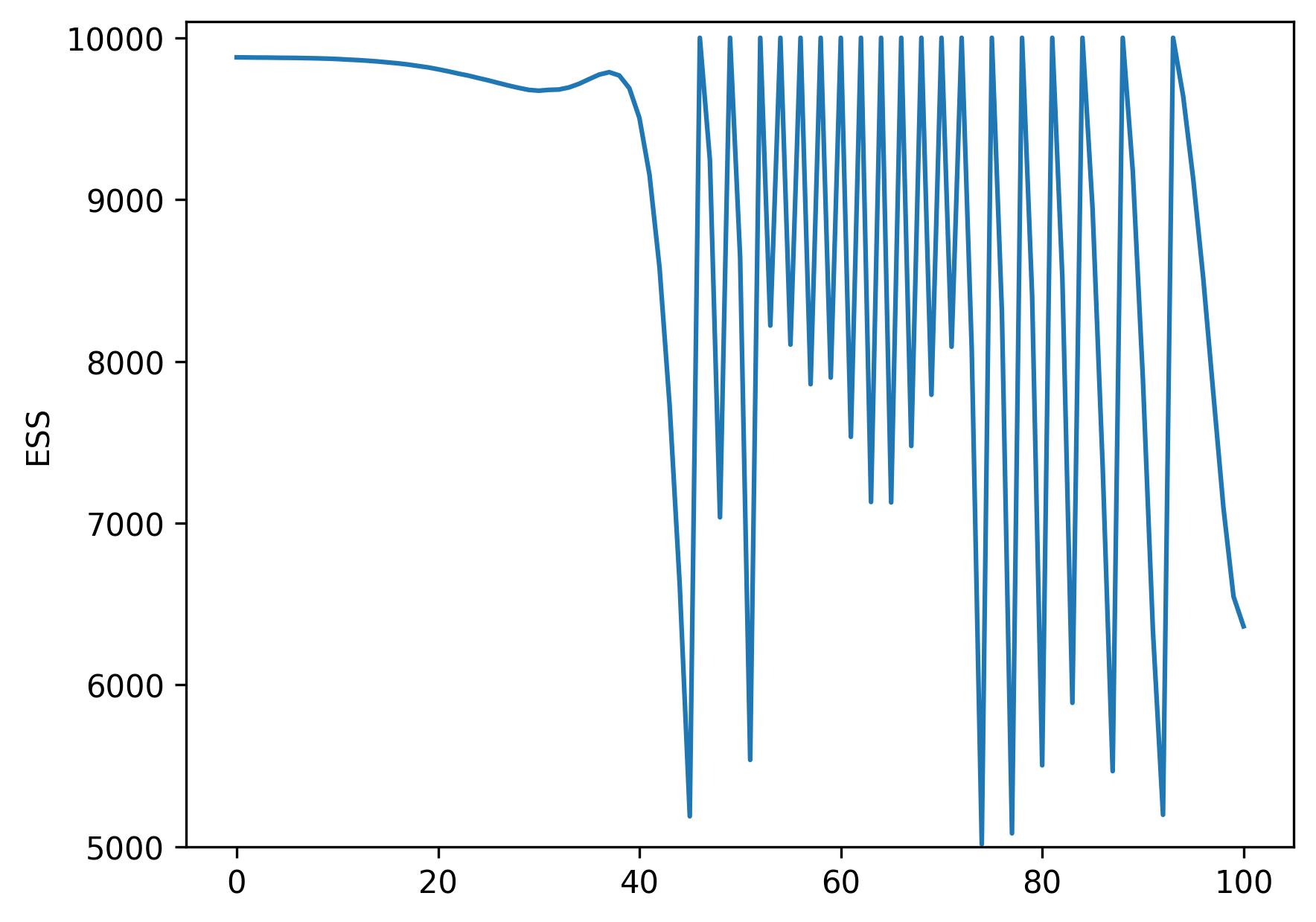}
		\label{fig:lin_schedule_ess}
	\end{subfigure}
	\hfill
	\begin{subfigure}[b]{0.32\textwidth}
		\centering
		\includegraphics[width=\textwidth]{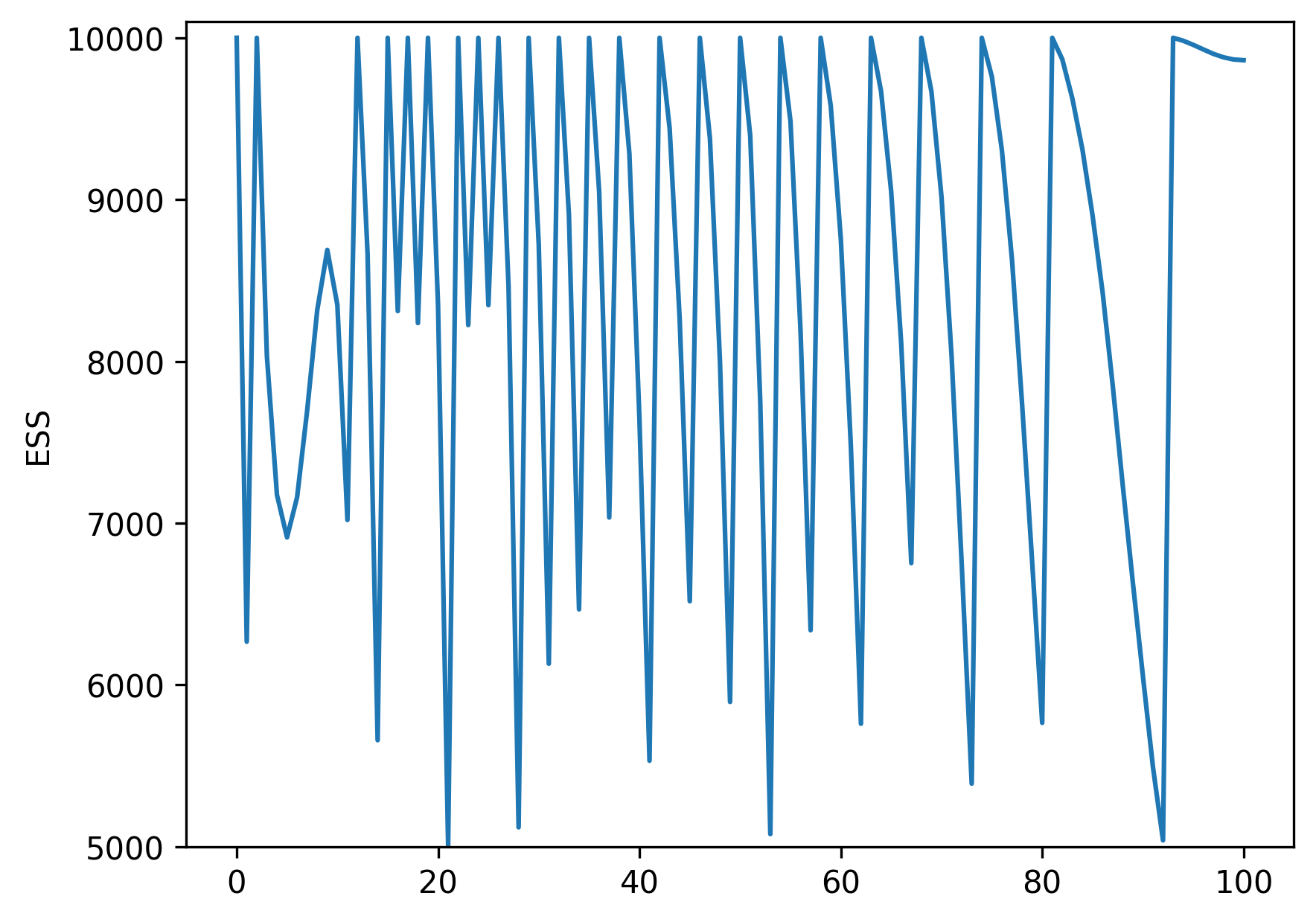}
		\label{fig:cosine_schedule_ess}
	\end{subfigure}
	\caption{Left: linear schedule, $\beta_f=8$. Middle: linear schedule: $\beta_f=12$. Right: cosine schedule.}
	\label{fig:schedulers_ess}
\end{figure}

\paragraph{Number of particles} In \Cref{fig:n_ablation} we show that the normalizing constant estimation error with the naive potential approximation does decrease as the number of particles increases. However, we could not eliminate the error before exceeding the computer memory. We conclude that we must improve our initial potential approximation to obtain feasible results, hence motivating the iterative potential approximation scheme.

\begin{figure}
	\centering
	\begin{subfigure}[b]{0.475\textwidth}
		\centering
		\includegraphics[width=\textwidth]{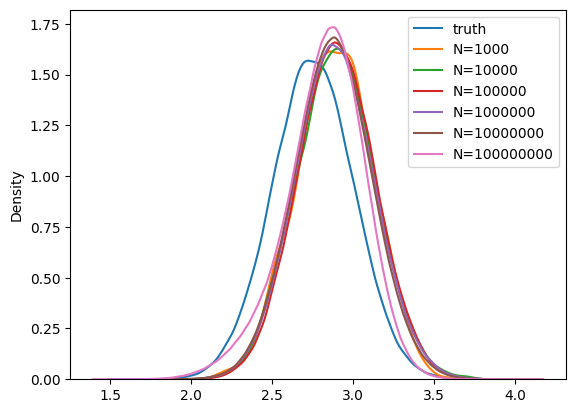}
		\label{fig:n_particles_samples}
	\end{subfigure}
	\hfill
	\begin{subfigure}[b]{0.475\textwidth}
		\centering
		\includegraphics[width=\textwidth]{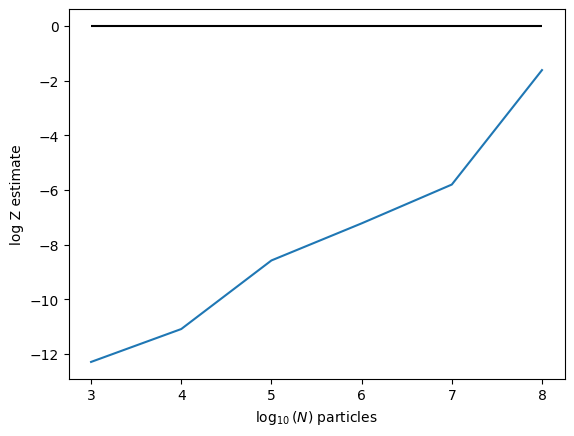}
		\label{fig:n_particles_log_Z}
	\end{subfigure}
	\caption{Samples and estimated normalizing constant from PDDS on $\mathcal{N}(2.75, 0.25^2)$ target.}
	\label{fig:n_ablation}
\end{figure}

\paragraph{Guidance path of NN potential approximation} In \Cref{fig:corrected_paths} we demonstrate that using a neural network to correct the naive approximation has the effect of correcting the path of the guidance SDE. Here we use a linear schedule for $\beta_t$ since the cosine schedule does not allow for analytic roll-out of the SDE.

\begin{figure}
	\centering
	\begin{subfigure}[b]{0.32\textwidth}
		\centering
		\includegraphics[width=\textwidth]{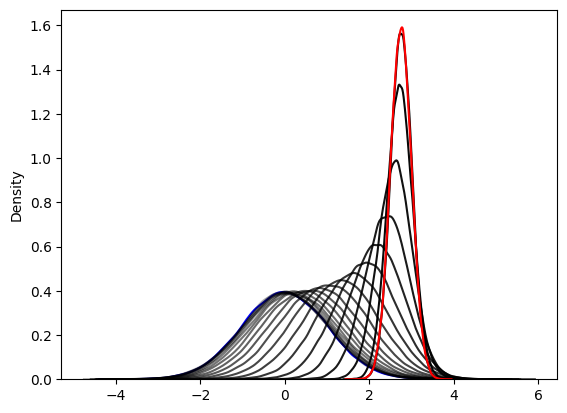}
		\label{fig:exact_path}
	\end{subfigure}
	\hfill
	\begin{subfigure}[b]{0.32\textwidth}
		\centering
		\includegraphics[width=\textwidth]{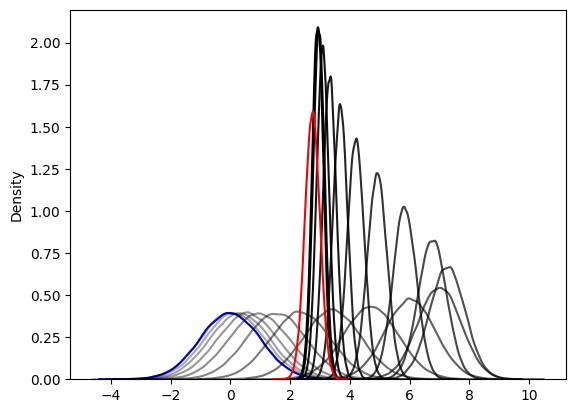}
		\label{fig:naive_path}
	\end{subfigure}
	\hfill
	\begin{subfigure}[b]{0.32\textwidth}
		\centering
		\includegraphics[width=\textwidth]{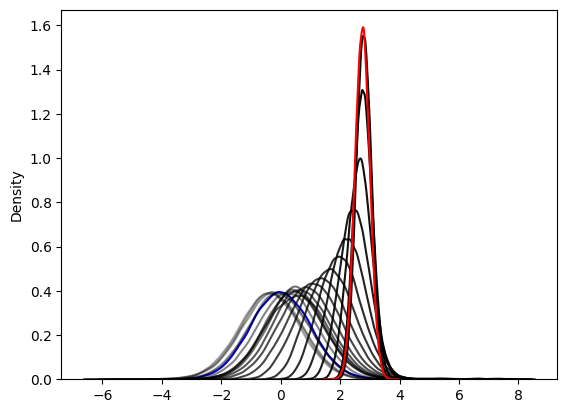}
		\label{fig:nn_path}
	\end{subfigure}
	\caption{Guidance path using analytic potential function (left), naive potential approximation (middle) and neural network potential approximation (right).}
	\label{fig:corrected_paths}
\end{figure}

\subsection{Additional results}
\label{app:additional_results}

In \cref{fig:log_Z_all_tasks} we display the normalising constant estimates on all tasks, including the \verb|Brownian| and \verb|Ion| tasks which were omitted from the main text for space.

\begin{figure}
	\centering
	\includegraphics[width=\textwidth]{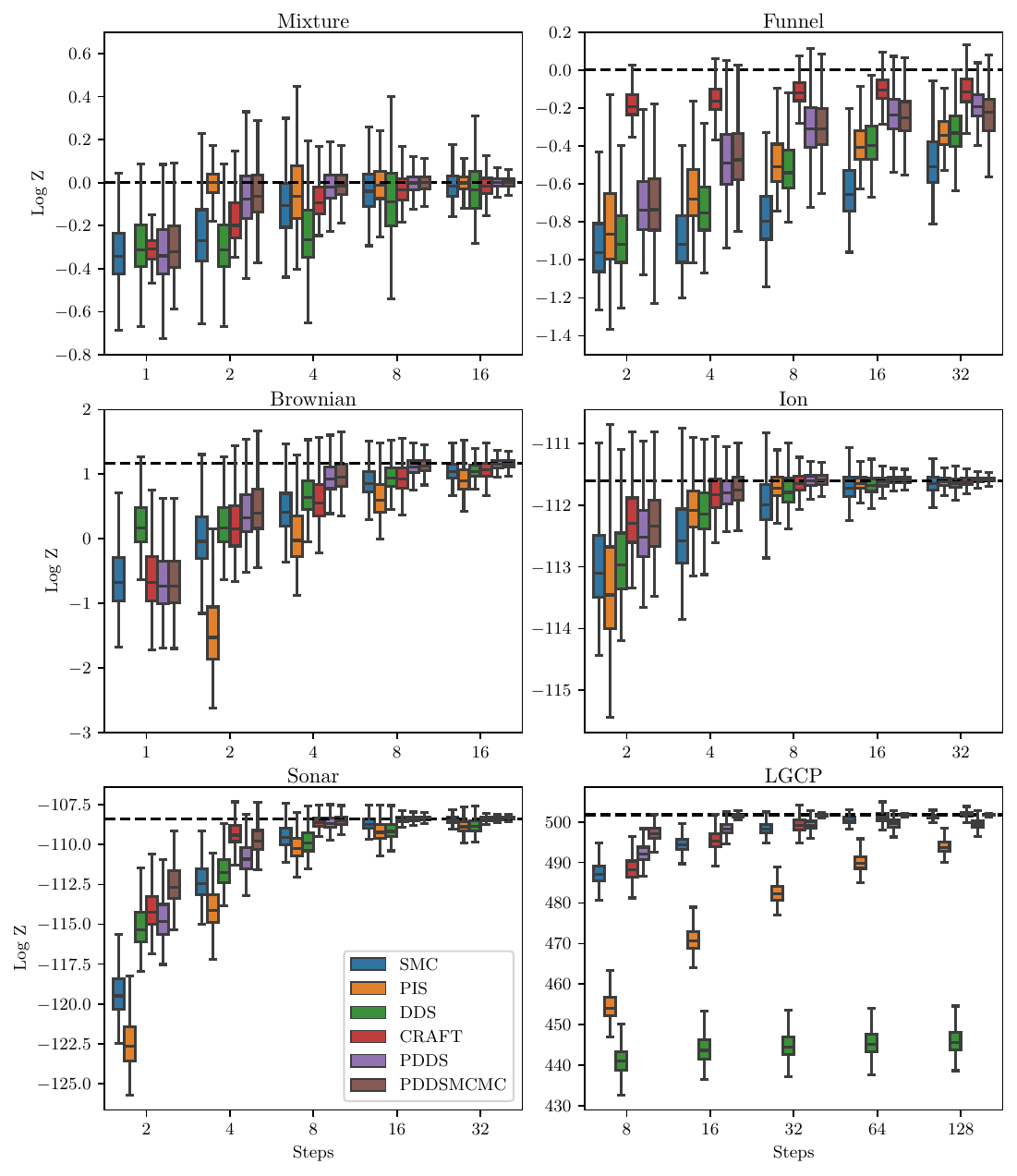}
	\caption{Normalizing constant estimation results on all tasks. Outliers are hidden for clarity. Each box consists of 2000 estimates, coming from 20 training seeds each with 100 evaluation seeds.}
	\label{fig:log_Z_all_tasks}
\end{figure}

In \cref{fig:gmm_all_tasks} we display the normalising constant estimation results and in \cref{fig:gmm_w2_all} the $\mathcal{W}_2^\gamma$ distances for the \verb|GMM| task in 1, 2, 5, 10, 20 dimensions.

\begin{figure}
	\centering
	\includegraphics[width=\textwidth]{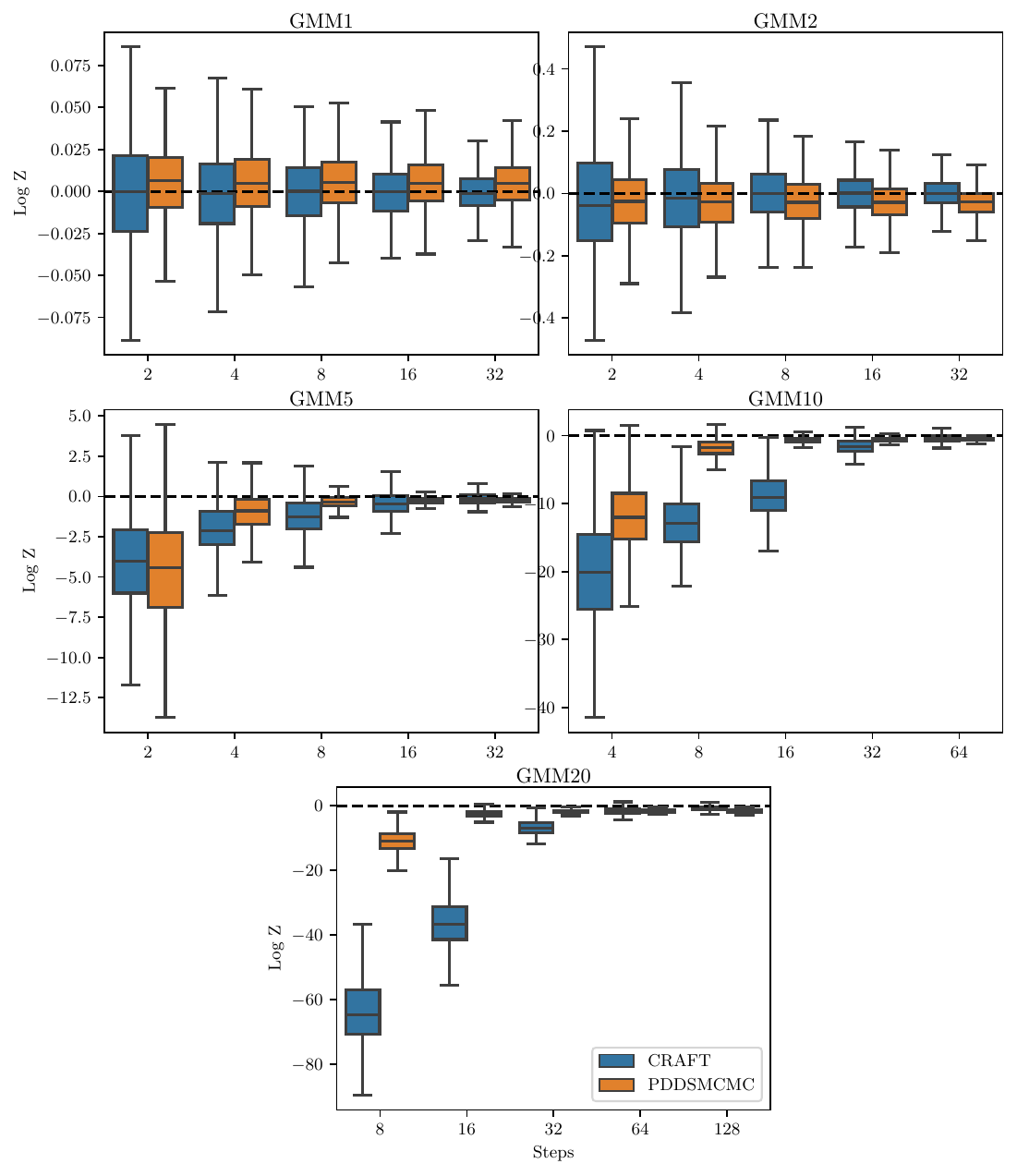}
	\caption{Normalizing constant estimation results on the GMM task in 1, 2, 5, 10 and 20 dimensions. Outliers are hidden for clarity. Each box consists of 1000 estimates coming from 10 training seeds and 100 evaluation seeds.}
	\label{fig:gmm_all_tasks}
\end{figure}

\begin{figure}
	\centering
	\includegraphics[width=\textwidth]{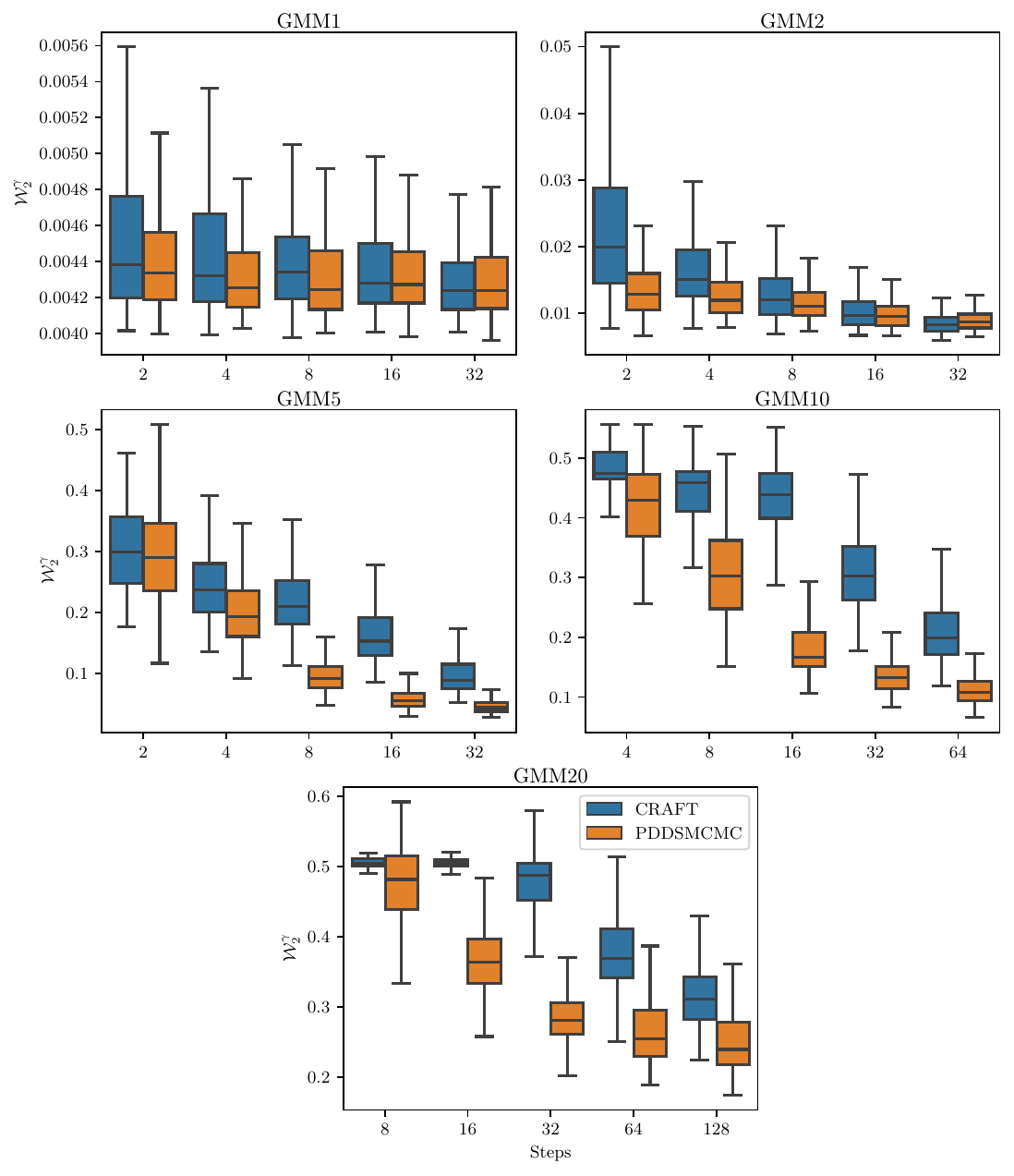}
	\caption{Entropy-regularized Wasserstein-2 distances between samples from the model and target distributions for CRAFT and PDDS-MCMC. Lower is better. Each box consists of 200 values coming from 10 training seeds and 20 evaluation seeds.}
	\label{fig:gmm_w2_all}
\end{figure}

\subsection{Uncurated normalizing constant estimates}
In \cref{fig:log_Z_uncurated} we display the normalising constant estimation results of \cref{fig:log_Z_all_tasks} with outliers present. Note that all methods are susceptible to erroneous over-estimation of the normalising constant due to numerical errors an instability. It appears that all methods are equally susceptible to this issue, with no single method displaying more erroneous overestimation than the others. Results on the Gaussian task are displayed separately in \cref{fig:log_Z_gaussian}.

\begin{figure}
	\centering
	\includegraphics[width=\textwidth]{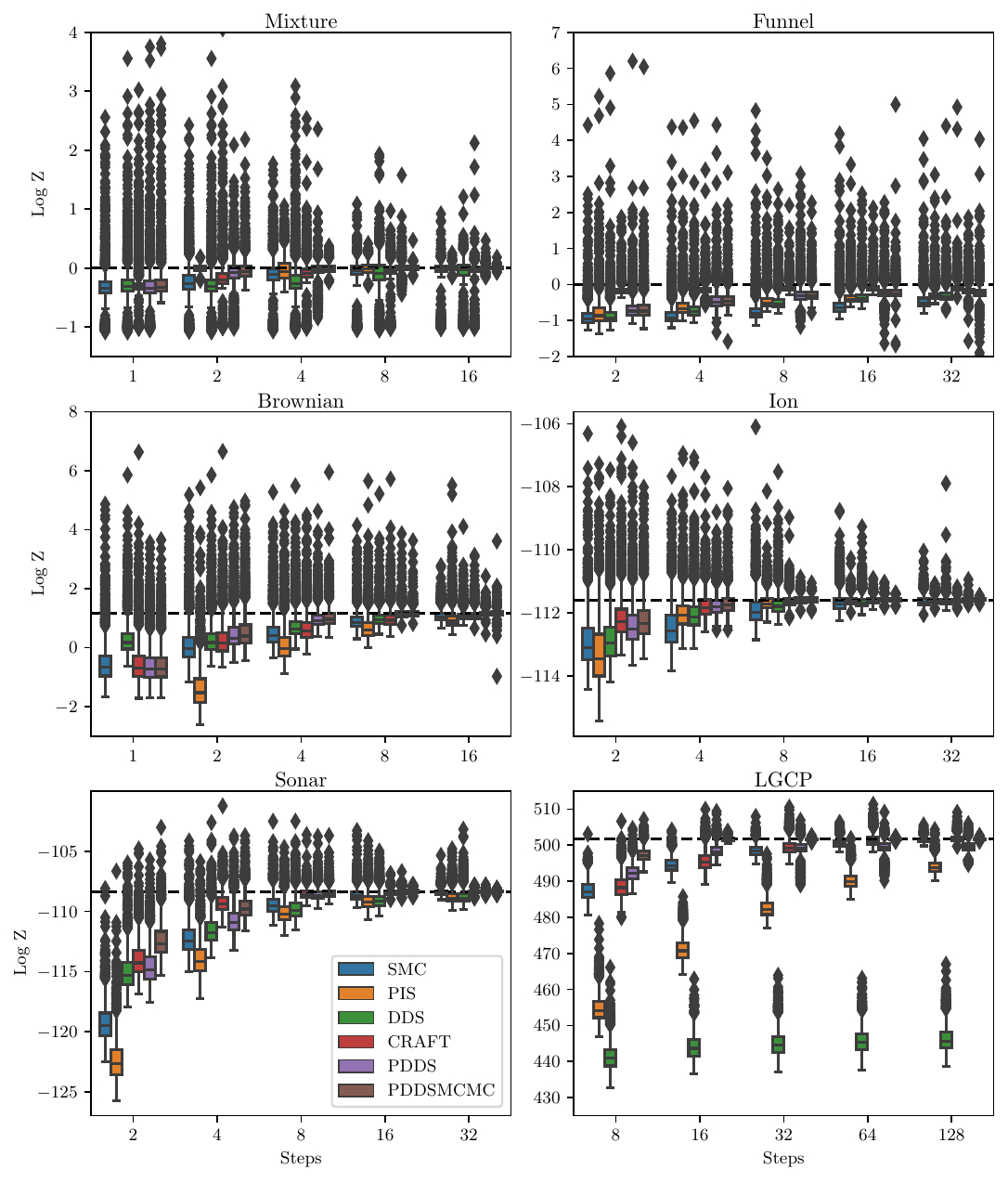}
	\caption{Normalizing constant estimation results on all tasks. Each box consists of 2000 estimates, coming from 20 training seeds each with 100 evaluation seeds.}
	\label{fig:log_Z_uncurated}
\end{figure}

\begin{figure}
	\centering
	\includegraphics[width=\textwidth]{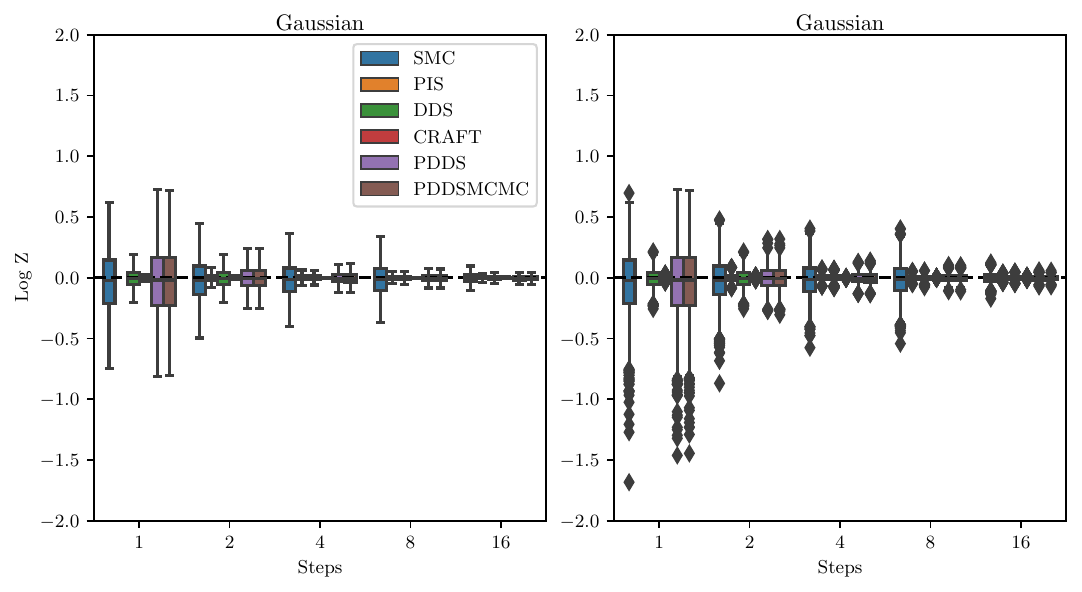}
	\caption{Normalizing constant estimation results on the Gaussian task. Each box consists of 2000 estimates, coming from 20 training seeds each with 100 evaluation seeds. Left: outliers removed, right: uncurated.}
	\label{fig:log_Z_gaussian}
\end{figure}

\end{document}